\newcommand{\layout}{internal_review} 
\newlist{pfparts}{description}{1}
\setlist[pfparts,1]{%
  font=\normalfont\textsf,
  itemindent=2pt,
  wide,
  itemsep=0pt,topsep=2pt,
  labelsep=0.75ex
}
\def\namedlabel#1#2{\begingroup
    #2%
    \def\@currentlabel{#2}%
    \phantomsection\label{#1}\endgroup
}
\newtheorem{theorem}{Theorem}
\newtheorem{lemma}{Lemma}
\newtheorem{corollary}{Corollary}
\newtheorem{proposition}{Proposition}
\theoremstyle{definition}
\newtheorem{definition}{Definition}
\DeclareMathSymbol{\mhyphen}{\mathord}{AMSa}{"39}
\newtheorem{informaldef}{Informal Definition}
\newtheoremstyle{iremark}
  {\topsep}   % ABOVESPACE
  {\topsep}   % BELOWSPACE
  {\upshape}  % BODYFONT
  {0pt}       % INDENT (empty value is the same as 0pt)
  {\itshape}  % HEADFONT
  {.}         % HEADPUNCT
  {5pt plus 1pt minus 1pt} % HEADSPACE
  {\thmname{#1}\thmnumber{ \itshape#2}\thmnote{ (#3)}} % CUSTOM-HEAD-SPEC
\theoremstyle{iremark}
\title{Lost in Transmission: \\When and Why LLMs Fail to Reason Globally}
    \author[1]{Tobias Schnabel\thanks{Equal contribution. \texttt{\{toschnab,kitomlinson\}@microsoft.com}}}
    \author[1]{Kiran Tomlinson$^*$}
    \author[2]{Adith Swaminathan}
    \author[1]{Jennifer Neville}
    \affil[1]{Microsoft Research}
    \affil[2]{Netflix}
\author{%
      Tobias Schnabel\thanks{Equal contribution.} \\
      Microsoft Research\\
      \And
      Kiran Tomlinson\footnotemark[1] \\
      Microsoft Research \\
      \And
      Adith Swaminathan \\
      Netflix \\
      \And
      Jennifer Neville \\
      Microsoft Research \\
    }
\date{}
\begin{document}

\maketitle
\begin{abstract}
Despite their many successes, transformer-based large language models (LLMs) continue to struggle with tasks that require complex reasoning over large parts of their input. We argue that these failures arise due to capacity limits on the accurate flow of information within LLMs. To formalize this issue, we introduce the bounded attention prefix oracle (BAPO) model, a new computational framework that models bandwidth constraints on attention heads, the mechanism for internal communication in LLMs. We show that several important reasoning problems like graph reachability require high communication bandwidth for BAPOs to solve; we call these problems BAPO-hard. Our experiments corroborate our theoretical predictions: GPT-4o, Claude, and Gemini succeed on BAPO-easy tasks and fail even on relatively small BAPO-hard tasks. BAPOs also reveal another benefit of chain of thought (CoT): we prove that breaking down a task using CoT can turn any BAPO-hard problem into a BAPO-easy one. Our results offer principled explanations for key LLM failures and suggest directions for architectures and inference methods that mitigate bandwidth limits.
\end{abstract}

\section{Introduction}
Despite the empirical successes of transformer-based large language models (LLMs), they exhibit persistent failures on \emph{global problems} that require integrating information across the entire input, such as chaining syllogisms~\cite{abbefar},  function composition~\cite{dziri2024faith}, and formal language recognition~\cite{bhattamishra2020ability}. 
Our core hypothesis is that these failures arise due to an inability of LLMs to accurately communicate information across \emph{residual streams}~\cite{elhage2021mathematical}, the sequences of transformer blocks corresponding to each input token.
For an LLM to solve a problem, information about early tokens must be transmitted through attention into the last token's residual stream.
While some problems, such as needle-in-a-haystack tasks, do not require much information to cross residual streams, we show that global problems with complex inter-token dependencies require substantial communication. We conjecture that LLMs fail on such problems due to a limit on the amount of information they can accurately transmit between residual streams, which we refer to as their \emph{effective bandwidth}. 
This is supported by prior work on capacity bounds on attention~\cite{edelman2022inductive} and sparse attention~\cite{child2019generating,beltagy2020longformer,roy2021efficient,sukhbaatar2019adaptive}.  
Causal attention exacerbates communication issues by forcing pre-processing in early streams to be independent of later tokens, increasing the representation size of problems.
\Cref{fig:bapo-problems-preview} illustrates the issue of information flow and previews our empirical results. 

\begin{figure}[t]
    \centering
    \includegraphics[width=\linewidth]{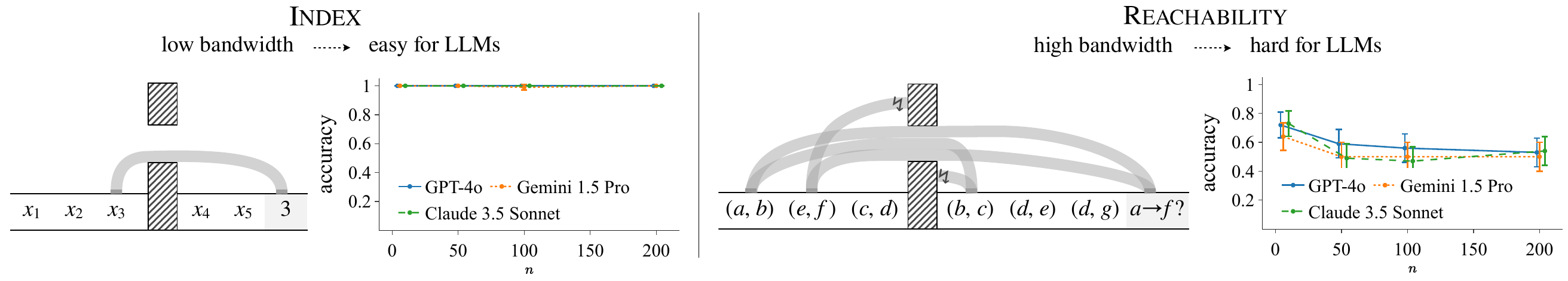}
    \caption{We conjecture that LLMs have a limit on their effective bandwidth, which we illustrate above by constrained information flow across one particular prefix--suffix split of the input. The BAPO model quantifies the communication bandwidth needed for transformers with causal attention to solve a problem. \textsc{Index} requires low communication bandwidth and LLMs solve it without issue; \textsc{Reachability} requires high bandwidth and LLMs struggle with it. }
    \label{fig:bapo-problems-preview}
\end{figure}

To formally analyze these issues, we propose the \emph{bounded attention prefix oracle} (BAPO) model---a new computational model designed to capture the communication constraints and causal attention of transformer-based LLMs. BAPOs capture how much information must be communicated between residual streams to solve a problem, abstracting away other details of the transformer architecture. When producing an output, there are two ways in which transformers can integrate information about the last token with information from earlier tokens (i.e., from a prefix of the input): they can attend to precomputed values from higher layers of prefix streams, or they can attend to the raw prefix token values (or both; see  \Cref{fig:computegraph}). The BAPO model captures these two types of information flow and quantifies limits on the amount of information transmitted.
In contrast with existing theoretical work on the expressivity of the transformer model class~\cite{strobl2024formal,merrill2023parallelism,GuhaoCoT2023,sanford2024transformers,sanford2023representational,bhattamishra2024separations}, we seek to characterize the (in)ability of LLMs to solve global reasoning problems in practice. Strikingly, our experiments suggest that the effective bandwidth of modern LLMs is a small constant, so we call problems \emph{BAPO-hard} if they cannot be solved by a constant-bandwidth BAPO and \emph{BAPO-easy} if they can.

\paragraph{Theoretical contributions.}  
We begin by highlighting the power of attention, which makes several hard communication problems (\textsc{Equality}, \textsc{Disjointness}, and \textsc{Index}) BAPO-easy.
% We also show that regular language recognition is BAPO-easy, but the constant bandwidth required may scale with the language's state complexity. 
In contrast, we show that a variety of important global problems are BAPO-hard, thus posing a challenge to LLMs with imprecise internal communication. In particular, we prove BAPO-hardness for \textsc{Reachability}, \textsc{Majority}, \textsc{Match3}$_n$~\cite{sanford2023representational}, \textsc{Unique}, and \textsc{SetDiff}, with lower bounds on the communication bandwidth required for each problem. On the positive side, we show that chain of thought (CoT) allows us to break down any BAPO-hard problem into a sequence of BAPO-easy steps, suggesting one mechanism for the success of CoT~\cite{merrill2024expressive,GuhaoCoT2023,pfau2024let}. Specifically, we show that CoT renders constant-bandwidth BAPOs Turing-complete. This enables them to solve any decidable problem given a enough output tokens (although this number might be impractically large).

\paragraph{Empirical contributions.} 
Our experimental results confirm the predictive power of the BAPO model: GPT-4o, Claude, and Gemini systematically fail to solve even relatively small instances of BAPO-hard problems, while performing well on BAPO-easy tasks across instance size (as previewed in \Cref{fig:bapo-problems-preview}).
We also demonstrate how real-world LLM tasks such as aggregating reviews and variable-tracking in code contain BAPO-hard components, namely \textsc{Majority} and \textsc{Reachability}, and thus present a challenge for LLMs. This illustrates the significance of our theoretical work in practice. Supporting our CoT result, the reasoning models o3 and Gemini 2.5 Flash perform very well even on BAPO-hard problems, albeit with a very large number of reasoning tokens.  
Our code is available at~\url{https://github.com/microsoft/bapo}.

\paragraph{Implications.}  
Identifying BAPO-hard and BAPO-easy sub-tasks enables practitioners to anticipate LLM limitations and proactively employ mitigation techniques like inference-time scaling, hybrid architectures, or tool-calling. 
A key feature of the BAPO model is that it abstracts away the low-level details of transformers and instead focuses on how much information must flow to solve a problem, yielding a characterization that can be applied more broadly and intuitively. 
Our work also shows how chain-of-thought reasoning can alleviate the communication needs of problems by breaking a problem down into steps requiring only a small amount of information flow, suggesting low bandwidth requirement as an additional objective in learning from reasoning chains. 
Ultimately, BAPOs offer an explanatory foundation for observed LLM failures on global reasoning problems and can unlock principled innovations to overcome these limitations.

\section{A communication model of LLMs}\label{sec:bapo}

The goal of our model is to represent the information processing flow within transformers while abstracting away lower level details. At a high level, recall that a transformer with causal attention makes next-token predictions at every token position in parallel. Additionally, no residual stream knows how far from the end of the input it is or what tokens come later in the input, due to causal attention. As such, if a transformer needs to solve a problem and we split its input into a prefix and suffix, any pre-processing done in the prefix streams should be useful no matter the suffix tokens. Moreover, information about the prefix must be communicated to the suffix streams, as they need to make next-token predictions that may depend on prefix tokens. This communication must occur across \emph{every} possible prefix--suffix split, but for simplicity, we will model and analyze an arbitrary (usually worst case) split.

% To capture this intuition, we will need to reason over arbitrary prefix-suffix splits and analyze model how information is transmitted between an arbitrary prefix--sufix split.% for some particular split of the input into a prefix and suffix---such communication is in fact occurring between \emph{every} such split, but for simplicity, our we will end up analyzing an arbitrary (usually worst case) prefix--suffix split.

To make this more concrete, \Cref{fig:computegraph} visualizes the computation of the next token $x_{n+1}$ inside a transformer, with nodes contributing to the prediction of $x_{n+1}$ highlighted (other attention weights are 0). A particular prefix--suffix split is shown with a dashed gray line, dividing the input tokens and their residual streams (i.e., the columns above each token). The output stream in the suffix must have all relevant information about the prefix tokens to solve a problem that depends on the whole input. That is, all needed information about the prefix tokens must cross the dashed gray line. This information can either come from (i) attending to input tokens directly (arrows crossing the split into $s_{n1}$) or (ii) attending to intermediate outputs from the prefix streams (arrows crossing the split into $s_{n2}$ and $s_{n3}$). As we have emphasized, any intermediate outputs from prefix layers must be usable for all possible suffixes, since prefix streams cannot depend on future tokens due to causal attention. 

Our central hypothesis is that LLMs have a limited ability to exactly communicate a large number of tokens or a large intermediate result across residual streams, thus causing failures on problems that require high information flow. We call this the \emph{effective bandwidth} of an LLM. This hypothesis is informed by prior work that has derived capacity limits on attention heads~\cite{edelman2022inductive}, shown that problems requiring reasoning over many tokens pose a challenge~\cite{abbefar}, and proved that individual token impacts on attention tend to zero as the input length grows~\cite{hahn2020limitations}.  

\begin{figure}[tbp]
    \centering
    \begin{subfigure}[t]{0.48\linewidth}
    \centering
\begin{tikzpicture}[
    > = Stealth, semithick,
    latent/.style = {dotted},
    ec/.style = {rectangle, draw=none, fill=none, minimum width=0.3cm},
    tokens/.style = {rectangle, draw},
    net/.style = {
        matrix of nodes,
        nodes={draw, rounded corners=4pt, minimum width=0.5cm, minimum height=0.6cm, anchor=center},
        nodes in empty cells,
        column sep=0.17cm, 
        row sep=0.4cm,
        column 6/.style={nodes={draw=none, minimum size=-3mm, text width=0mm}},
        font=\small,
        xshift=0mm
    },
    tokenrow/.style = {
        matrix of nodes,
        nodes={draw, minimum width=0.75cm, minimum height=0.5cm, anchor=center},
        nodes in empty cells,
        column sep=-\pgflinewidth, row sep=-\pgflinewidth,
    },
]

% First define the token row
\matrix[tokenrow] (tokens) {
    $x_1$ & $x_2$ & $x_3$ & $\ldots$ & $x_k$ & |[ec]| & $x_{k+1}$ & $\ldots$ & $x_n$\\
};

% Now define the upper matrix with explicit x-coordinates matching the tokens
\matrix[net] (m) at (tokens.north) [yshift=4mm, anchor=south] {
    |[latent]| & |[latent]| & |[latent]| & |[latent]| & |[latent]| && |[latent]| & |[latent]| & $s_{n3}$ \\
    |[latent]| & |[latent]| & |[latent]| & |[latent]| & $s_{k2}$ && |[latent]| & |[latent]| & $s_{n2}$ \\
    |[latent]| & $s_{21}$ & |[latent]| & |[latent]| & |[latent]| && |[latent]| & |[latent]| & $s_{n1}$\\
};

\draw[line width=1mm, dashed, lightgray] ([xshift=-12pt]m-1-7.north west) -- ([yshift=-12mm, xshift=-12pt]m-3-7.south west);

% Connections
\foreach \j in {1, 2, 3, 4, 5, 7, 8} %tokens to first state
{
    \draw[dotted,->] (tokens-1-\j.north) -- (m-3-\j.south);
}

\foreach \j in {2, 4, 5, 9} %tokens to first state
{
    \draw[->] (tokens-1-\j.north) -- (m-3-\j.south);
}

\draw[->] (m-2-5.north) -- (m-1-9.south);
\draw[->] (m-3-4.north) -- (m-2-5.south);
\draw[->] (m-3-5.north) -- (m-2-5.south);

\foreach \j in {2}  % state layer 1 to layer 2
{
    \draw[->] (m-3-\j.north) -- (m-2-9.south);
}

\draw[->] (m-3-9.north) -- (m-2-9.south);
\draw[->] (m-2-9.north) -- (m-1-9.south);

\foreach \j in {3, 5, 7} %tokens to first state
{
    \draw[->] (tokens-1-\j.north) -- (m-3-9.south);
}

\foreach \j in {1, 2, 3, 4, 5, 7, 8}
{
    \draw[dotted,<-] (m-1-\j) -- (m-2-\j);
}

\foreach \j in {1, 3, 5, 7, 8, 9}
{
    \draw[dotted,<-] (m-2-\j) -- (m-3-\j);
}

\draw[dotted,->] (m-2-7.north) -- (m-1-8.south);

% Adjusted upwards connections
\draw[dotted, ->] (m-3-1.north) -- (m-2-2.south);
\draw[dotted, ->] (m-3-1.north) -- (m-2-1.south);
\draw[dotted, ->] (m-3-3.north) -- (m-2-3.south);
\draw[dotted, ->] (m-3-4.north) -- (m-2-4.south);
\draw[dotted, ->] (m-3-4.north) -- (m-2-5.south);
\draw[dotted, ->] (m-3-2.north) -- (m-2-2.south);
\draw[dotted, ->] (m-3-5.north) -- (m-2-5.south);
\end{tikzpicture}

    \caption{A three-layer transformer with input tokens $x_1, \dots, x_n$.
    % \jn{Should the last index of the input tokens be $n$ instead of $k$ here?} Yes, good catch :)
    Node $s_{ij}$ is the transformer block at position $i$ and layer $j$. Information from the prefix needed for the solution output at $s_{n3}$ is gathered by attending to either (i) past tokens (here, $x_3$ and $x_k$) or (ii) intermediate outputs (here, $s_{21}$ and $s_{k2}$). The attention to prefix tokens is captured by the attention $g$ of a BAPO, while the prefix oracle $f$ of a BAPO models all other transmitted information from the prefix.}
    \label{fig:computegraph}
\end{subfigure}
\hfill
\begin{subfigure}[t]{0.48\linewidth}
    \centering
    \begin{tikzpicture}[
    > = Stealth, semithick,
    latent/.style = {dotted},
    ec/.style = {rectangle, draw=none, fill=none, minimum width=0.3cm},
    tokens/.style = {rectangle, draw},
    net/.style = {
        matrix of nodes,
        nodes={draw, rounded corners=4pt, minimum width=0.5cm, minimum height=0.6cm, anchor=center},
        nodes in empty cells,
        column sep=0.17cm, 
        row sep=0.4cm,
        column 6/.style={nodes={draw=none, minimum size=-3mm, text width=0mm}},
        font=\small,
        xshift=0mm
    },
    tokenrow/.style = {
        matrix of nodes,
        nodes={draw, minimum width=0.75cm, minimum height=0.5cm, anchor=center},
        nodes in empty cells,
        column sep=-\pgflinewidth, row sep=-\pgflinewidth,
    },
    ]
      % Define the matrix of tokens.
      \matrix[tokenrow] (tokenTable) {
        $x_1$ & $x_2$ & $x_3$ & $\ldots$ & $x_k$ &|[ec]| & $x_{k+1}$ & $\ldots$ & $x_n$  \\
      };
    
      % Place nodes for the functions f, g, and p.
      \draw[line width=1mm, dashed, lightgray]  ([yshift=0mm]tokenTable-1-6.south) -- ([yshift=3.8cm]tokenTable-1-6.south);

      \node[rounded rectangle, minimum height=16pt] (gnode) at ([yshift=2.2em,xshift=8pt]tokenTable-1-7.north east) {attention $g$};
      \node[draw, rounded corners=4pt, minimum height=16pt] (hnode) at (2.2,3.3) {suffix oracle $h$};
      % Curly overbrackets
      \draw[decorate, decoration={brace, amplitude=5pt, raise=3pt}]
          ([xshift=1pt]tokenTable-1-1.north west) -- ([xshift=-2pt]tokenTable-1-5.north east);
    
      \draw[decorate, decoration={brace, amplitude=5pt, raise=3pt}]
          ([xshift=2pt]tokenTable-1-7.north west) -- ([xshift=-1pt]tokenTable-1-9.north east);
    
      % Attention arrows
      \draw[-Triangle, dashed, gray!60]  ([yshift=1pt]tokenTable-1-1.north) -- (gnode.west);
      \draw[-Triangle, dashed, gray!60] ([yshift=1pt]tokenTable-1-2.north) -- (gnode.west);
      \draw[-Triangle, solid, gray]  ([yshift=1pt]tokenTable-1-3.north) -- (gnode.west);
      \draw[-Triangle, dashed, gray!60]  ([yshift=1pt]tokenTable-1-4.north) -- (gnode.west);
      \draw[-Triangle, solid, gray]  ([yshift=1pt]tokenTable-1-5.north) -- (gnode.west);

        \node[rounded rectangle, minimum height=16pt] (fnode) at ([yshift=2.7em]tokenTable-1-3.north) {prefix oracle $f$};
            
      % Arrow from midpoint of brackets to f and g
      \draw[-Triangle] 
           ([yshift=2.8mm,xshift=-0.8mm]tokenTable-1-8.north) .. controls([xshift=10pt,yshift=-3mm]gnode.east) .. ([xshift=8mm]hnode.south);
      \draw[-Triangle] 
          ([yshift=2.8mm,xshift=-0.9mm]tokenTable-1-8.north) -- ([yshift=6mm,xshift=-0.9mm]tokenTable-1-8.north);
      % \draw[-Triangle] (knode.east) -- (hnode.west);
      \draw[-Triangle] 
          ([yshift=2.8mm,xshift=-0.2mm]tokenTable-1-3.north) -- ([yshift=8mm,xshift=-0.2mm]tokenTable-1-3.north);
          
      % Bandwidth arrows
      \draw (fnode.north) edge[in=240, out=60, -Triangle] node[draw, fill=white, chamfered rectangle, chamfered rectangle xsep=2cm, pos=0.38, anchor=center, inner sep=1pt, rotate=17,yshift=-0.8mm] {\tiny bandwidth $a$} ([xshift=-8mm]hnode.south);
      \draw ([yshift=-2pt]gnode.north) edge[-Triangle]  node[draw, fill=white, chamfered rectangle, chamfered rectangle xsep=2cm, pos=0.45, anchor=center, inner sep=1pt, sloped, yshift=-0pt] {\tiny bandwidth $b$} ([xshift=-2pt]hnode.south);
      
    \end{tikzpicture}
    \caption{An $(a, b)$-BAPO computes its result from the output of the prefix oracle $f$ (limited to $a$ bits), attention function $g$ (limited to $b$ tokens), and suffix tokens $k+1, \ldots, n$. The attention function can choose which tokens to attend to as a function of the suffix, but the decision to attend to each prefix token is independent from other prefix tokens. An arbitrary subset $G$ of size $b$ is received by $h$ if $g$ attends to more than $b$ tokens. Every component has access to token indices.}
    \label{fig:bapo}
\end{subfigure}
\caption{A simplified view of a transformer and our bounded attention prefix oracle (BAPO) model.}
\end{figure}

\paragraph{BAPOs.} Our model isolates the issue of limited effective bandwidth as it plays a role in LLMs with causal attention. For simplicity, we consider problems with single-token solutions, which includes all decision problems. More formally, we consider tasks where an LLM is prompted with a fixed problem description $\mathcal{P} \in \Sigma^*$ concatenated with an input $x_1\dots x_n \in \Sigma^*$, where $\Sigma$ is the token vocabulary. The goal is to produce a solution $y \in \Sigma$, which we represent with the function $p:\Sigma^* \rightarrow \Sigma$ with $p(x_1\dots x_n) = y$.
 We begin with an intuitive overview of the model.
\begin{informaldef}
A \emph{bounded attention prefix oracle} (BAPO; see \Cref{fig:bapo}) must solve a problem given an input split arbitrarily into a prefix and a suffix. A BAPO computes the solution given the suffix, $a$ bits output by a \emph{prefix oracle} $f$ that accesses only the prefix, and $b$ prefix tokens selected individually by a binary attention function $g$, with full positional information. 
The prefix oracle $f$ models the intermediate processing in prefix residual streams, which has no access to the suffix due to causal attention. 
The limits on the output size of $f$ and on the number of tokens $g$ may attend to are the key bandwidth constraints of the model, capturing limited attention head capacity.   
\end{informaldef}

Given this intuition, we provide the formal definition of BAPOs (using $\mathbb N = \mathbb Z_{>0}$), which makes explicit how BAPOs account for positional encodings and what happens if $g$ tries to attend to too many tokens (intuitively, the BAPO must work given any set of $b$ tokens to which $g$ attends).

\begin{definition}\label{def:bapo}
An $(a, b)$-BAPO (\emph{bounded attention prefix oracle}) is defined by a \emph{prefix oracle} $f:\Sigma^* \rightarrow \{0, 1\}^a$, an \emph{attention function} $g: \Sigma^* \times \mathbb N  \times \Sigma \times \mathbb N\rightarrow \{0, 1\}$, and a \emph{suffix oracle} $h: \{0, 1\}^a \times \cup_{i = 0}^b (\Sigma \times \mathbb N)^i \times \Sigma^* \times \mathbb N \rightarrow \Sigma$. An $(a, b)$-BAPO \emph{solves} a computational problem $p: \Sigma^* \rightarrow \Sigma$ if $h(f(x_1 \dots x_k), G, x_{k+1}\dots x_n, k) = p(x_1\dots x_n)$ for all $k < n$ and all $G \subseteq \mathbb G = \{(x_i, i): 1 \le i \le k,\,g(x_{k+1}\dots x_n,k, x_i, i) = 1\}$ with $|G| = \min\{b, |\mathbb G|\}$. 
\end{definition}
We call $a$ the \emph{prefix bandwidth} (measured in bits) and $b$ the \emph{attention bandwidth} (measured in tokens) of the BAPO. We call a problem \emph{BAPO-easy} if it can be solved by a BAPO with constant bandwidths (w.r.t.\ $n$) and \emph{BAPO-hard} otherwise. If a problem requires bandwidths that scale with $|\Sigma|$, we say it is \emph{BAPO-$\Sigma$-hard}.
Note that any problem with $n$-token inputs can be solved by a $(n\lceil \log_2 |\Sigma| \rceil , 0)$-BAPO or by a $(0, n)$-BAPO as the prefix oracle can forward the entire prefix or the attention function can attend to the entire prefix; we call this the trivial upper bound on BAPO complexity. Lastly, given a language $L \subseteq \Sigma^*$, we say that a BAPO \emph{recognizes} $L$ if it solves $p(x) = 1[x \in L]$.

\paragraph{Assumptions.} We briefly discuss some trade-offs in the underlying assumptions for BAPOs.
On the generous side, we assume that the prefix streams and suffix streams have unbounded computational power.\footnote{Requiring $f$, $g$, and $h$ to be computable rather than arbitrary has no effect on our results.}
However, this fact is tempered by the fact that a BAPO must work for all possible prefix-suffix splits.
Regarding the attention function, our model is pairwise in the sense that $g$ can only look at a single prefix token $x_i$ at a time, as in real transformers. BAPOs differ though in that $g$ can base its attention decisions on the entire suffix. However, transformers can compensate for this by communicating information between the suffix streams across multiple layers.
Our model also assumes perfect positional encoding, whereas this is a point of failure in real transformers~\cite{ebrahimi2024your}.
Finally, the attention $g$ can only operate on the token layer, whereas in transformers, attention can also attend to outputs of subsequent layers. However, this is offset by the ability of the suffix oracle in our model to perform arbitrary computation on the attended tokens. \Cref{app:variants} explores some variations of the modeling assumptions and how they affect the expressive power of the model.

\section{BAPO theory}\label{sec:bapo-theory}
We prove that a variety of important global reasoning problems like graph reachability are BAPO-hard and therefore pose a challenge for LLMs under our effective bandwidth hypothesis. \Cref{tab:bapo-results} summarizes our hardness and tractability results. 
Strikingly, we also prove that chain of thought enables constant-bandwidth BAPOs to solve all decidable problems, suggesting a reduction in required bandwidth as another mechanism for the empirical success of chain of thought.

\begin{table}[t]
    \centering
        \caption{Overview of our BAPO upper and lower bounds in terms of (prefix bandwidth, attention bandwidth). $n$: input length, $m$: number of edges, $c$: any integer $\ge 3$, $\epsilon$: arbitrary constant in $(0, 1)$, $b(n)$: any $o(n)$ function, $|Q|$: state complexity of the language. Trivial upper bounds: $(n \lceil \log_2 |\Sigma|\rceil, 0) \text{ or } (0, n)$. Adding chain of thought (CoT) brings the upper bound down to $(2, 3)$ for all decidable problems, but may require a large number of CoT steps.}
    \label{tab:bapo-results}
\hspace{-4mm}$\begin{NiceArray}{rlrr}
    \cmidrule[\heavyrulewidth]{2-4}
    & \textbf{Problem} & \textbf{Lower bound} & \textbf{Upper bound} \\
    \cmidrule{2-4}
    \multirow{5}{*}{\raisebox{-0.3em}{\shortstack{BAPO-\\easy}} $\left\{\begin{array}{@{}c@{}}  \\ \\ \\ \\ \\ \end{array}\right.$} 
    & \textsc{Index}\text{ (Thm. \ref{thm:one-token})}         &  & (0, 1)   \\
    & \textsc{Equality} \text{ (Thm. \ref{thm:one-token})}     &  & (1, 1)   \\
    & \textsc{Disjointness}\text{ (Thm. \ref{thm:one-token})}  &  & (1, 1)   \\
    & \textsc{Match2}_n\text{ (Thm. \ref{thm:match3})}      &  & (0, 1)   \\
    & \text{regular languages} \text{ (Thm. \ref{thm:regular-easy})}      &  & (\lceil \log_2 |Q| \rceil, 0)   \\

    \cmidrule{2-4}
    \multirow{3}{*}{\raisebox{-0.5em}{\shortstack{BAPO-\\hard}} $\left\{\begin{array}{@{}c@{}} \\ \\ \\ \end{array}\right.$} 
    & \textsc{Reachability}\text{ (Thm. \ref{thm:reachability-lb})}  & (o(m^{1/c} \log m), o(m^{1-2/c})) & \text{trivial} \\
    & \textsc{Majority}\text{ (Thm. \ref{thm:majority})}      & (o(\log n), o(n^{1-\epsilon})) & (\lceil \log_2 n \rceil, 0) \\
    & \textsc{Match3}_n\text{ (Thm. \ref{thm:match3})}      & (o(n / b(n)), b(n)) & \text{trivial} \\  \cdashedline{2-4}
        \multirow{2}{*}{\raisebox{-0.5em}{\shortstack{BAPO-\\$\Sigma$-hard}} \raisebox{0.1em}{$\left\{\begin{array}{@{}c@{}} \rule{0pt}{1.8em} \end{array}\right.$}} 
    \\[-9pt]
    & \textsc{Unique}\text{ (Thm. \ref{thm:unique})}        & (o(|\Sigma| / b(|\Sigma|)), b(|\Sigma|)) & (2|\Sigma|, 0) \\
    & \textsc{SetDiff}\text{ (Thm. \ref{thm:set-diff})}       & (o(|\Sigma| / b(|\Sigma|)), b(|\Sigma|)) & (|\Sigma|, 0) \\
    \cmidrule[\heavyrulewidth]{2-4}
\end{NiceArray}$
\end{table}

\subsection{BAPO-easy problems}\label{sec:communication}

Before turning to problems that BAPOs fail to solve, we first show the kinds of problems that are BAPO-easy, requiring only constant bandwidth. As we will see, these all share the key property that they can be computed from a small summary or local portion of the input. 

\paragraph{One attention token is all you need to solve hard communication problems.}
Our first BAPO-easy problems establish a separation from the standard one-way communication model~\cite{roughgarden2016communication} upon which BAPOs are based.
BAPOs are at least as powerful: any problem solvable with $a(n)$ bits of one-way communication on $n$-bit inputs is also solvable by a $(a(n), 0)$-BAPO by having the prefix oracle implement the communication protocol. However, adding even just a little attention makes BAPOs strictly more powerful than pure one-way communication. Strikingly, even $(1, 1)$-BAPOs can solve \textsc{Disjointness}, \textsc{Equality}, and \textsc{Index}, which are maximally hard problems for one-way communication requiring $n$ bits of communication~\cite{roughgarden2016communication} (see~\Cref{app:problems} for full definitions).

Low BAPO complexity suggests that LLMs should be able to solve these problems well, which is corroborated by our empirical results in~\Cref{sec:experiments}. These problems are also known to be efficiently expressible by transformers~\cite{bhattamishra2024separations}. Note also that \textsc{Index} is conceptually very similar to the common needle-in-a-haystack benchmark tasks.

\begin{theorem}\label{thm:one-token}
    \textsc{Disjointness} and \textsc{Equality} have $(1, 1)$-BAPOs and \textsc{Index} has a $(0, 1)$-BAPO.
\end{theorem}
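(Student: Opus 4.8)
The plan is to construct each BAPO explicitly, doing casework on where the prefix--suffix split $k$ falls relative to the natural partition of the input into ``Alice's'' and ``Bob's'' halves, and to exploit two features of the model: the suffix oracle $h$ and the attention function $g$ both see the \emph{entire} suffix, and a single attended prefix token is enough to serve as a ``witness.'' Throughout I recall the problem definitions from \Cref{app:problems}.

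\textbf{\textsc{Index}.} Here the input consists of bits $x_1\dots x_{n-1}$ followed by one token $x_n$ encoding an index $j\in[n-1]$, and we must output $x_j$. Since $k<n$, the token $x_n$ is always in the suffix, so both $g$ and $h$ can decode $j$. Let $g$ attend to the unique prefix position equal to $j$, i.e.\ $g(\cdot,k,x_i,i)=1$ iff $i=j$, and let $h$ return $x_j$: if $j\le k$ it reads $x_j$ off the single attended pair in $G$; if $j>k$ it reads $x_j$ directly from the suffix. No prefix bits are used, so this is a $(0,1)$-BAPO.

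\textbf{\textsc{Equality} and \textsc{Disjointness}.} Both take inputs $x_1\dots x_m$ and $y_1\dots y_m$ (so $n=2m$), and in each case the answer is a conjunction over $i\in[m]$ of a local predicate $\pi(x_i,y_i)$: ``$x_i=y_i$'' for \textsc{Equality}, ``not ($x_i=1$ and $y_i=1$)'' for \textsc{Disjointness}. Fix $k$ and classify each index $i\in[m]$ by where its pair lies, noting that $x_i$ sits at position $i$ and $y_i$ at position $m+i$: (i) \emph{both in the prefix}, which occurs only when $k>m$ and $i\le k-m$; (ii) \emph{$x_i$ in the prefix, $y_i$ in the suffix}, i.e.\ the remaining $i\le\min\{k,m\}$ (here $m+i>k$ exactly when $i>k-m$, so $y_i$ really is in the suffix for all such $i$); (iii) \emph{both in the suffix}; the fourth a priori combination, $x_i$ in the suffix but $y_i$ in the prefix, is impossible since $x_i$ precedes $y_i$. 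Now take $f$ to output the single bit ``$\pi$ fails on some type-(i) index'' (constantly $0$ if $k\le m$); let $g$ attend to exactly the type-(ii) prefix positions $i$ on which $\pi(x_i,y_i)$ fails --- $g$ can evaluate this since it sees $x_i$ and reads $y_i$ from the suffix, and it can tell from $k$ and $i$ which type $i$ has; and let $h$ output the conjunction of (a) $f$'s bit is $0$, (b) $G=\emptyset$, and (c) $\pi$ holds on every type-(iii) index, which $h$ checks from the suffix alone. Every flagged position is a genuine counterexample, so even though $g$ may flag more than one position and $h$ then receives an arbitrary singleton from among them, $h$ still correctly learns whether any type-(ii) predicate fails. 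This yields $(1,1)$-BAPOs for both problems.

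\textbf{Main obstacle.} The one subtle case is a split landing inside ``Bob's half'' ($k>m$ for \textsc{Equality}/\textsc{Disjointness}): there, type-(i) pairs have both coordinates in the prefix and so are invisible to $g$, which can only pull in one prefix token at a time and cannot compare a prefix token against another prefix token. This is precisely what the single prefix bit buys us --- $f$ collapses the whole type-(i) region to one bit, which suffices because the answer depends only on the \emph{existence} of a failing pair in that region. The remaining work is routine bookkeeping: checking the position arithmetic so that every type-(ii) coordinate $y_i$ genuinely lands in the suffix, and verifying the classes $\{(i),(ii),(iii)\}$ partition $[m]$ for every split $k$, so that $g$ flags exactly the type-(ii) counterexamples and $h$'s three-way conjunction is exactly the target conjunction over all of $[m]$.
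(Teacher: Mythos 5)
Your construction is correct and matches the paper's proof in all essentials: \textsc{Index} is handled by attending to the position named in the suffix, and for \textsc{Equality}/\textsc{Disjointness} your three-way partition (both-in-prefix handled by the single bit of $f$, split pairs handled by attending to counterexample positions, both-in-suffix checked directly by $h$) is exactly the paper's decomposition, just stated more uniformly via the local predicate $\pi$. The only difference is trivial bookkeeping around the separator token $|$ in the paper's encoding, which shifts the position of $y_i$ by one.
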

\begin{proof}[Proof sketch]
Access to token indices allows the attention function to attend to bits that would be counterexamples to \textsc{Equality} or \textsc{Disjointness}, while the bit output by the prefix oracle is needed when the prefix contains all of $x$ and some of $y$. \textsc{Index} is trivial thanks to the attention function, which can pick out the indexed token. See \Cref{app:proofs} for the full proof.
\end{proof}

\paragraph{Regular languages.}
We also show that recognizing any regular language is BAPO-easy, since any string's prefix can be summarized in constant space by a state in the minimal automaton for the language by the Myhill--Nerode theorem~\cite{nerode1958linear}. However, this BAPO construction requires prefix bandwidth scaling with the language's \emph{state complexity}, the minimal number of states in an automaton for the language. So, for any LLM with constant effective bandwidth, there may be infinitely many regular languages it cannot recognize. We thus think of the class of regular languages as ``fixed-parameter tractable'' for BAPOs.

\begin{theorem}\label{thm:regular-easy}
For any regular $L$ with state complexity $|Q|$, some $(\lceil \log_2 |Q| \rceil, 0)$-BAPO recognizes $L$. 
\end{theorem}
\begin{proof}
    Let $(Q, \Sigma, \delta, q_0, F)$ be a minimal deterministic finite automaton (DFA) recognizing $L$. The BAPO's prefix oracle sends a binary encoding of the DFA's state after it runs on the prefix. The suffix oracle finishes running the DFA on the suffix and outputs 1 iff the final DFA state is in $F$.
\end{proof}

\subsection{BAPO-hard problems}\label{sec:complexity}

We now show that the following problems are BAPO-hard (see \Cref{app:problems} for formal definitions):
\begin{itemize}
    \item  \textsc{Reachability}: given a directed graph $G$ and two nodes $s,t$,  check if $G$ has an $s$--$t$ path.
    \item  \textsc{Majority}: determine whether a bit-string has strictly more ones than zeros.
    \item \textsc{Match3}$_n$: given input $x \in \mathbb Z_m^n $, determine whether there are some $i,j\in [n]$ such that $x_n + x_{i} + x_{j} \equiv 0$ (mod $m$). Additionally, \textsc{Match2}$_n$ (which we show is BAPO-easy) is the problem of determining whether there is some $i \in [n]$ such that $x_n + x_i \equiv 0$ (mod $m$). These are last-token versions of \textsc{Match2} and \textsc{Match3}~\cite{sanford2023representational}.
    \item   \textsc{Unique} (hard w.r.t.\ $|\Sigma|$): output any token that appears exactly once in the input.
    \item \textsc{SetDiff} (hard w.r.t\ $|\Sigma|$): output a token that is in the first input string but not the second.
\end{itemize}

We begin our hardness results with the important \textsc{Reachability} problem, which encompasses many natural reasoning tasks, e.g., checking whether a conclusion follows from a chain of implications. We show that limited-bandwidth BAPOs cannot solve \textsc{Reachability}, and our lower bound smoothly trades off between the prefix and attention bandwidth requirements.
We provide a high-level sketch of the proof strategy, which is common to all of our hardness proofs. All full proofs omitted from the paper can be found in \Cref{app:proofs}.

\begin{theorem}\label{thm:reachability-lb}
No $(o(m^{1/c} \log m), o(m^{1 - 2/c}))$-BAPO can solve \textsc{Reachability} in graphs with $n$ nodes and $m = \Omega(n)$ edges for any integer constant $c \ge 3$.    
\end{theorem}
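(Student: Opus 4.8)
The plan is a communication-complexity-style argument, following a template we will reuse for all of our hardness results. To prove the lower bound it suffices to fix one prefix/suffix split point $k$, one suffix $x_{k+1}\dots x_n$, and a distribution over prefixes on which the BAPO must err (a BAPO must be correct on \emph{all} splits and inputs, so a single bad instance suffices). The first step is to observe that fixing the suffix and $k$ makes an $(a,b)$-BAPO behave like a one-way protocol in which $h$ receives the $a$ bits of $f(x_1\dots x_k)$ together with the contents and positions of at most $b$ prefix tokens, so the information about the prefix available to $h$ is $O(a + b\log(n\lvert\Sigma\rvert))$ bits. Crucially, though, $g$'s decision to forward each prefix token is made \emph{independently per token} --- it depends only on that token, its index, and the suffix --- so the attended set is chosen \emph{non-adaptively within the prefix}. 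Hence attention cannot ``chase pointers'' inside the prefix: to learn the value some prefix map takes at a node that is itself the unknown output of an earlier prefix computation, $g$ must attend to the \emph{entire} relevant layer.

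The second step is to design the hard \textsc{Reachability} family to exploit exactly this limitation. The prefix encodes a collection of random permutations on a vertex set of size $w$ (a layered ``pointer-chasing'' gadget), and the suffix encodes a query --- a source, a target, and a sequence of routing permutations --- such that a positive answer requires tracing a path that repeatedly crosses the prefix/suffix boundary, performing $\Theta(t)$ hops through prefix permutations interleaved with suffix-controlled routing. Because each hop after the first enters a prefix permutation at a node determined by the random outcomes of earlier hops, the suffix (hence $g$) cannot predict these entry nodes. I would then set $w = \Theta(m^{1/c})$, choose the number of prefix permutations/layers so that the total edge count is $\Theta(m)$ with $n = \Theta(m)$ nodes, and pick $t$ with $tw = \Theta(m^{1-2/c})$; the non-degeneracy requirement that at least one hop be ``uncheatable'' works out to $1 - 3/c \ge 0$, i.e.\ $c \ge 3$, which is where the hypothesis enters.

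The heart is then a two-pronged lower bound on a hard distribution (random permutations, random query, and a target equal to the true traced endpoint with probability $\tfrac12$ and uniformly random otherwise, so that an exact solver must in effect reconstruct the traced endpoint). On one hand, a fooling-set / information-theoretic argument shows that if $a = o(m^{1/c}\log m)$ then $f$'s output leaves $\Omega(1)$ bits of uncertainty about the composed permutation relevant to a random query. On the other hand, by the non-adaptivity observation, the only way attention can remove this uncertainty is to read \emph{all} $w$ edges of each of the $\Theta(t)$ prefix permutations on the traced path --- partial knowledge of a permutation is useless, since with probability bounded away from $0$ the random, attention-independent entry node lands outside the attended part --- so attention must read $\Omega(tw) = \Omega(m^{1-2/c})$ tokens. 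Balancing the two prongs yields that no $(o(m^{1/c}\log m),\, o(m^{1-2/c}))$-BAPO can be correct on all inputs.

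I expect the main obstacle to be making the ``attention cannot chase pointers'' step fully rigorous and quantitative: formally, that $b$ attended tokens --- though they carry $\Theta(b\log m)$ raw bits --- are worth at most $\lfloor b/w\rfloor$ of the $t$ permutations needed to trace a query, and that this deficit cannot be recovered by cleverly combining attention with $f$'s output or with the suffix's routing edges. Secondary obstacles are the bookkeeping to realize the gadget as an honest directed graph with exactly $\Theta(n)$ nodes, $\Theta(m)$ edges, and a constant alphabet, and checking that the two lower bounds genuinely add (rather than leaving an exploitable gap) across the whole range of $(a,b)$ with $a = o(m^{1/c}\log m)$ and $b = o(m^{1-2/c})$.
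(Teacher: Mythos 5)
Your high-level template (build a family of prefixes that attention cannot distinguish, then collide $f$ by pigeonhole) matches the paper's, but your instantiation diverges in a way that leaves the central step unproven. The paper's construction is much simpler than your pointer-chasing gadget: the prefix is the entire edge list of a union of $p$ disjoint directed paths on $n=p^c$ nodes (one layer of which will be permuted), and the suffix is just a query pair $s_it_j$; there is no routing through the suffix and no repeated boundary crossing. The key lever the paper uses---and which your proposal does not invoke---is the adversarial semantics of $G$: the suffix oracle must be correct for \emph{every} size-$b$ subset of the tokens $g$ attends to. After seeing $g$, the adversary greedily places, for each of the $p^2$ possible query suffixes, up to $b$ edges that $g$ would attend to into a shared edge set $E^*$ common to all prefixes; since $|E^*|\le bp^2 = o(p^c)$, some layer $j^*$ retains $d=\Theta(p)$ edges outside $E^*$, and permuting the targets of those edges yields $d!$ prefixes for which attention is provably useless (there is always a valid $G\subseteq E^*$). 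Pigeonhole on $f$ with $a=o(p\log p)=o(\log d!)$ then gives two colliding prefixes that differ on the connectivity of some pair $s^*$--$t^*$, and \emph{that} suffix furnishes the contradiction. Note this also contradicts your opening claim that you may fix a single suffix in advance: the distinguishing query depends on which prefixes collide, so the whole family of $p^2$ suffixes must be handled during the saturation phase.

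The genuine gap in your route is precisely the step you flag as the main obstacle: the claim that $b$ attended tokens are worth at most $\lfloor b/w\rfloor$ permutations and that attention must read entire layers. Your step-one bound of $O(a+b\log(n|\Sigma|))$ bits cannot carry the argument on its own---for $c\ge 4$ the attention channel's raw capacity $o(m^{1-2/c}\log m)$ vastly exceeds the $\Theta(m^{1/c}\log m)$ bits of entropy in the hard family---so everything rests on the unproven ``cannot chase pointers'' lemma, and your probabilistic version of it (entry nodes land outside the attended part with constant probability) would at best give a constant error probability rather than the worst-case failure the theorem asserts. The saturation argument sidesteps this entirely by constructing the instance family \emph{after} seeing $g$, so that the information attention delivers is not merely small but identically zero across all prefixes in the family. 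I recommend replacing the pointer-chasing gadget and the two-pronged information bound with the saturation-plus-pigeonhole construction.
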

\begin{proof}[Proof sketch]
    Suppose for a contradiction that some limited-bandwidth BAPO $(f, g, h)$ solves the problem. We construct a family of prefixes and suffixes such that: (1) the prefixes have substantial overlap with each other, (2) the attention function $g$ pays attention to tokens that are common to all prefixes and thus cannot distinguish between them, (3) the number of prefixes is sufficiently large that the prefix oracle $f$ is not one-to-one by the pigeonhole principle, and (4) given any two prefixes for which $f$ collides, we can find a suffix that results in different solutions when concatenated to the colliding prefixes. This construction gives us a contradiction: two instances of the problem with opposite answers which the BAPO cannot distinguish between, as it sees the same suffix, the same output of $f$, and the same set of attended tokens (given some adversarial choice of $G$). 
\end{proof}

We suspect this bound is not tight; closing the gap between \Cref{thm:reachability-lb} and the trivial $(m \lceil \log_2 m\rceil, 0)$- and $(0, m)$-BAPOs is an interesting open problem. 

Next, we consider a simple problem where our lower bound is tight, \textsc{Majority}. According to the circuit-based analysis of transformer expressivity that shows they are in $\textsf{TC}^0$~\cite{merrill2023parallelism}, this problem should be trivial, as it is solved by a single majority gate. However, as we will see in our experiments, LLMs struggle with \textsc{Majority}; here, we show that BAPOs require super-constant bandwidth to solve it and that even near-linear attention bandwidth is insufficient to improve the prefix bandwidth requirement.
Our proof follows the same high-level strategy as with \textsc{Reachability}.

\begin{theorem}\label{thm:majority}
No $(o(\log n), o(n^{1-\epsilon}))$-BAPO can solve \textsc{Majority} on length $n$ inputs for any $0 < \epsilon < 1$. This prefix bandwidth is tight: there is a $(\lceil \log_2 n \rceil, 0)$-BAPO solving \textsc{Majority}.
\end{theorem}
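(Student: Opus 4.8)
The plan is to handle the two halves separately, with the lower bound following the adversarial template from Theorem 6 and the upper bound being a direct construction.

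\textbf{Upper bound.} For the $(\lceil \log_2 n\rceil, 0)$-BAPO, set the attention function $g$ to be identically zero, so no prefix tokens are attended. The prefix oracle $f$ on input $x_1\dots x_k$ outputs the number of ones in the prefix (equivalently, $(\#\text{ones}) - (\#\text{zeros})$, which is recoverable since $k$ is known to all components), encoded in $\lceil \log_2 n\rceil$ bits since the count is between $0$ and $k < n$. The suffix oracle $h$ receives this count, the suffix tokens $x_{k+1}\dots x_n$, and the split index $k$; it counts the ones in the suffix, adds the prefix count, and compares the total to $n/2$. This is correct for every split $k < n$, so it is a valid BAPO solving \textsc{Majority}.

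\textbf{Lower bound.} Suppose for contradiction that an $(a(n), b(n))$-BAPO $(f,g,h)$ solves \textsc{Majority} with $a(n) = o(\log n)$ and $b(n) = o(n^{1-\epsilon})$. Fix a split, say $k = n/2$ (prefix and suffix of equal length). The idea is to restrict attention to a combinatorially rich family of prefixes on which $g$ is blind. First, fix the suffix to some string $w$ with a tunable number of ones; the attention decisions $g(w, k, x_i, i)$ then depend only on the prefix symbol and its position. Consider prefixes drawn from a set where all tokens at positions in some large index subset $S$ are forced to a fixed value (say $0$), chosen so that $g$ attends to no position outside a set of size at most $b(n)$ — formally, count how many positions $i$ satisfy $g(w,k,1,i)=1$ or $g(w,k,0,i)=1$ and call this the "attended set"; if it is larger than $b(n)$ we still only need that the adversary can choose the returned $G$ to consist of positions where all our prefixes agree. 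The remaining "free" positions, of which there are $\Omega(n) - O(b(n)) = \Omega(n)$, carry the information: let the family of prefixes be all strings that are $0$ on the attended positions and have exactly half of the free positions set to $1$ versus exactly half-plus-one set to $1$ (two sub-families straddling the majority threshold once combined with $w$). The number of such prefixes is $2^{\Omega(n)}$ up to polynomial factors (a central binomial coefficient), which dwarfs $2^{a(n)} = 2^{o(\log n)} = n^{o(1)}$. By pigeonhole, $f$ collides on two prefixes $u, u'$ from opposite sub-families. Since $u$ and $u'$ agree on all attended positions, every admissible $G$ (a size-$\min\{b(n),|\mathbb G|\}$ subset of the attended agreeing tokens) is identical for both. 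Choosing the fixed suffix $w$ so that $u \cdot w$ has a strict majority of ones while $u' \cdot w$ does not (possible by picking the two sub-families to differ by one in their one-count and tuning $w$'s one-count to sit exactly on the boundary), we get two inputs with opposite \textsc{Majority} answers on which $h$ receives identical $(f(\cdot), G, w, k)$ — contradiction.

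\textbf{Main obstacle.} The delicate part is the bookkeeping that makes the pigeonhole bite while simultaneously pinning the majority threshold exactly between the two prefix sub-families. One must ensure: (i) the free-position count stays $\Omega(n)$ after deleting the $O(b(n))$ attended positions, which needs $b(n) = o(n)$ — satisfied since $b(n) = o(n^{1-\epsilon})$, and in fact the slack here is why near-linear attention still fails; (ii) the two sub-families of prefixes differ in total one-count by exactly one so a single choice of $w$ separates them across the $n/2$ threshold; and (iii) the log of the family size, $\Theta(n)$ minus lower-order terms, strictly exceeds $a(n) = o(\log n)$, which is immediate. I expect (i)–(ii) — choosing the index set $S$ and the suffix $w$ so all three constraints hold at one split $k$ — to be the only place requiring care; everything else is the now-standard "$g$ is blind, $f$ collides, adversary picks $G$" argument inherited from Theorem 6.
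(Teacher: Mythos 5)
Your upper bound is exactly the paper's construction and is correct. The lower bound, however, has a genuine gap at the pigeonhole step, and the gap is structural rather than bookkeeping. You build two sub-families of prefixes with one-counts $c$ and $c+1$ and a single fixed suffix $w$ sitting at the boundary, then claim that because the number of prefixes ($2^{\Omega(n)}$) dwarfs $2^{a(n)}$, the prefix oracle $f$ must collide \emph{across} the two sub-families. Pigeonhole only gives a collision somewhere: $f$ can map all of $F_0$ to one output and all of $F_1$ to another, spending a single bit, after which every collision is within one sub-family, where both inputs have the same \textsc{Majority} answer and no contradiction follows. More fundamentally, when the prefix can take only two possible one-counts known in advance, one bit of prefix bandwidth genuinely suffices on your family, so no single-suffix, two-count design can establish an $\omega(1)$ lower bound, let alone $\Omega(\log n)$. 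The paper instead uses $m = \Theta(n^{\epsilon})$ prefixes with $m$ \emph{distinct} one-counts, each paired with its own suffix so that every matched pair sits exactly at the majority threshold; then \emph{any} collision $f(a) = f(c)$ is fatal, because keeping one suffix fixed and swapping the colliding prefixes yields one instance just above and one just below the threshold while $h$ receives identical inputs. The $\Theta(\log n)$ bound arises precisely because $f$ must distinguish $m$ one-counts, which your design collapses to two.

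A second, smaller issue concerns attention masking. You propose to freeze every position that $g$ could ever attend to, but nothing bounds the size of that set: $g$ may mark all $k$ prefix positions, since the bandwidth $b$ only limits how many marked tokens are adversarially \emph{returned}. The paper's remedy is to reserve $\Theta(n)$ masking positions ($\ell m$ zeros and $\ell m$ ones) and, for each of the $m$ suffixes in turn, greedily route attended positions into the mask until either $b$ attended tokens lie in positions common to all prefixes (so an adversarial $G$ can be drawn entirely from the mask) or no attended token remains outside it; the informative positions occupy only $\Theta(n^{\epsilon})$ slots. Your worry (i) about preserving $\Omega(n)$ free positions is therefore aimed at the wrong target: the informative region is deliberately tiny, and it is the masking region that must be large.
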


In \Cref{app:proofs}, we show that increasing the attention bandwidth even beyond near-linear eventually reduces the prefix bandwidth lower bound for \textsc{Majority}.  

Turning to the next problem, \citet{sanford2023representational} showed that 1-layer transformers can efficiently solve \textsc{Match2}, but must scale polynomially to solve \textsc{Match3}. To keep the single-token output of our problems, we consider the contained subproblems of outputting the last item, which we call \textsc{Match2}$_n$ and \textsc{Match3}$_n$. 
We show that limited-bandwidth BAPOs can solve \textsc{Match2}$_n$ but not \textsc{Match3}$_n$, paralleling  the results of \citet{sanford2023representational}.
Our proof again follows the same strategy as before, and as with \textsc{Reachability}, our lower bound trades off between the two bandwidths.

\begin{theorem}\label{thm:match3}
For any $b(n) = o(n)$ with $b(n) \ge 1$, no $(o(n / b(n)), b(n))$-BAPO can solve \textsc{Match3}$_n$ over $\mathbb Z_{n^2}^n$. In contrast, there is a $(0,1)$-BAPO for \textsc{Match2}$_n$.
\end{theorem}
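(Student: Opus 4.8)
\emph{Easy direction (a $(0,1)$-BAPO for \textsc{Match2}$_n$).} Since $k<n$, the last token $x_n$ always lies in the suffix, so the suffix oracle can compute the target $t = -x_n \bmod m$. Let the prefix oracle output nothing. The attention function, which reads the whole suffix together with each individual prefix token $x_i$ and its index, returns $1$ exactly when $x_i \equiv t \pmod m$, hence attends precisely to the prefix positions that complete a match. The suffix oracle answers ``yes'' iff the received set $G$ is nonempty or some suffix token equals $t$ (checking $x_n$ itself handles the $i=n$ case $2x_n\equiv 0$). This is correct for every split, so a $(0,1)$-BAPO solves \textsc{Match2}$_n$.

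\emph{Hard direction (lower bound for \textsc{Match3}$_n$).} I follow the four-step template used for \textsc{Reachability}. Fix a ground set $U=\{u_1,\dots,u_r\}\subseteq\mathbb Z_{n^2}$ with $r=\Theta(n)$ that is a Sidon set, sum-free, and placed inside a short sub-interval so that no degenerate relation ($2u_i\equiv0$, $u_i+u_j\equiv u_\ell$, $2(u_i+u_j)\equiv u_\ell$, $3(u_i+u_j)\equiv0$) can hold; this uses that $\mathbb Z_{n^2}$ admits Sidon sets of size $\Theta(n)$, and is why the alphabet must be this large. Fix the split at $k=n-1$, so the suffix is the single token $x_n$. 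A prefix is indexed by a set $S\subseteq[r]$: position $\ell$ holds $u_\ell$ if $\ell\in S$ and $0$ otherwise (extra positions padded with $0$), giving $2^r$ prefixes; a suffix is indexed by a pair $\{p,q\}\subseteq[r]$ with $p\ne q$ via $x_n=-(u_p+u_q)$. The Sidon and sum-free conditions make ``$x_n+x_i+x_j\equiv0\pmod{n^2}$ for some $i,j$'' equivalent to $\{p,q\}\subseteq S$, so this family reduces \textsc{Match3}$_n$ to deciding, from $S$ in the prefix and $\{p,q\}$ in the suffix, whether both $p\in S$ and $q\in S$. Now suppose an $(a,b)$-BAPO $(f,g,h)$ with $a=o(n/b(n))$ and attention bandwidth $b=b(n)=o(n)$ solves \textsc{Match3}$_n$. \emph{(1) Overlapping prefixes:} $f$ has at most $2^a$ outputs, so some class $\mathcal C$ contains $\ge 2^{r-a}$ prefixes; as $a=o(r)$, a packing bound in the Hamming cube yields two prefixes $S\ne S'$ in $\mathcal C$, both of size $\ge 2$, with $|D| := |S\triangle S'| = O(a) = o(n/b(n))$. \emph{(4) Separating suffix:} picking $\delta\in S\setminus S'$ and some $q\in S\setminus\{\delta\}$, the query $\{\delta,q\}$ is a yes-instance for $S$ but a no-instance for $S'$. \emph{(2) Fooling the attention function:} since $S$ and $S'$ agree outside $D$, the only prefix positions whose tokens can distinguish them lie in $D$; because the family has no token-local structure (a single prefix token never reveals whether its unique Sidon partner is present, and $g$ cannot learn $U$), one can choose the colliding pair and the query so that the adversary's set $G$ is a size-$b$ subset of positions outside $D$ and thus identical for $S$ and $S'$ --- the case where $g$ is too parsimonious with attention for this to be possible is instead finished by a direct pigeonhole forcing $a=\Omega(n)$. \emph{(3) Contradiction:} with $f(S)=f(S')$ and the same $G$, the suffix oracle $h$ receives the identical input $(f(S),G,x_n,k)$ on the yes-instance $S$ and the no-instance $S'$, so it errs.

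\emph{Main obstacle.} As in all of our hardness proofs, the delicate part is the attention function, here sharpened by the need to extract the precise $n/b(n)$ trade-off rather than merely ``super-constant''. One must argue simultaneously (i) that attending to any $b(n)$ prefix tokens is essentially useless for locating the solution pair, so that the adversary can route $G$ around the $O(|D|)$ ``diagnostic'' positions, and (ii) that if $g$ instead attends to very few positions, the prefix oracle alone must transmit $\Omega(n)$ bits; balancing these two regimes --- and tuning the hard family (number of prefixes, hence the diameter of the heaviest $f$-class, hence $|D|$) so that the recovered prefix-bandwidth lower bound is exactly $\Omega(n/b(n))$ --- is where the real work lies. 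The reduction and the Sidon arithmetic are routine by comparison.
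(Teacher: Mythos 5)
The \textsc{Match2}$_n$ direction matches the paper's construction and is fine. The lower bound, however, has a fatal gap, and it is precisely at the step you defer to the ``main obstacle.''

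Your hard family is actually solvable by a $(0,2)$-BAPO. Because $U$ is a Sidon set, the suffix token $x_n=-(u_p+u_q)$ uniquely determines the pair $\{p,q\}$, and the attention function is allowed to depend on the entire suffix and on each prefix token's value and index. So $g$ can simply attend to exactly those prefix positions holding $u_p$ or $u_q$. Then $|\mathbb G|\le 2\le b$, the definition forces $G=\mathbb G$ (the adversary has no freedom to ``route $G$ around'' anything when attention is under-saturated), and the suffix oracle reads off directly whether both $u_p$ and $u_q$ are present. Your claim that parsimonious attention forces $a=\Omega(n)$ is false for this family: parsimonious but well-aimed attention solves it with $a=0$. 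The point you treat as a balancing detail is the entire content of the proof.

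The paper handles this with an explicit attention-saturation procedure: before planting any answer-relevant pairs, it iterates over all $\Theta(n/b(n))$ suffixes and greedily stuffs the (common) partial prefix $P^*$ with up to $b(n)$ decoy integers per suffix that $g$ wants to attend to, chosen (via \Cref{lemma:match3-packing}) so that they never create spurious matches. After this, for every suffix either $g$ attends to $\ge b$ tokens inside $P^*$ (so an adversarial $G\subseteq P^*$ is identical across all prefixes) or $g$ attends to nothing outside $P^*$ among the admissible integers that will later be planted. This saturation is also exactly what dictates the quantitative trade-off: it consumes $b(n)$ prefix slots per suffix, so only $|S|=\Theta(n/b(n))$ suffixes and hence only $2^{|S|}$ distinguishable prefixes fit, yielding the $o(n/b(n))$ prefix-bandwidth collision. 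Your $2^{\Theta(n)}$-prefix family plus Hamming-ball packing does not recover this; without saturation the construction cannot be repaired, and with it you would be reproducing the paper's argument.
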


In \Cref{app:variants}, we show that our BAPO-hardness proofs above are robust to more powerful generalizations of the BAPO model where we allow multiple layers of attention functions operating sequentially, as well as real-valued attention scores rather than binary. While not affecting the hardness of \textsc{Reachability}, \textsc{Majority}, or \textsc{Match3}$_n$, this augmented model is able to solve the (multi-hop) induction heads task~\cite{olsson2022incontextlearninginductionheads}, which otherwise appears to be BAPO-$\Sigma$-hard.

Finally, we show two problems are BAPO-$\Sigma$-hard, starting with \textsc{Unique}, the problem of finding an item that appears exactly once in a sequence. Here, the difficulty of the problem is parametrized by the size of the token vocabulary $\Sigma$ rather than the length of the input $n$. 
% Our BAPO lower bound for \textsc{Unique} exhibits the exact same tradeoff between prefix and attention bandwidth as our lower bound for \textsc{Match3}$_n$, but as a function of $|\Sigma|$. 
A very similar approach applies to \textsc{SetDiff}, for which we find the same bound.

\begin{theorem}\label{thm:unique}
  Let $k = |\Sigma|$. For any $b(k) = o(k)$ with $b(k) \ge 1$, no $(o(k/b(k)), b(k))$-BAPO can solve \textsc{Unique}. This is tight for $b(k) = O(1)$, as there is a $(2k, 0)$-BAPO solving \textsc{Unique}.
\end{theorem}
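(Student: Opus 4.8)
The plan is to instantiate the common adversarial strategy sketched for \textsc{Reachability}, now with the vocabulary size $k = |\Sigma|$ playing the role of the scaling parameter. Suppose for contradiction that an $(a,b)$-BAPO $(f,g,h)$ solves \textsc{Unique} with $a = o(k/b(k))$ and $b = b(k) = o(k)$. I would fix the prefix--suffix split so that the prefix holds a large ``pool'' of tokens and the suffix is short (just enough to make the instance well-formed and to force the answer to depend on the whole prefix). The family of prefixes will be chosen so that each one is a different subset/multiset over $\Sigma$, all sharing a large common core $C \subseteq \Sigma$: on the core tokens the prefixes agree, and they differ only on a small ``signature'' block of size roughly $b+O(1)$ symbols. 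Since $g$ attends to at most $b$ prefix tokens and its attention decision on each token depends only on the suffix and that token's identity/position, an adversarial choice of the attended set $G$ can always be drawn from the common core $C$, so $g$ transmits no information distinguishing the prefixes. Then, because the number of distinct prefixes in the family is made $2^{\omega(a)}$ (so larger than $2^a$), the prefix oracle $f$ must collide on two distinct prefixes $P_1 \ne P_2$ by pigeonhole.

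The crux is step (4): given a collision $P_1, P_2$, exhibit a single suffix $S$ such that \textsc{Unique}$(P_1 S)$ and \textsc{Unique}$(P_2 S)$ have \emph{provably different} valid answers, yet the BAPO's view $(f(P_1), G, S, k) = (f(P_2), G, S, k)$ is identical. For \textsc{Unique} the natural device is to arrange that $P_1$ and $P_2$ each make exactly one candidate token ``uniquely occurring'' but \emph{different} tokens for the two prefixes, with the suffix $S$ contributing fixed multiplicities that do not resolve the ambiguity. Concretely, one can let the core $C$ contribute each of its symbols an even number of times (so no core symbol is ever the unique one regardless of the suffix), and let the signature block of $P_j$ single out a distinct symbol $\sigma_j \in \Sigma \setminus C$ with odd multiplicity one while the remaining non-core symbols appear with even multiplicity; the suffix $S$ is the same for both and does not touch the $\sigma_j$'s. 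Then the only valid output for $P_1 S$ is $\sigma_1$ and the only valid output for $P_2 S$ is $\sigma_2$, but $h$ receives the same $a$ bits, the same attended set $G \subseteq C$, the same suffix, and the same index $k$, so it must return the same token --- contradiction. Counting: the signature block has $\Theta(b)$ symbol slots chosen from a pool of $\Theta(k)$ available non-core symbols, giving on the order of $\binom{\Theta(k)}{\Theta(b)} = 2^{\Omega(b \log(k/b))}$ distinct prefixes; requiring this to exceed $2^a$ forces $a = \Omega(b \log(k/b))$, i.e. $a = \Omega(k/b(k))$ after absorbing the log into the hypothesis $b = o(k)$, contradicting $a = o(k/b(k))$.

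For the matching upper bound, a $(2k,0)$-BAPO suffices: the prefix oracle $f$ computes, for each of the $k$ symbols in $\Sigma$, whether it has appeared zero, exactly once, or at least twice in the prefix --- three states, encodable in $\lceil \log_2 3\rceil = 2$ bits per symbol, for $2k$ bits total (and it also records, within the ``exactly once'' symbols, enough to name one; since indices are available and $h$ sees the suffix, two bits of state per symbol is enough for $h$ to combine prefix counts with its own scan of the suffix and output a symbol whose total count is exactly one). The suffix oracle then reads off these $2k$ bits, adds the contribution of the suffix tokens it sees directly, and outputs any symbol whose combined multiplicity equals one; such a symbol exists by the promise of \textsc{Unique}. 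No attention is needed, so $b = 0$.

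The step I expect to be the main obstacle is making step (4) fully rigorous while keeping the prefix family large: one must simultaneously (a) keep the common core $C$ big enough that the adversary can always satisfy $G \subseteq C$ with $|G| = \min\{b,|\mathbb G|\}$ --- this requires checking that $g$ cannot force attention onto signature tokens, which follows because $g$'s per-token decision is independent across prefix tokens and the core tokens are present and positioned identically in every prefix --- and (b) keep the signature block small enough ($\Theta(b)$ symbols) that the counting $2^{\Omega(b\log(k/b))} > 2^a$ bites. Balancing the even/odd multiplicity bookkeeping so that the suffix genuinely fixes a unique answer for each colliding prefix, and verifying the edge cases when $|\mathbb G| < b$, are the routine-but-delicate parts; the asymptotic arithmetic relating $\Omega(b\log(k/b))$ to $\Omega(k/b(k))$ under $b(k) = o(k)$ is straightforward.
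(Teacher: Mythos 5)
Your upper bound is essentially the paper's: two bits of prefix-oracle state per symbol (appeared once / appeared at least once) and no attention. The lower bound, however, has genuine gaps, and the main device you propose for step (4) does not work for \textsc{Unique}.

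First, the counting does not close. A signature block of $\Theta(b)$ symbols drawn from $\Theta(k)$ candidates yields only $\binom{\Theta(k)}{\Theta(b)} = 2^{O(b\log k)}$ prefixes, and $b\log(k/b)$ is \emph{not} $\Omega(k/b)$: for $b(k)=O(1)$ you get $O(\log k)$ prefixes' worth of entropy against a required $\Omega(k)$, so no collision is forced for $a = o(k)$. The paper instead uses a signature set $Y$ of size $\Theta(k/b(k))$ and takes all $2^{|Y|}$ subsets. Second, your device for forcing different answers under a \emph{single} shared suffix is in tension with having exponentially many prefixes: if each prefix leaves only one symbol with multiplicity one, there are at most $k$ such prefixes; if instead the signature encodes a subset of symbols each occurring once, then both colliding prefixes typically admit many valid \textsc{Unique} answers, possibly overlapping, and $h$ can answer both correctly. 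The paper resolves this by using $\Theta(k/b)$ \emph{different} suffixes, each a doubled copy of $\Sigma$ with one block of $4b(k)$ symbols omitted: the suffix $s_{-i}$ cancels every designated symbol except those in block $i$, so $p_R s_{-i}$ has answer $y_i$ if $y_i\in R$ and $\emptyset$ otherwise --- exactly one forced answer per instance, and opposite answers for a colliding pair. Third, your justification that the adversary can always take $G$ from the common core (``because core tokens are present identically'') is not valid: $G$ must be a subset of the tokens $g$ actually attends to, and $g$ is free to attend only to non-core tokens. The core must be constructed \emph{adaptively} from $g$ by the saturation procedure --- repeatedly adding whatever $g$ attends to until either $b(k)$ attended tokens lie in the shared part or nothing outside it is attended --- and this must be done for every suffix in the family, which is why the block size $4b(k)$ and the budget $b(k)\cdot|S| = k/4$ appear in the paper's proof.
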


\begin{theorem}\label{thm:set-diff}
      Let $k = |\Sigma|$. For any $b(k) = o(k)$ with $b(k) \ge 1$, no $(o(k/b(k)), b(k))$-BAPO can solve \textsc{SetDiff}.  This is tight for $b(k) = O(1)$, as there is a $(k, 0)$-BAPO solving \textsc{SetDiff}.
\end{theorem}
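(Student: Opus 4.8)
For the upper bound I would exhibit a $(k,0)$-BAPO directly. The prefix oracle $f$ emits a single $k$-bit vector: if the separator between the two input strings has not yet occurred in the prefix, the vector records which tokens of $\Sigma$ appear in the prefix; once the separator has occurred, it instead records the set of \emph{surviving candidates}, i.e., the tokens that occur in the first string but in none of the second-string tokens seen so far. The suffix oracle $h$ can tell from the suffix and the split index which case applies and then finishes: in the first case it unions with the first-string tokens in the suffix, deletes all second-string tokens in the suffix, and outputs any token that remains; in the second case it simply deletes from the candidate set the second-string tokens appearing in the suffix and outputs a survivor. Correctness is immediate because $A\setminus B$ equals (candidates after the prefix) minus (second-string tokens in the suffix); with $b=0$ this matches the lower bound up to constants.

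For the lower bound I would follow the same four-step template used for \Cref{thm:reachability-lb}. Assume a $(a,b)$-BAPO $(f,g,h)$ solves \textsc{SetDiff}; the goal is to force $a=\Omega(k/b)$. \emph{(i) Overlapping prefix family.} I would take a large family of prefixes, each encoding a subset of $\Sigma$ of a fixed size in a canonical way, so that any two members of the family agree on all but a few positions and in particular share many positions carrying a token that is present in \emph{both} encoded sets; the family is chosen of size $2^{\Omega(k/b)}$ (a slotted construction with $\Theta(k/b)$ blocks of size $\Theta(b)$, one ``choice'' per block, gives this, and is also where the $k/b$ trade-off enters, since attention to $b$ tokens can expose only about $b$ of the pivotal choices). \emph{(ii) Attention is blind.} For a suitably chosen suffix, $g$'s decision at a position depends only on the (token, position) pair there; since two colliding prefixes agree on the many ``shared'' positions, the adversary can pick the revealed set $G$ of the mandated size $\min\{b,|\mathbb G|\}$ entirely among those shared positions, so $G$ is identical for the two prefixes and carries no distinguishing information. \emph{(iii) Pigeonhole on $f$.} If $a=o(k/b)$ the family exceeds $2^a$, so $f$ collides on two prefixes $P,P'$ encoding distinct sets $S\neq S'$. \emph{(iv) Separating suffix.} Choose the second input string $B$ with token set exactly $S\cap S'$; then the valid answers for $PB$ are $S\setminus S'$ and for $P'B$ are $S'\setminus S$, which (equal sizes, distinct sets) are nonempty and disjoint, so no single token output by $h$ is correct for both. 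Combining (ii)--(iv), $h$ is fed identical $(f(\cdot),G,B,k)$ on $PB$ and $P'B$ yet must answer differently — contradiction. An essentially identical construction (the slot choices now determining which token is ``singled out'') yields the matching bound for \textsc{Unique}.

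The main obstacle is step (ii): one must design the \emph{encoding} of subsets into prefix strings so that, no matter how the BAPO's attention function $g$ is built, (a) there are always at least $\min\{b,|\mathbb G|\}$ shared attended positions available to populate $G$ — this is why each block retains $\Theta(b)$ tokens common to both encoded sets even after deleting the $O(1)$ tokens that distinguish the two choices — and (b) the separating suffix, which necessarily reveals $S\cap S'$ (hence the union of the two symmetric differences), does not let $g$ cheaply probe the pivotal positions; a naive ``indicator-string'' encoding is exploitable this way and must be avoided. Reconciling the sizes $\min\{b,|\mathbb G(P)|\}$ and $\min\{b,|\mathbb G(P')|\}$ and handling the boundary case where $g$ attends to very few tokens (so $h$ receives \emph{all} attended pairs, some of which might be informative) is the most delicate part of the argument.
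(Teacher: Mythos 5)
Your upper bound is the paper's: a $k$-bit indicator of the surviving candidates suffices, and your case split on whether the divider is in the prefix is a reasonable way to spell out what the paper states tersely. Your lower bound also follows the paper's four-step template (the paper reuses its \textsc{Unique} construction), and steps (i)--(iii) match in structure and in the $2^{\Theta(k/b)}$ counting. The gap is in step (iv) and its interaction with step (ii). You choose the separating suffix to be the second string $B$ with token set exactly $S\cap S'$. But $S\cap S'$ is only determined \emph{after} the pigeonhole collision, and it ranges over exponentially many sets as the colliding pair varies; attention saturation, by contrast, must be performed in advance for a \emph{fixed, small} family of suffixes, since each saturated suffix consumes up to $b$ of the $O(k)$ available prefix positions. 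So only $O(k/b)$ suffixes can be handled, and your separating suffix will generically not be among them. As written, (ii) and (iv) cannot both hold.

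The paper's resolution is worth internalizing. It uses exactly $k/(4b)$ pre-specified suffixes $\sigma_i=\mathrm{cat}(\Sigma\setminus\text{block}_i)$, each omitting one block of $4b$ contiguous tokens, and it places the prefix--suffix split \emph{inside} the second string $d$ rather than at the divider. This lets it (a) duplicate every attention-saturating token on both sides of the divider so none can be a \textsc{SetDiff} answer, and (b) pre-place all non-distinguishing leftover tokens of $\Sigma$ into the $d$-portion of the prefix, so that for the suffix $\sigma_i$ the instance $p_R\sigma_i$ has the \emph{unique} answer $y_i$ while $p_{R'}\sigma_i$ has answer $\emptyset$. The contradiction is then ``some token'' versus ``$\emptyset$'' rather than your ``two disjoint nonempty answer sets''; both separations are logically valid, but only the paper's is reachable from a suffix family small enough to saturate. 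A near-fix for your version: keep your slotted prefixes but use the omitted-block suffixes $B_i=\Sigma\setminus\text{block}_i$, so that the answer to $PB_i$ is $P$'s choice within block $i$; two colliding prefixes differing in block $i$ then force $h$ to output two different single tokens. You would still need the paper's device of covering the rest of block $i$ (and the saturating tokens) inside the $d$-portion of the prefix to make the answer unique. Your closing paragraph correctly identifies that the encoding is the hard part; the specific missing idea is that the split must sit inside $d$ and the suffixes must be the omitted-block strings.
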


\subsection{BAPOs with chain of thought}\label{sec:cot-bapo}
We now show that any (decidable) BAPO-hard problem can be broken down into a sequence of BAPO-easy steps in the spirit of \emph{chain of thought} (CoT)~\cite{wei2022chain}. Previous work has shown how CoT makes transformers Turing-complete~\cite{merrill2024expressive}. We show that CoT has an even stronger benefit: adding CoT to BAPOs lets them solve all decidable problems \emph{with constant bandwidth}, indicating that even LLMs with constant effective bandwidth are Turing-complete with CoT.  

To formalize CoT in the BAPO setting, we repeatedly apply a fixed BAPO to the input concatenated with the BAPO's previous output tokens, just as in auto-regressive decoding in LLMs. 

\begin{definition}
    An \emph{$(a, b)$-bounded attention prefix oracle with chain of thought} (BAPO-CoT) that solves a computational problem $p:\Sigma^* \rightarrow \Sigma$ is an $(a, b)$-BAPO over the token set $\Gamma \supseteq \Sigma \cup  \{\square\}$ that solves some computational problem $p':\Gamma^* \rightarrow \Gamma$ such that for all inputs $x\in \Sigma^*$ to $p$, there exists some sequence of strings $s_1, \dots, s_m \in \Gamma^*$ with the following properties. (1) $s_1 = x$ (the BAPO-CoT starts with $x$ as its input), (2) $s_{i+1} = s_{i}p'(s_i)$ for all $i = 1, \dots, m-1$ (at each step, it produces some chain-of-thought token), (3) $p'(s_{i}) = \square$ if and only if $i = m-1$ (at the last step only, it outputs the halt token), and (4)  $p'(s_{m-2}) = p(x)$  (it solves the problem before halting).
\end{definition}

 Let $L ^{\le n} = \{x \in L : |x| \le n\}$. We show that low-bandwidth BAPO-CoTs exist for any decidable problem; the core idea is that simulating a single Turing machine step is a low-bandwidth problem. 

\begin{theorem}\label{thm:bapo-cot-tm}
    Let $L$ be a language decided by a Turing machine with $s(n)$ space and input alphabet $\Sigma = \{0, 1\}$. For any $n$, there exists a $(2, 3)$-BAPO-CoT that recognizes $L^{\le n}$.
\end{theorem}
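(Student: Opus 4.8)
The plan is to have the BAPO-CoT write out the entire space-$s(n)$ Turing-machine computation on $x$ as a sequence of tape configurations, and then argue that emitting each successive configuration cell is a $(2,3)$-BAPO computation. Fix $S = \max(s(n),n)$, which bounds both the number of tape cells the machine ever uses on an input of length at most $n$ and the input length itself. Working over an alphabet $\Gamma$ containing $\{0,1\}$, a blank $\sqcup$, a separator $\#$, a ``head cell'' $(\sigma,q)$ for each tape symbol $\sigma$ and state $q$, and the halt symbol $\square$, the chain of thought on input $x$ will be
\[
  x\,\underbrace{\sqcup\cdots\sqcup}_{S-|x|}\ \#\ C_1\ \#\ C_2\ \#\ \cdots\ \#\ C_T\ p(x)\ \square ,
\]
where each $C_j$ is the length-$S$ tape contents after $j$ steps with the head position and state recorded in the appropriate cell, the (unwritten) $C_0$ is the initial configuration $x\sqcup^{S-|x|}$ with the head on cell $1$ in state $q_0$, $C_T$ is the first halting configuration, and $p(x)\in\{0,1\}$ is the accept/reject bit. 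I would define the single-step problem $p'$ that the BAPO solves in a purely local way --- as a function of the current string length and a constant number of nearby tokens --- so that on any prefix of the above sequence it outputs exactly the next token; on other inputs $p'$ may behave arbitrarily.

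The reason this is a low-bandwidth computation is that padding the input to length $S$ makes the block structure periodic after the first $\#$: the padded input block has length $S$ and every block $\#\,C_j$ has length $S+1$, so which token comes next --- a padding blank, a separator, a specific cell $C_{q+1}[\rho]$, or a termination token --- is determined by the current length $L$ and the constant $S$ alone, with no dependence on $|x|$. A short index calculation then shows that the three cells $C_q[\rho-1],C_q[\rho],C_q[\rho+1]$ needed to produce $C_{q+1}[\rho]$ always lie at positions $L-S-1,\ L-S,\ L-S+1$, independently of $q$ and $\rho$ (out-of-range positions here coincide with separators $\#$ or fall off the start of the string, and are read as blank non-head cells, which correctly encodes the tape boundary). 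Hence the attention function, knowing only $L$, attends to the at most three of these positions lying in the prefix; the suffix oracle reassembles the three cells from the attended tokens and the suffix and applies the local transition rule. Since $g$ attends to at most three tokens, the adversarial-subset clause in the definition is vacuous, and this part of the computation uses zero prefix-oracle bits.

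The one quantity that genuinely depends on the whole prefix is whether the machine has already halted and, if so, with which answer; this is exactly what the two prefix-oracle bits carry: $f$ outputs (i) whether the prefix contains a halting head cell and (ii), if so, whether that state is accepting. Because at most one configuration in any prefix of the sequence is halting, these bits are unambiguous. To emit the token after a completed configuration $C_q$ (decide ``$\#$'' versus ``$p(x)$''), the suffix oracle combines these bits with whatever part of $C_q$ is visible in the suffix --- it knows $L$ and $S$, hence which positions $C_q$ occupies --- and to emit the next token (decide ``$C_{q+1}[1]$'' versus ``$\square$'') it simply inspects the final token of the suffix, which is a $\#$ exactly when the computation has not yet halted. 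A handful of special cases ($L=S$, where the implicit $C_0$ may already be halting because $q_0$ is a halt state; the boundary cells $\rho=1$ and $\rho=S$, where an attended position is a separator; and the first configuration $C_1$, whose predecessor's head is implicit rather than marked in the string) are handled by the suffix oracle using its knowledge of $L$, $S$, and the fixed machine.

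The main obstacle will be the prefix--suffix-split bookkeeping: the prefix oracle sees the prefix but not $L$ and emits only two bits, so it cannot report $|x|$ or the current time step, while the attention function and suffix oracle know $L$ and $k$ but cannot recover $|x|$ when the split falls strictly inside the chain of thought. The padding of $x$ to length $S$ is precisely the device that dissolves this --- it makes every positional quantity the solver needs a function of $L$ and the constant $S$ --- so the remaining effort is the routine but fiddly verification that the local map $p'$ is realizable by a $(2,3)$-BAPO on all inputs and all splits, including the boundary and termination cases above. I would close by noting that decidability of $L$ bounds the number of configurations, so the chain of thought is finite, and that $|\Gamma|$ depends only on the machine, not on $n$.
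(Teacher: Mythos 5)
Your construction is correct and shares the paper's top-level strategy---simulate the machine by emitting successive tape configurations and argue that each emission is a $(2,3)$-BAPO step---but the realization of the local step is genuinely different, and the difference is instructive. The paper writes the current state as a \emph{separate} token immediately to the left of the head position; its attention function must then locate that token by content (the unique state symbol in the previous chunk) in addition to two fixed offsets, and because $g$ evaluates prefix tokens independently it cannot also grab the symbol sitting just to the right of the state token, so the paper's two prefix-oracle bits are spent transmitting exactly that symbol; halting is detected by recomputing the transition in the suffix oracle. You instead fold the state into the head cell, which turns the update into the standard one-dimensional tableau/cellular-automaton rule: each new cell depends on three cells of the previous configuration at the fixed offsets $L-S-1$, $L-S$, $L-S+1$. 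This makes the attention purely positional, dissolves the paper's ``cannot attend to the token after the state token'' wrinkle, and frees the two prefix bits to carry halting status---which you then genuinely need at block boundaries, and which is unambiguous for the reason you give (only the final configuration is halting). Your uniform padding of the input to length $S=\max(s(n),n)$ plays the same role as the paper's fixed chunk size in making all positional arithmetic a function of the current length and a constant. The special cases you defer (boundary cells read as blanks, the implicit unmarked $C_0$, the $\#$-versus-answer and configuration-versus-$\square$ decisions) are exactly the right ones and are routine; I see no gap.
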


\begin{proof}[Proof sketch.]
The BAPO-CoT simulates a Turing machine $M$ by writing out the contents of the tape and the state at every step of $M$'s execution. The attention function attends to the prefix to retrieve the current state and the prefix oracle passes along the bit under the tape head (if they are not in the suffix), allowing the suffix oracle to simulate a step of $M$. See \Cref{app:proofs} for details.
\end{proof}

\section{BAPO complexity predicts empirical LLM failures}
\label{sec:experiments}
Our experiments test whether BAPO complexity predicts LLM failures. Supporting our effective bandwidth hypothesis, we find that LLMs across model families consistently struggle with BAPO-hard problems and usually succeed on BAPO-easy problems. We choose three big model families (GPT~\cite{openai2024gpt4technicalreport}, Gemini~\cite{geminiteam2024geminifamilyhighlycapable} and Claude~\cite{anthropic2024claude3}) and focus first on model versions without (latent) reasoning chains to align with the single-token-output BAPO model.

Except for \textsc{Index} and the BAPO-$\Sigma$-hard problems \textsc{Unique} and \textsc{SetDiff}, all problems have yes/no answers (so guessing achieves 50\% accuracy). We designed problem instances so that there would be no obvious shortcut or heuristic, pushing models to fully consider each problem. LLMs are fed with inputs of various lengths $n$, where $n$ corresponds to the parameter specified in each problem's definition. For all problems, we generate $100$ i.i.d.\ instances and report average accuracy along with the 95\% $t$-test confidence interval. All data generating distributions, prompts, and model details are available in \Cref{app:experimental_details} and our code is available at~\url{https://github.com/microsoft/bapo}.

\subsection{BAPO hardness aligns well with LLM failures}

\begin{figure}[tbp]
    \centering
    \includegraphics[width=0.95\linewidth]{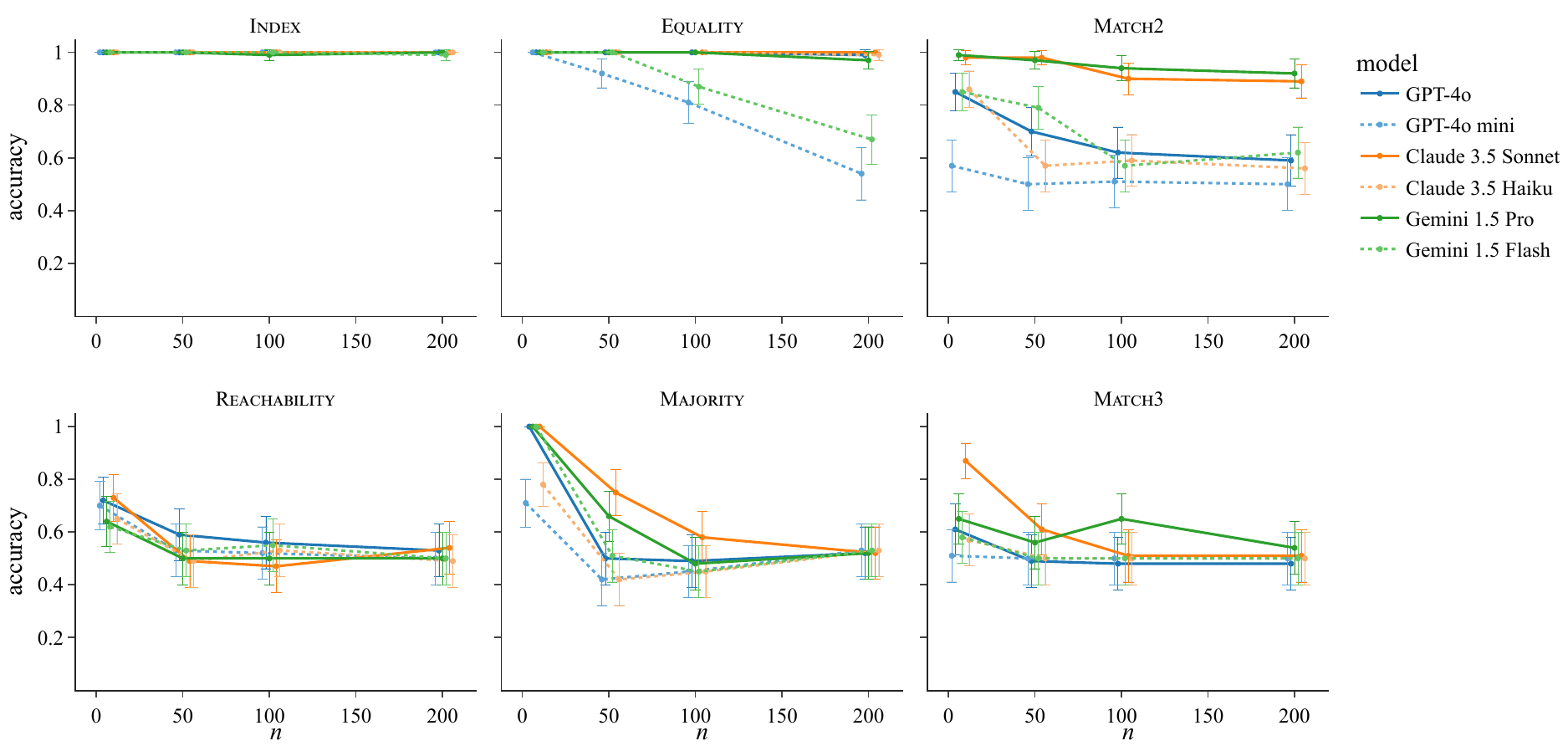}    
    \caption{BAPO-hard problems (bottom row) show much larger drops in accuracy compared to BAPO-easy problems (top row). Not even large LLMs can solve BAPO-hard problems at length 200 with an accuracy above random guessing.}
    \label{fig:results_no_scratchpad}
\end{figure}

Figure~\ref{fig:results_no_scratchpad} shows the accuracy of LLMs across six different tasks (see \Cref{app:additional_results} for additional problems). The top row shows three BAPO-easy problems and the bottom row three BAPO-hard problems (cf.\ Table~\ref{tab:bapo-results}). Performance is low or rapidly dropping for BAPO-hard problems; in particular, there is no LLM that performs well for all $n$. In contrast, most LLMs perform consistently well across all $n$ on BAPO-easy problems, with \textsc{Match2} appearing the hardest. We suspect that representational issues interfere in this setting, as the LLM needs perfect understanding of integers. 

Comparing models of different scales (solid vs.\ dashed lines), we can see that in line with the typical observations, larger models appear to perform better overall. However, even with increased scale, no model is able to avoid the degradation our BAPO framework predicts.

\subsection{Chain-of-thought reasoning helps on BAPO-hard problems}
\begin{figure}[tb]
    \centering
    \includegraphics[width=0.95\linewidth]{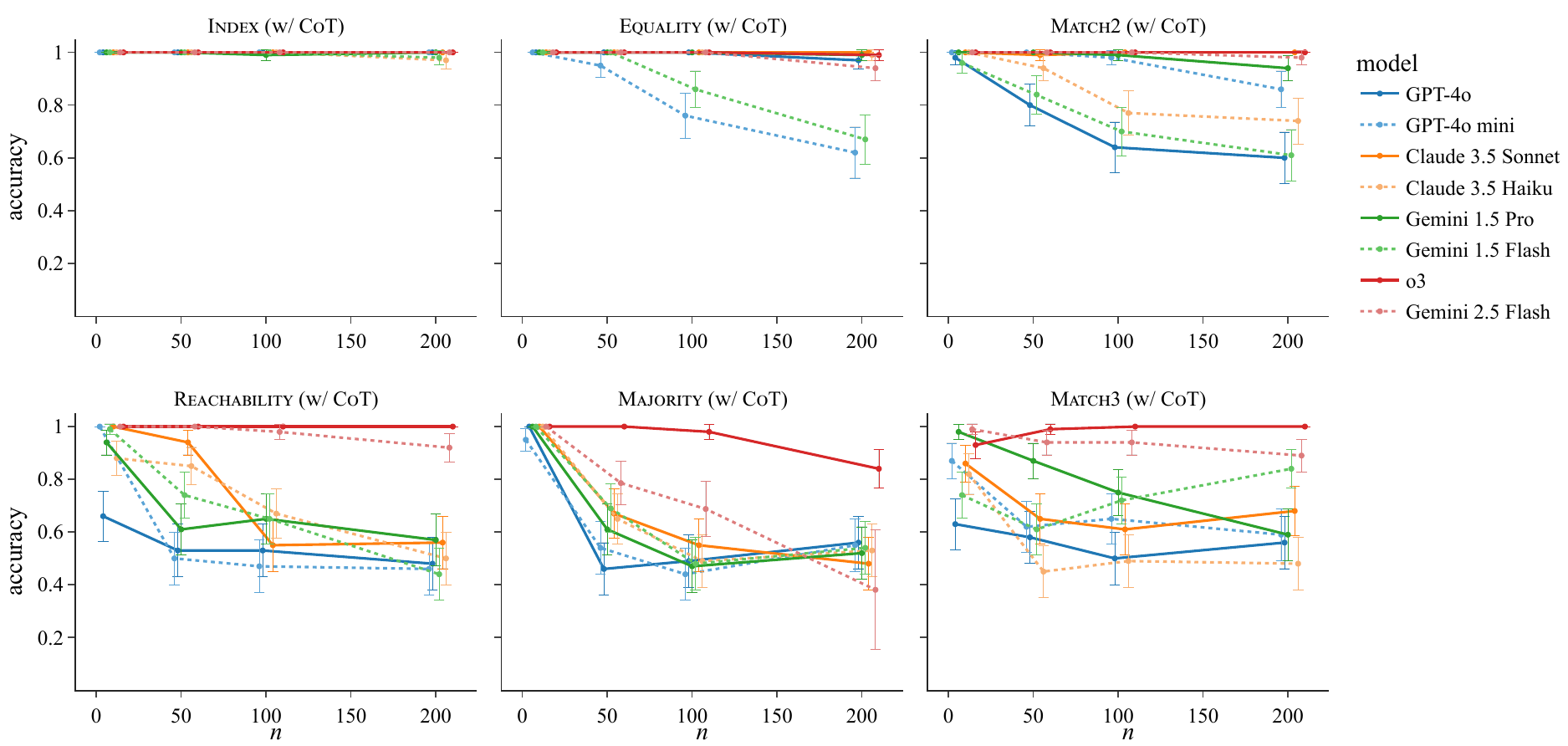}  
    \caption{Adding CoT can help LLMs do better on BAPO-hard problems, but substantial performance drops still occur with limited CoT budget (soft limit of 250 words for non-reasoning models). Without imposing a limit on their internal reasoning, o3 and, to a lesser extent, Gemini 2.5 Flash perform extremely well (see \Cref{app:cot-experiments} for their CoT token counts, often in the thousands).}
    \label{fig:results_with_scratchpad}
\end{figure}

Given how powerful BAPO-CoTs are in theory, the obvious question is whether CoT improves performance on BAPO-hard tasks. To test this, we prompted each LLM to perform CoT before producing the answer, with a soft limit of 250 words. We also tested o3~\cite{openai2025o3} and Gemini 2.5 Flash~\cite{google2025gemini25flash}, which use (potentially many) internal chain of thought tokens. As Figure~\ref{fig:results_with_scratchpad} shows, CoT modestly improves non-reasoning LLM performance on BAPO-hard problems for smaller input sizes $n$. The fact that issues still persist indicates that these LLMs may not be applying good low-bandwidth CoT procedures, or that they may require more reasoning tokens. Indeed, without the limit on CoT tokens, o3 and Gemini 2.5 Flash succeed on BAPO-hard problems. This is likely due to the much larger number of CoT tokens they use (over $10$k in some cases; see \Cref{app:cot-experiments}) and the fact that they are (presumably) trained to use CoT tokens effectively. 

\subsection{BAPO hardness in real-world tasks}
\begin{figure}[tb]
    \centering
    \includegraphics[width=0.95\linewidth]{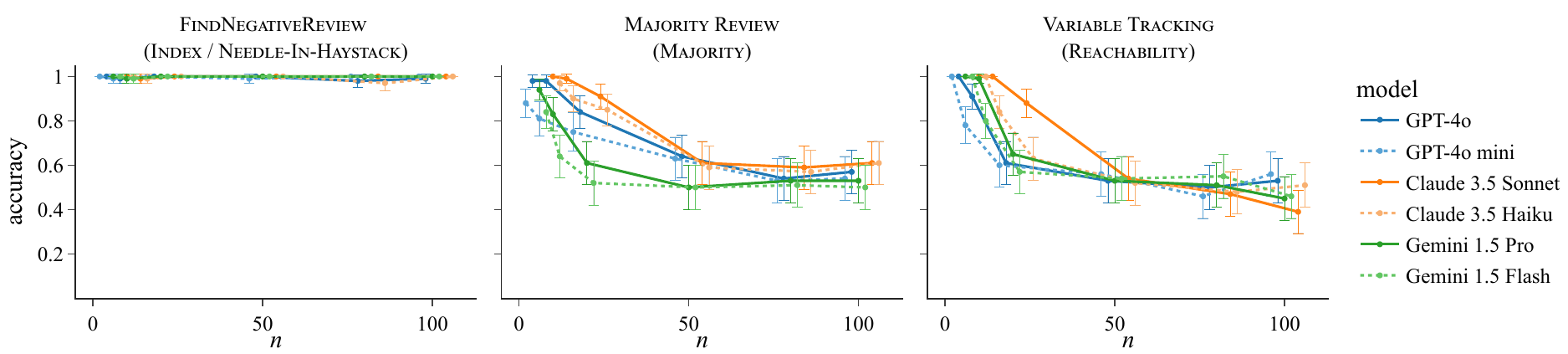}
    \caption{There is good evidence that BAPO-difficulty translates to real-world settings. LLMs can solve real-world tasks that contain BAPO-easy problems (left plot) with much greater accuracy than BAPO-hard problems (two plots on the right).}
    \label{fig:results_realworld}
\end{figure}
We finally turn to a set of experiments that examine real-world tasks corresponding to BAPO problems. For the first domain, inspired by the ZeroScrolls benchmark~\cite{shaham2023zeroscrolls}, we consider hotel reviews from the \textsc{Space} dataset~\cite{angelidis2021extractive} and either ask the LLM to find a negative review in a collection of positive reviews (analogous to the BAPO-easy \textsc{Index} problem) or decide whether the majority of reviews are positive (analogous to the BAPO-hard \textsc{Majority} problem). We ensure that a baseline LLM can determine the sentiment of each review. The second domain is programming, where we define two chains of assignments and ask whether the final variable has value \texttt{"a"} or \texttt{"b"}, extending the variable tracking task from the RULER benchmark~\cite{hsiehruler}. This task is a special case of \textsc{Reachability}, one of our BAPO-hard problems. See~\Cref{app:real_world_details} for experiment and data details.
The plots in Figure~\ref{fig:results_realworld} show that again, BAPO-hardness is a good predictor of LLM performance. 

\section{Related work}

Theoretically, limitations of transformers have been studied via communication complexity~\cite{sanford2024understanding,peng2024limitations}, circuit analysis~\cite{merrill2023parallelism,strobl2023average} and parallel computation frameworks~\cite{sanford2024understanding,sanford2024transformers}, among other methods. We contribute to this toolbox with the BAPO model,  which provides a natural way to study how causal attention exacerbates limits on information flow in LLM architectures. 

Among the best known results on transformer expressivity is that transformers are in log-space uniform $\mathsf{TC}^0$~\cite{merrill2023parallelism,strobl2023average}.~\citet{strobl2024formal} survey known theoretical results showing an inherent expressivity gap for transformers to recognize certain formal languages. 
For instance, \citet{hahn2020limitations} show that transformers with one-hot hard attention cannot solve \textsc{Parity} and \textsc{2Dyck}, while \citet{bhattamishra2024separations} show that a two-layer transformer can solve \textsc{Index}, \textsc{Equality}, and \textsc{Disjointness}. RASP~\cite{pmlr-v139-weiss21a} offers a higher-level way to establish similar upper bounds by constructing a transformer as a program, although without causal masking. Some expressivity analyses depend on the size of the transformer: \citet{sanford2024understanding,fagnou2024chain} derive logarithmic lower bounds on the number of transformer layers required for graph and entity tracking tasks (related to similar bounds for map-reduce~\cite{roughgarden2018shuffles}), while \citet{sanford2023representational} show that a single transformer layer can efficiently reason over pairs of tokens (\textsc{Match2}) but not triplets (\textsc{Match3}).
However, transformer expressivity does not always align with empirical observations of LLM performance. 
Rather than characterizing theoretical expressivity,
the BAPO model captures high-level information flow, which we hypothesize underlies many problem-solving failures in practice.

Even when a task is, in principle, solvable by a transformer, it might still be hard to learn~\cite{chiang2022overcoming}. \citet{edelman2022inductive} argue that attention in transformers tends to represent dependencies among only a small number of tokens, causing failures for global problems.
\citet{hahn2024sensitive} show that although \textsc{Parity} is representable, it is hard to learn a length-generalizing solution because the training loss landscape is highly sensitive to all the inputs. Similarly,~\citet{thomm2024limits} show that LLMs are data inefficient over compositional problems and \citet{liu2023transformers} argue that transformers learn shortcuts to simulating finite-state automata. 
Our BAPO model captures this representability-learnability gap by positing that although architecturally the communication bandwidth from prefix tokens of a transformer can be large,
the effective communication bandwidth in LLMs is very limited. 

Lastly, the success of CoT~\citep{wei2022chain} has also been analyzed theoretically. \citet{merrill2024expressive} showed that CoT makes transformers Turing-complete, and \citet{GuhaoCoT2023} argue that CoT enables transformers to solve dynamic programming problems that bounded-depth transformers cannot. 
Our theory shows that CoT has another benefit, as it dramatically lowers bandwidth requirements: any Turing machine can be simulated by a constant-bandwidth BAPO-CoT. 

\section{Discussion}
\label{sec:discussion}
Our finding that the effective bandwidths of LLMs are small despite their massive size suggests that simply adding more layers, attention heads, or embedding dimensions might not translate directly to higher BAPO bandwidth. We still lack a full understanding of what causes this severely limited bandwidth. It might even be a feature rather than a bug: low effective bandwidth may aid flexible and generalizable next-token prediction, and there could be a tradeoff between generalization ability (required for natural language) and exact input representation (required for global reasoning).

Beyond the precise mathematical framework of BAPOs, applying the intuition that lower-bandwidth problems are easier for LLMs can help us understand their successes. For instance, many in-context learning tasks can be solved by a $k$-nearest neighbor approach, matching a new instance to a small number of in-context examples. This is a procedure whose bandwidth requirement does not scale with the number of in-context examples, but whose accuracy does; this provides a possible explanation for the success of LLMs on such tasks. As another example, needle-in-a-haystack tasks commonly used to benchmark LLMs also require a small amount of cross-stream communication.  

\paragraph{Lowering bandwidth.} Our model also deepens the understanding of CoT by proving that it reduces the communication requirements of problems---although the number of reasoning steps can be impractically large. This motivates future work to take better advantage of the bandwidth-lowering benefits of CoT or directly optimizing for low bandwidth as part of the training objective when fine-tuning on reasoning chains. Our work also motivates the investigation of methods beyond inference time scaling for reducing the communication burden of problems. For some problems, pre-processing such as simplifying inputs~\cite{yangagentoccam} or retrieval~\cite{lewis2020retrieval} may reduce communication load.

\paragraph{Refining BAPO.} Another future direction is in exploring variations of the BAPO model to help it align more closely with the behavior of LLMs. For example, the induction heads task~\cite{olsson2022incontextlearninginductionheads,sanford2024transformers} is thought to be an important mechanism for transformers, but appears challenging for BAPOs. However, we show in \Cref{app:variants} that a BAPO with multiple layers of score-based attention can perform induction heads, while preserving the BAPO-hardness of \textsc{Reachability}, \textsc{Majority}, and \textsc{Match}$3_n$. This suggests these problems are fundamentally hard, and that others lie between the basic BAPO model and LLM capabilities. Future work can explore how these and other BAPO variants affect BAPO-hardness and which problems are fundamentally high-bandwidth. 

% For instance, one limitation of BAPOs that seems overly restrictive is an inability to link information across neighboring tokens. This makes the induction heads task, thought to be an important mechanism in LLMs~\cite{olsson2022incontextlearninginductionheads}, challenging for BAPOs (we conjecture it to be BAPO-$\Sigma$-hard). To address this limitation, we could augment the BAPO model with another round of attention, modeling attention in higher layers. That is, we would have $g_1$ and $g_2$, where $g_1$ functions like the standard BAPO attention function, while $g_2$ receives as input the tokens $G_1$ selected by $g_1$ (as well as the inputs to $g_1$). This would allow $g_2$ to attend to a token that follows another, implementing an induction head. We conjecture that with a constant number of additional attention passes, problems like \textsc{Reachability} and \textsc{Match3}$_n$ would remain hard. This requirement of a second layer of processing for induction heads is analogous to known results that the problem is efficiently solvable by two-layer but not one-layer transformers~\cite{bietti2023memory,sanford2024onelayertransformersfailsolve} (although, as we have seen, BAPOs in general are a good model for problem-solving by LLMs with many layers). 

\paragraph{Limitations.}
See \Cref{sec:bapo} for discussion of ways in which the BAPO model does not faithfully represent transformer computation. It also does not capture all failure modes of LLMs, such as tokenization-driven errors; thus, BAPO-easiness is not a guarantee that LLMs can solve a task. We also do not know the root cause of effective bandwidth limits on LLMs. Finally, many of our lower bounds are loose, and there are many problems whose BAPO bandwidths have yet to be explored.

\section{Conclusions}

We introduced the BAPO model of bandwidth-limited computation, designed to quantify and analyze hypothesized limits on the cross-stream communication of transformer-based LLMs. On the theoretical side, we categorize a variety of problems as BAPO-easy, requiring only constant bandwidth, and BAPO-hard, requiring super-constant bandwidth. This dividing line aligns well with problems that modern trillion-parameter-scale LLMs consistently struggle with, supporting the hypothesis that they are constrained in their internal communication and indicating that their effective bandwidth for problem-solving is a small constant. For practitioners, the BAPO framework offers a new lens through which they can view their LLM tasks, possibly opting for mitigation strategies such as tool calling and reasoning in cases where they suspect failures due to BAPO-hardness. Understanding that limited communication bandwidth is at the heart of why LLMs fail to reason globally also unlocks a new set of directions for future work, such as different architectures, reasoning algorithms, or training paradigms.  

\section*{Acknowledgments}
We are grateful to Doug Burger, Philippe Laban, Suriya Gunasekar, Daniel Hsu, Besmira Nushi, Clayton Sanford, Siddharth Suri, Dawen Liang, Harald Steck, Chinmaya Kausik, Nathan Kallus, the MSR AI Interaction and Learning group, and the Netflix Machine Learning Inference Research group for helpful discussions and feedback.

\bibliographystyle{plainnat}
\bibliography{references}

@inproceedings{abbefar,
	title        = {How Far Can Transformers Reason? {T}he Globality Barrier and Inductive Scratchpad},
	author       = {Abbe, Emmanuel and Bengio, Samy and Lotfi, Aryo and Sandon, Colin and Saremi, Omid},
	year         = 2024,
	booktitle    = {NeurIPS}
}

@article{angelidis2021extractive,
	title        = {Extractive opinion summarization in quantized transformer spaces},
	author       = {Angelidis, Stefanos and Amplayo, Reinald Kim and Suhara, Yoshihiko and Wang, Xiaolan and Lapata, Mirella},
	year         = 2021,
	journal      = {TACL},
	volume       = 9
}

@misc{anthropic2024claude3,
	title        = {The {Claude} 3 Model Family: {Opus, Sonnet, Haiku}},
	author       = {{Anthropic}},
	year         = 2024,
	note         = {Accessed: 2025-04-17},
	howpublished = {\url{https://www.anthropic.com/news/claude-3-family}}
}

@article{beltagy2020longformer,
  title={Longformer: The long-document transformer},
  author={Beltagy, Iz and Peters, Matthew E and Cohan, Arman},
  journal={arXiv preprint arXiv:2004.05150},
  year={2020}
}

@inproceedings{bhattamishra2020ability,
	title        = {On the Ability and Limitations of Transformers to Recognize Formal Languages},
	author       = {Bhattamishra, Satwik and Ahuja, Kabir and Goyal, Navin},
	year         = 2020,
	booktitle    = {EMNLP}
}

@inproceedings{bhattamishra2024separations,
	title        = {Separations in the representational capabilities of transformers and recurrent architectures},
	author       = {Bhattamishra, Satwik and Hahn, Michael and Blunsom, Phil and Kanade, Varun},
	year         = 2024,
	booktitle    = {NeurIPS}
}

@inproceedings{bietti2023memory,
 author = {Bietti, Alberto and Cabannes, Vivien and Bouchacourt, Diane and Jegou, Herve and Bottou, Leon},
 booktitle = {NeurIPS},
 title = {Birth of a Transformer: A Memory Viewpoint},
 year = {2023}
}

@inproceedings{chiang2022overcoming,
	title        = {Overcoming a Theoretical Limitation of Self-Attention},
	author       = {Chiang, David and Cholak, Peter},
	year         = 2022,
	booktitle    = {ACL}
}

@article{child2019generating,
  title={Generating long sequences with sparse transformers},
  author={Child, Rewon and Gray, Scott and Radford, Alec and Sutskever, Ilya},
  journal={arXiv preprint arXiv:1904.10509},
  year={2019}
}

@inproceedings{dziri2024faith,
	title        = {Faith and Fate: Limits of Transformers on Compositionality},
	author       = {Dziri, Nouha and Lu, Ximing and Sclar, Melanie and Li, Xiang (Lorraine) and Jiang, Liwei and Lin, Bill Yuchen and Welleck, Sean and West, Peter and Bhagavatula, Chandra and Le Bras, Ronan and Hwang, Jena and Sanyal, Soumya and Ren, Xiang and Ettinger, Allyson and Harchaoui, Zaid and Choi, Yejin},
	year         = 2023,
	booktitle    = {NeurIPS}
}

@inproceedings{ebrahimi2024your,
	title        = {Your Context Is Not an Array: Unveiling Random Access Limitations in Transformers},
	author       = {Ebrahimi, Mohammad Reza and Panchal, Sunny and Memisevic, Roland},
	year         = 2024,
	booktitle    = {COLM}
}

@inproceedings{edelman2022inductive,
	title        = {Inductive biases and variable creation in self-attention mechanisms},
	author       = {Edelman, Benjamin L and Goel, Surbhi and Kakade, Sham and Zhang, Cyril},
	year         = 2022,
	booktitle    = {ICML}
}

@article{elhage2021mathematical,
   title={A Mathematical Framework for Transformer Circuits},
   author={Elhage, Nelson and Nanda, Neel and Olsson, Catherine and Henighan, Tom and Joseph, Nicholas and Mann, Ben and Askell, Amanda and Bai, Yuntao and Chen, Anna and Conerly, Tom and DasSarma, Nova and Drain, Dawn and Ganguli, Deep and Hatfield-Dodds, Zac and Hernandez, Danny and Jones, Andy and Kernion, Jackson and Lovitt, Liane and Ndousse, Kamal and Amodei, Dario and Brown, Tom and Clark, Jack and Kaplan, Jared and McCandlish, Sam and Olah, Chris},
   year={2021},
   journal={Transformer Circuits Thread},
   note={\url{https://transformer-circuits.pub/2021/framework/index.html}}
}

@inproceedings{fagnou2024chain,
	title        = {Chain and Causal Attention for Efficient Entity Tracking},
	author       = {Fagnou, Erwan  and Caillon, Paul  and Delattre, Blaise  and Allauzen, Alexandre},
	year         = 2024,
	booktitle    = {EMNLP},
}

@inproceedings{GuhaoCoT2023,
	title        = {Towards Revealing the Mystery behind Chain of Thought: A Theoretical Perspective},
	author       = {Feng, Guhao and Zhang, Bohang and Gu, Yuntian and Ye, Haotian and He, Di and Wang, Liwei},
	year         = 2023,
	booktitle    = {NeurIPS}
}

@misc{google2025gemini25flash,
  author       = {{Google}},
  title        = {Gemini 2.5 Flash},
  howpublished = {\url{https://deepmind.google/technologies/gemini/flash/}},
  year         = {2025},
  note         = {Accessed: 2025-05-14}
}

@article{hahn2020limitations,
	title        = {Theoretical Limitations of Self-Attention in Neural Sequence Models},
	author       = {Hahn, Michael},
	year         = 2020,
	journal      = {TACL},
	volume       = 8
}

@inproceedings{hahn2024sensitive,
	title        = {Why are Sensitive Functions Hard for Transformers?},
	author       = {Hahn, Michael  and Rofin, Mark},
	year         = 2024,
	booktitle    = {ACL}
}

@inproceedings{hsiehruler,
	title        = {{RULER}: What’s the Real Context Size of Your Long-Context Language Models?},
	author       = {Hsieh, Cheng-Ping and Sun, Simeng and Kriman, Samuel and Acharya, Shantanu and Rekesh, Dima and Jia, Fei and Ginsburg, Boris},
	year         = 2024,
	booktitle    = {COLM}
}

@inproceedings{lewis2020retrieval,
	title        = {Retrieval-Augmented Generation for Knowledge-Intensive {NLP} Tasks},
	author       = {Lewis, Patrick and Perez, Ethan and Piktus, Aleksandra and Petroni, Fabio and Karpukhin, Vladimir and Goyal, Naman and K\"{u}ttler, Heinrich and Lewis, Mike and Yih, Wen-tau and Rockt\"{a}schel, Tim and Riedel, Sebastian and Kiela, Douwe},
	year         = 2020,
	booktitle    = {NeurIPS}
}

@inproceedings{liu2023transformers,
	title        = {Transformers Learn Shortcuts to Automata},
	author       = {Bingbin Liu and Jordan T. Ash and Surbhi Goel and Akshay Krishnamurthy and Cyril Zhang},
	year         = 2023,
	booktitle    = {ICLR}
}

@article{merrill2023parallelism,
	title        = {The Parallelism Tradeoff: Limitations of Log-Precision Transformers},
	author       = {Merrill, William and Sabharwal, Ashish},
	year         = 2023,
	journal      = {TACL},
	volume       = 11
}

@inproceedings{merrill2024expressive,
	title        = {The Expressive Power of Transformers with Chain of Thought},
	author       = {Merrill, William and Sabharwal, Ashish},
	year         = 2024,
	booktitle    = {ICLR}
}

@article{olsson2022incontextlearninginductionheads,
  title={In-context learning and induction heads},
  author={Catherine Olsson and Nelson Elhage and Neel Nanda and Nicholas Joseph and Nova DasSarma and Tom Henighan and Ben Mann and Amanda Askell and Yuntao Bai and Anna Chen and Tom Conerly and Dawn Drain and Deep Ganguli and Zac Hatfield-Dodds and Danny Hernandez and Scott Johnston and Andy Jones and Jackson Kernion and Liane Lovitt and Kamal Ndousse and Dario Amodei and Tom Brown and Jack Clark and Jared Kaplan and Sam McCandlish and Chris Olah},
  journal={arXiv preprint arXiv:2209.11895},
  year={2022}
}

@article{openai2024gpt4technicalreport,
	title        = {{GPT-4} Technical Report},
	author       = {OpenAI},
	year         = 2023,
	journal      = {arXiv preprint arxiv:2303.08774}
}

@misc{openai2025o3,
  author       = {OpenAI},
  title        = {Introducing o3 and o4-mini},
  year         = {2025},
  howpublished = {\url{https://openai.com/index/introducing-o3-and-o4-mini/}},
  note         = {Accessed: 2025-05-14}
}

@inproceedings{peng2024limitations,
	title        = {On limitations of the transformer architecture},
	author       = {Peng, Binghui and Narayanan, Srini and Papadimitriou, Christos},
	year         = 2024,
	booktitle    = {COLM}
}

@inproceedings{pfau2024let,
	title        = {Let's Think Dot by Dot: Hidden Computation in Transformer Language Models},
	author       = {Pfau, Jacob and Merrill, William and Bowman, Samuel R},
	year         = 2024,
	booktitle    = {COLM}
}

@article{roughgarden2016communication,
	title        = {Communication complexity (for algorithm designers)},
	author       = {Roughgarden, Tim},
	year         = 2016,
	journal      = {Foundations and Trends in Theoretical Computer Science},
	volume       = 11
}

@article{roughgarden2018shuffles,
	title        = {Shuffles and circuits (on lower bounds for modern parallel computation)},
	author       = {Roughgarden, Tim and Vassilvitskii, Sergei and Wang, Joshua R},
	year         = 2018,
	journal      = {JACM},
	volume       = 65
}

@article{roy2021efficient,
  title={Efficient content-based sparse attention with routing transformers},
  author={Roy, Aurko and Saffar, Mohammad and Vaswani, Ashish and Grangier, David},
  journal={TACL},
  volume={9},
  year={2021}
}

@inproceedings{sanford2023representational,
	title        = {Representational Strengths and Limitations of Transformers},
	author       = {Sanford, Clayton and Hsu, Daniel J and Telgarsky, Matus},
	year         = 2023,
	booktitle    = {NeurIPS}
}

@article{nerode1958linear,
  title={Linear automaton transformations},
  author={Nerode, Anil},
  journal={Proceedings of the American Mathematical Society},
  volume={9},
  year={1958}
}

@article{sanford2024onelayertransformersfailsolve,
  title={One-layer transformers fail to solve the induction heads task},
  author={Sanford, Clayton and Hsu, Daniel and Telgarsky, Matus},
  journal={arXiv preprint arXiv:2408.14332},
  year={2024}
}

@inproceedings{sanford2024transformers,
	title        = {Transformers, parallel computation, and logarithmic depth},
	author       = {Clayton Sanford and Daniel Hsu and Matus Telgarsky},
	year         = 2024,
	booktitle    = {ICML}
}

@inproceedings{shaham2023zeroscrolls,
	title        = {{ZeroSCROLLS}: A Zero-Shot Benchmark for Long Text Understanding},
	author       = {Shaham, Uri and Ivgi, Maor and Efrat, Avia and Berant, Jonathan and Levy, Omer},
	year         = 2023,
	booktitle    = {EMNLP}
}

@article{strobl2023average,
  title={Average-hard attention transformers are constant-depth uniform threshold circuits},
  author={Strobl, Lena},
  journal={arXiv preprint arXiv:2308.03212},
  year={2023}
}

@article{strobl2024formal,
	title        = {What formal languages can transformers express? {A} survey},
	author       = {Strobl, Lena and Merrill, William and Weiss, Gail and Chiang, David and Angluin, Dana},
	year         = 2024,
	journal      = {TACL},
	volume       = 12
}

@inproceedings{sukhbaatar2019adaptive,
  title={Adaptive Attention Span in Transformers},
  author={Sukhbaatar, Sainbayar and Grave, {\'E}douard and Bojanowski, Piotr and Joulin, Armand},
  booktitle={ACL},
  year={2019}
}

@article{geminiteam2024geminifamilyhighlycapable,
	title        = {Gemini: a family of highly capable multimodal models},
	author       = {Google},
	year         = 2023,
	journal      = {arXiv preprint arXiv:2312.11805}
}

@inproceedings{thomm2024limits,
	title        = {Limits of Transformer Language Models on Learning to Compose Algorithms},
	author       = {Thomm, Jonathan and Camposampiero, Giacomo and Terzic, Aleksandar and Hersche, Michael and Sch\"{o}lkopf, Bernhard and Rahimi, Abbas},
	year         = 2024,
	booktitle    = {NeurIPS},
}

@inproceedings{wei2022chain,
	title        = {Chain of Thought Prompting Elicits Reasoning in Large Language Models},
	author       = {Wei, Jason and Wang, Xuezhi and Schuurmans, Dale and Bosma, Maarten and Ichter, Brian and Xia, Fei and Chi, Ed H. and Le, Quoc V. and Zhou, Denny},
	year         = 2022,
	booktitle    = {NeurIPS}
}

@inproceedings{pmlr-v139-weiss21a,
	title        = {Thinking Like Transformers},
	author       = {Weiss, Gail and Goldberg, Yoav and Yahav, Eran},
	year         = 2021,
	booktitle    = {ICML}
}

@inproceedings{yangagentoccam,
	title        = {{AgentOccam}: A Simple Yet Strong Baseline for {LLM}-Based Web Agents},
	author       = {Yang, Ke and Liu, Yao and Chaudhary, Sapana and Fakoor, Rasool and Chaudhari, Pratik and Karypis, George and Rangwala, Huzefa},
	year         = 2025,
	booktitle    = {ICLR}
}

@inproceedings{sanford2024understanding,
	title        = {Understanding Transformer Reasoning Capabilities via Graph Algorithms},
	author       = {Sanford, Clayton and Fatemi, Bahare and Hall, Ethan and Tsitsulin, Anton and Kazemi, Mehran and Halcrow, Jonathan and Perozzi, Bryan and Mirrokni, Vahab},
	year         = 2024,
	booktitle    = {NeurIPS}
}

\newpage
\appendix

\section{Full problem definitions}\label{app:problems}
\begin{definition}
    \textsc{Disjointness}: $ \{0, 1\}^n \times \{0, 1\}^n \rightarrow \{0, 1\}$ is the problem of finding if two sets represented as bit-strings are disjoint, with \textsc{Disjointness}$(x, y) = \bigwedge_{i \in [n]} \lnot (x_i \land y_i)$. We encode a \textsc{Disjointness} instance $(x,y)$ with the string $x| y$ over $\Sigma = \{0, 1, |\}$.
\end{definition}

\begin{definition}
    \textsc{Equality}: $\{0, 1\}^n \times \{0, 1\}^n \rightarrow \{0, 1\}$ is the problem of finding if two bit-strings are equal, with \textsc{Equality}$(x, y) = 1[x = y]$. We encode an \textsc{Equality} instance $(x,y)$ with the string $x| y$ over $\Sigma = \{0, 1, |\}$.
\end{definition}

\begin{definition}
    \textsc{Index}: $\{0, 1\}^n \times [n] \rightarrow \{0, 1\}$ is the problem of identifying the bit at a given index into the input, with \textsc{Index}$(x, i) = x_i$. We encode an \textsc{Index} instance $(x,i)$ with the string $xi$ over $\Sigma = \{0, 1\}\cup [n]$.
\end{definition}

\begin{definition}
    \textsc{Reachability}: $([n] \times [n])^m \times [n] \times [n] \rightarrow \{0, 1\}$ is the problem of determining if there is path from $s$ to $t$ in a directed graph $G$ with $n$ nodes and $m$ edges. To encode the problem with perfect tokenization, let $\Sigma = [n]\times [n] \cup [n]$ where the token $(i, j) \in \Sigma$ represents edge $(i, j)$ and the integer tokens represent nodes. An instance of \textsc{Reachability} is specified by the edge list of $G$ in arbitrary order followed by the nodes $s$ and $t$. 
\end{definition}

\begin{definition}
    \textsc{Majority}: $\{0, 1\}^n \rightarrow \{0, 1\}$ is the problem of determining whether the input has strictly more ones than zeros, with \textsc{Majority}$(x) = 1\left[\sum_{i \in [n]}x_i > n / 2\right]$.
\end{definition}

\begin{definition}
    Given input $x \in \mathbb Z_m^n $, \textsc{Match3}$_n$ is the problem of determining whether there are some $i,j\in [n]$ such that $x_n + x_{i} + x_{j} \equiv 0$ (mod $m$). Additionally, \textsc{Match2}$_n$ is the problem of determining whether there is some $i \in [n]$ such that $x_n + x_i \equiv 0$ (mod $m$). We encode instances of these problems with the string $x$ over $\Sigma = \mathbb Z_m$.
\end{definition}

\begin{definition}
  \textsc{Unique}: $\Sigma^* \rightarrow \Sigma\cup \{\emptyset\}$ is the problem of identifying any unique token in the input. That is, \textsc{Unique}$(x_1\dots x_n) = x_i$ s.t.\ $\sum_{j = 1}^n1[x_j = x_i] = 1$, or $\emptyset$ if no such $x_i$ exists.
\end{definition}

\begin{definition}
    \textsc{SetDiff}: $\Sigma^* \times \Sigma^* \rightarrow \Sigma \cup \{\emptyset\}$ is the problem of identifying any token in the first part of the input that does not appear in the second part. That is, \textsc{SetDiff}$(c_1\dots c_n, d_1\dots d_m) = c_i$ for some $i \in [n]$ such that $c_i \ne d_j$ for all $j \in [m]$, or $\emptyset$ if no such element exists. We encode a \textsc{SetDiff} instance $(c, d) \in \Sigma^* \times \Sigma^*$ with the string $c|d$ over $\Sigma \cup \{|\}$.
\end{definition}

\section{Proofs}\label{app:proofs}

\begin{proof}[Proof of \Cref{thm:one-token}]
     For \textsc{Index} instances given as $s = xi$, the idea is simple: we set $g(s_{k+1}\dots s_n, k, s_j, j) = 1$ if and only if $j = s_n$. The suffix oracle returns whatever token is attended to---unless the index is in the suffix (i.e., $s_n \ge k+1$), in which case the suffix oracle immediately returns the bit at position $s_n$, which lies in the suffix and is therefore visible without using attention. No output from $f$ is required.
    
    For a \textsc{Disjointness} instance encoded as $s = x|y$, the key is that we only need a single counterexample bit where $x_i = y_i = 1$ to conclude that $x$ and $y$ are not disjoint. If no such counterexample exists, then they are disjoint. We can construct a $(1,1)$-BAPO using this fact. The prefix oracle outputs $0$ (``no counterexample found'') if (a) the token $|$ is not in the prefix or (b) if the token $|$ is in the prefix and the starting bits of $y$ present in the prefix are disjoint with the corresponding bits of $x$. If there is a counterexample bit in $y$ visible in the prefix, the prefix oracle outputs 1. If the suffix oracle receives a 1 from $f$, it always outputs 0 indicating $x$ and $y$ are not disjoint. Meanwhile, the attention function attends to any prefix bits that would be a counterexample to the disjointness of $x$ and $y$; i.e., $g(s_{k+1}\dots s_n, k, s_i, i)= 1$ if and only if either (a) $|$ is not in the suffix and $i' = i+ (k + |s_{k+1}\dots s_n| - 1)/2 +1 \ge k+1$ (i.e., the token we need to compare to $s_i$ is in the suffix) and $s_{i'} = s_i=1$ or (b) $|$ is in the suffix and $s_{i'} = s_i =1$. If the attended token set $G$ is nonempty, then $x$ and $y$ are not disjoint and the suffix oracle outputs 0. If the attended token set is empty and $|$ is in the suffix, the suffix oracle can check the portion of $x$ present in the suffix against the corresponding bits of $y$. If no counterexample bits are found (by $f$, $g$, or $h$), then none exists and the suffix oracle outputs 1. 

    For \textsc{Equality}, the same idea applies: we only need one counterexample $x_i \ne y_i$ to conclude that $x \ne y$. The construction is precisely the same as for \textsc{Disjointness}, except we look for counterexamples where $x_i \ne y_i$ rather than $x_i = y_i = 1$.
\end{proof}

\begin{proof}[Proof of \Cref{thm:reachability-lb}]
    Suppose such a BAPO exists for a contradiction, and let $f$ and $g$ be its prefix oracle and attention function, with bandwidths $a = o(m^{1/c} \log m)$ and $b = o(m^{1 - 2/c})$, respectively. We will construct a family of prefixes and suffixes where the prefixes contain the entire graph and the suffixes contain the nodes $s$ and $t$, taking care that the graphs in different prefixes are mostly identical to saturate attention.

\begin{figure}[tb]
    \centering
\begin{tikzpicture}[
    every node/.style={circle, draw, minimum size=1mm, font=\tiny},
    every edge/.style={draw, -Stealth},
    label distance=-2mm
]

% Define k
\def\k{3}
% Define parameters for node distance and shift
\def\nodedistance{7mm}
\def\nodeshift{-3mm}
\def\labeldistance{2mm}

% Define nodes
\foreach \i in {1,...,\k} {
    \node (s\i) at (0, -\i*\nodedistance) [label=above:$s_\i$] {};
    \node (u\i1) [right=of s\i, xshift=\nodeshift, label={[yshift=-.5mm]above:$u_{\i1}$}] {};
    \node (u\i2) [right=of u\i1, xshift=\nodeshift, label={[yshift=-.5mm]above:$u_{\i2}$}] {};
    \node (dots\i) [right=of u\i2, xshift=\nodeshift, draw=none] {$\dots$};
    \node (u\i7) [right=of dots\i, xshift=\nodeshift, label={[yshift=-.5mm]above:$u_{\i7}$}] {};
    \node (t\i) [right=of u\i7, xshift=\nodeshift, label=above:$t_\i$] {};
}

% Define edges
\foreach \i in {1,...,\k} {
    \draw (s\i) edge (u\i1);
    \draw (u\i1) edge (u\i2);
    \draw (u\i2) edge (dots\i);
    \draw (dots\i) edge (u\i7);
    \draw (u\i7) edge (t\i);
}

\end{tikzpicture}\qquad\qquad
\begin{tikzpicture}[
    every node/.style={circle, draw, minimum size=1mm, font=\tiny},
    every edge/.style={draw, -Stealth},
    label distance=-2mm
]

% Define k
\def\k{3}
% Define parameters for node distance and shift
\def\nodedistance{7mm}
\def\nodeshift{-3mm}
\def\labeldistance{2mm}

% Define nodes
\foreach \i in {1,...,\k} {
    \node (s\i) at (0, -\i*\nodedistance) [label=above:$s_\i$] {};
    \node (u\i1) [right=of s\i, xshift=\nodeshift, label={[yshift=-.5mm]above:$u_{\i1}$}] {};
    \node (u\i2) [right=of u\i1, xshift=\nodeshift, label={[yshift=-.5mm]above:$u_{\i2}$}] {};
    \node (dots\i) [right=of u\i2, xshift=\nodeshift, draw=none] {$\dots$};
    \node (u\i7) [right=of dots\i, xshift=\nodeshift, label={[yshift=-.5mm]above:$u_{\i7}$}] {};
    \node (t\i) [right=of u\i7, xshift=\nodeshift, label=above:$t_\i$] {};
}

% Define edges
\foreach \i in {1,...,\k} {
    \draw (s\i) edge (u\i1);
    \draw (u\i2) edge (dots\i);
    \draw (dots\i) edge (u\i7);
    \draw (u\i7) edge (t\i);
}
\draw (u11) edge (u22);
\draw (u21) edge (u32);
\draw (u31) edge (u12);

\end{tikzpicture}

    \caption{Left: the graph $P$ from the proof of \Cref{thm:reachability-lb} for $p = c= 3$, so $n = p^c = 27$ and $m = p^c - p = 24$. Right: $P$ with the permutation $\pi = 231$ applied to the targets of the second-layer edges. Applying any non-identity permutation to a single layer changes the connectivity of at least one $s$--$t$ pair.}
    \label{fig:path-graph}
\end{figure}
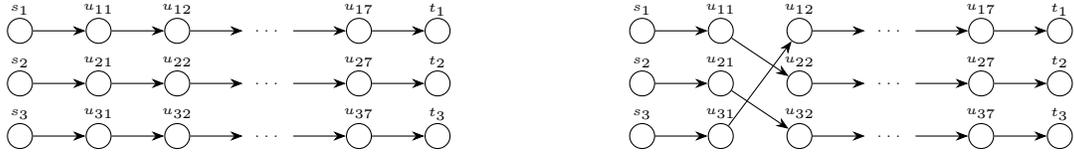

    Let $n = p^c$ for some integer $p$ for simplicity (if $n$ is not expressible as a $c$th power of an integer, the argument would involve various ceilings or floors, but these do not affect the asymptotics). Let $P$ be the graph consisting of $p$ disjoint directed paths with start nodes $s_1, \dots, s_p$ and target nodes $t_1, \dots, t_p$, where each path has exactly $p^{c-1}$ nodes. Then, for each $i \in [p]$ and $j \in [p^{c-1} - 2]$, let $u_{ij}$ be the unique node at distance $j$ from $s_i$. See \Cref{fig:path-graph} for a visualization of this graph with $p = c = 3$. We will keep these node labels fixed but modify $P$ by permuting edges based on the attention function $g$ to render attention useless. Note that we can apply a permutation $\pi \in S_p$ to the target nodes of the edges departing from $u_{1j}, \dots, u_{pj}$ while maintaining the property that $P$ is a disjoint collection of length $p^{c-1}$ paths each starting from an $s$ node and ending at a $t$ node, although the permutation changes which $s$--$t$ pairs are connected (see \Cref{fig:path-graph}, right). To construct our shared prefixes that saturate the attention function $g$, we will need to take care to place edges at consistent indices in the prefixes. To this end, we fix the edge order to first list the edges departing from $s_1, \dots, s_p$, then $u_{11}, \dots, u_{p1}$, then $u_{12}, \dots, u_{p2}$, etc. Call this the canonical edge order and let $I(u)$ denote the index in the canonical order where the edge departing $u$ is placed. Note that the length of the full edge list is $m = p^c - p$. Given these observations, construct the shared prefix graph $P^*$ as follows:
        \begin{enumerate}
        \item Initialize $P^*$ to have the same set of nodes as $P$. Initialize $S$ to be the set of all $s_i$ and $u_{ij}$ nodes, which will store the current set of nodes that still need an outgoing edge.
        \item For each pair $(i, j) \in [p]\times [p]$:
        \begin{enumerate}
            \item For $\ell = 1, \dots, b$:
            \begin{enumerate}
                \item If there is some node $u$ in $S$ and some node $v$ in the layer to the right of $u$ such that $g(s_i t_j, m, (u, v), I(u)) = 1$ and $v$ has in-degree 0 in $P^*$: add $(u, v)$ to $P^*$ and remove $u$ from $S$.\\\emph{Check every feasible edge we could add; if $g$ wants to attend to any edge $(u, v)$, add it to $P^*$.}
                \item Else: exit the inner for loop.\\\emph{If there are no remaining edges $g$ would attend to given suffix $s_it_j$, then we can stop, as attention is now saturated for this suffix.}
            \end{enumerate}
        \end{enumerate}
        \item Let $E^*$ be the set of edges in $P^*$ at this point in the algorithm. These will be shared among all prefixes to saturate attention. To complete the paths in $P^*$ arbitrarily, connect each node with out-degree 0 (that is not a $t$ node) to the first node with in-degree 0 in the next layer.
    \end{enumerate}

    This procedure produces a graph $P^*$ which can be reached by starting with $P$ and permuting the targets of the edges in each layer by some $\pi$, since each edge we add connects a node with current out-degree 0 to a node in the next layer with current in-degree 0. That is, $P^*$ is also a disjoint collection of $p$ paths of length $p^{a-1}$ each connecting some $s_i$ to some $t_j$. We have placed edges in $P^*$ such that for any suffix $s_it_j$ with $i, j \in [p]$, there exists some attended token set $G$ where \emph{every} edge in $G$ is in $E^*$, so we can modify the other edges in $P^*$ without alerting the attention. Each iteration of the outer for loop ensures one suffix has its attention masked; if we complete $b$ iterations of the for loop, then we have ensured there is some $G$ with $|G| = b$ that contains only edges in $E^*$, and if we need to exit early due to the nonexistence of an attended edge in line 2(a)i, then no other edges we end up placing in $P^*$ will ever be attended to given that suffix (as 2(a)i checks \emph{every} feasible edge). Moreover, the total number of edges in $E^*$ is at most $b p^2 = o(m^{1-2/c} p^2) = o((p^c)^{1-2/c}p^2) = o(p^c)$ (since there are $m = p^c - p < p^c$ edges in $P^*$). 
    
    Now, since there are $p^{c - 1} - 1$ layers of edges in $P^*$, there must be some layer with $\Theta(p)$ edges not contained in $E^*$---otherwise we would be able to find $\Theta(p^c)$ edges in $E^*$, a contradiction. Call this layer $j^*$ and let $d = \Theta(p)$ be the number of edges in layer $j^*$ that are not in $E^*$. Construct a family of graphs $\mathcal P$ of size $d!$ by taking $P^*$ and applying each permutation $\pi \in S_{d}$ to the targets of edges in layer $j^*$ that are not in $E^*$. We now have our full collection of prefixes and suffixes: the $d!$ prefixes are the canonical edge lists of each graph in $\mathcal P$ and the $p^2$ suffixes are every $s_it_j$ pair.

    By Stirling's approximation and the fact that $d = \Theta(p)$, $\log_2(d!) = \Theta(d \log d) = \Theta(p \log p)$. Thus, since $f$ has bandwidth $o(m^{1/c}\log m) = o(p \log p)$, $f$ must have a collision on the size $d!$ family of prefixes generated by $\mathcal P$ (as it takes $\log_2(d!) = \Theta(p \log p)$ bits to distinguish between the prefixes). These two colliding graphs are generated by different permutations on the edges in layer $j^*$, so there is some pair of start and target nodes $s^*$--$t^*$ connected in one of the graphs but not the other (as the edges in all other layers are identical). Given the suffix $s^*t^*$, we find that the BAPO fails to distinguish between the colliding graphs: it gets the same output from $f$ and there is some attended token set $G$ that contains only edges in $E^*$, which are identical in the two graphs. Given this adversarial $G$ containing only attention-saturating edges, the suffix oracle $h$ gets identical inputs: the same output from $f$, the same $G$, the same suffix $s^*t^*$ and the same prefix size $m$. In this case, the BAPO fails to solve the problem in one of the two graphs, as one has an $s^*$--$t^*$ path and the other does not.
\end{proof}

\begin{proof}[Proof of \Cref{thm:majority}]
First, a $(\lceil \log_2 n \rceil, 0)$-BAPO can solve \textsc{Majority} by having $f(x)$ output the number of 1's in $x$, which takes at most $\lceil \log_2 n \rceil$ bits and requires no attention tokens. 

Now suppose for a contradiction that $(f, g, h)$ is a BAPO with prefix bandwidth $a = o(\log n)$ and attention bandwidth $b = o(n^{1-\epsilon})$ that solves \textsc{Majority} on inputs of length $n$. Let $m$ be any positive integer and let $\ell =  m^{(1-\epsilon)/\epsilon} $ (for simplicity assume $\ell$ is an integer; otherwise we could take a ceiling without disrupting the proof). The proof follows three high-level steps: (1) set up the structure of prefix--suffix pairs, (2) design the prefixes so that attention cannot distinguish between them, (3) find a colliding pair of prefixes and suffixes that the BAPO cannot distinguish between but which have different \textsc{Majority} answers. 

Given any $\pi \in S_m$ (the permutations on $[m]$), define $X_\pi = \{(\pi(0^{\ell m}1^{\ell m}0^k1^{m-k}), 0^{m-k - 1}1^{k+1})\}_{k=0}^{m-1}$. Note that every prefix--suffix pair in $X_\pi$ has total length $ n = 2(\ell + 1)m = 2m^{1+(1-\epsilon)/\epsilon} + 2m$, with $(\ell + 1) m + 1$ ones and $(\ell + 1)m - 1$ zeros; so all of them have a majority of ones. Let $s_k$ be the suffix in $X_\pi$ with $k+1$ ones and let $S = \{s_k\}_{k=0}^{m-1}$. We will show show to pick $\pi$ based on $g$ such that $X_\pi$ fools the BAPO. We will give the BAPO $\ell = m^{(1-\epsilon)/\epsilon}$ attention tokens. Note that $m^{(1-\epsilon)/\epsilon} = \Theta(n^{1-\epsilon})$, as $(m^{1+(1-\epsilon)/\epsilon})^{1-\epsilon} = m^{(1-\epsilon)/\epsilon}$. 

For any suffix $s$, define $\Gamma_0(s) = \{i \in [(2\ell + 1)m]: g(s, (2\ell+1)m, 0, i) = 1\}$ to be the set of prefix indices where $g$ selects 0's given suffix $s$, and define $\Gamma_1(s)$ similarly to be the set of prefix indices where $g$ selects 1's. We will pick $\pi$ so that for every $s \in S$, $\pi$ permutes the prefixes so that it places leading zeros and ones in at least $\ell$ indices selected by $g$ (or such that $g$ selects only indices masked by leading zeros and ones) \emph{for every suffix} in $X_\pi$. We construct that $\pi$ with the following procedure. 
\begin{enumerate}
    \item Initialize $\Pi_0 = \emptyset$ and $\Pi_1 = \emptyset$. \\\emph{These sets store indices to place leading zeros and ones so that we fool $g$ for all suffixes.}
    \item For $k = 0, \dots, m-1$:
    \begin{enumerate}
        \item While $|\Gamma_0(s_k) \cap \Pi_0| + |\Gamma_1(s_k) \cap \Pi_1| < \ell $:\\\emph{If this is false, we have succeeded in masking $\ge \ell$ positions selected by $g$ given $s_k$.}
        \begin{enumerate}
            \item If $\exists i \in \Gamma_0(s_k) \setminus (\Pi_0 \cup \Pi_1)$: add $i$ to $\Pi_0$. \\\emph{Mask a position where $g$ selects a zero.}
            \item Else if $\exists i \in \Gamma_1(s_k) \setminus (\Pi_0 \cup \Pi_1)$: add $i$ to $\Pi_1$. \\\emph{Mask a position where $g$ selects a one.}
            \item Else: exit the while loop. \\\emph{There are no unmasked positions selected by $g$ on $s_k$, so we have succeeded on $s_k$.}
        \end{enumerate}
    \end{enumerate}
    \item Let $i = 1$. For each $j \in \Pi_0$, set $\pi(i) = j$ and increment $i$.\\\emph{Make $\pi$ permute leading zeros into the masking indices we have picked.}
    \item Let $i = \ell m + 1$. For each $j \in \Pi_1$, set $\pi(i) = j$ and increment $i$.\\\emph{Make $\pi$ permute leading ones into the masking indices we have picked.}
    \item Fill in $\pi$ where not yet defined with the remaining indices in order.
\end{enumerate}

This procedure terminates, since each iteration of the while loop increases the combined sizes of the intersections in (a) by $1$ or exists the while loop. Additionally, it generates a valid permutation and $\Pi_0$ and $\Pi_1$ are disjoint, since we only ever add indices $i$ which are in neither of them to exactly one of them. Next, for every $(\pi(p), s) \in X_\pi$, either: (1) there are at least $\ell$ indices $I$ selected by $g$ on input $(\pi(p), s)$ with $I \subseteq \Pi_0 \cup \Pi_1$ or (2) on input $(\pi(p), s)$, $g$ selects only indices in $\Pi_0 \cup \Pi_1$. Case (1) occurs if we are able to complete the while loop for $s$ without ever hitting the else in iii, since then we will have $\ell$ such indices in $\Pi_0$ and $\Pi_1$. In 3, we ensure the indices where $g$ wishes to select zeros are masked with leading zeros; in 4, we ensure the indices where $g$ wishes to select ones are masked with leading ones. Case (2) occurs if we hit the else in iii, as in that case, there are no prefix indices selected by $g$ which are not already in $\Pi_0 \cup \Pi_1$ and thus all selected indices will be masked by leading zeros and ones. 

This $\pi$ gives us a set $X_\pi$ where the set of attended tokens $G$ is entirely useless over \emph{every} prefix-suffix combination, in the worst case over adversarial choices of which $\ell$ tokens are attended to. That is, for any prefix $\pi(p)$ and any suffix $s$ from $X_\pi$ that may be from different pairs, $G_{(\pi(p), s)}$ (the attended token set for the given prefix--suffix combination) can be identical: an adversary picking which $\ell$ tokens are attended to can pick all $\ell$ of them to come from indices masked by leading zeros and ones (i.e., indices from $\Pi_0 \cup \Pi_1$), which are identical across all $m$ prefixes in $X_\pi$, as the prefixes only differ in indices not in $\Pi_0 \cup \Pi_1$.

Now, there are $m$ distinct strings in $X_\pi$. Since the prefix bandwidth of $f$ is $o(\log n) = o (\log m^{1+(1-\epsilon)/\epsilon}) = o(\log m)$, there are fewer than $2^{\log_2 m} = m$ distinct outputs of $f$, so by the pigeonhole principle there are two elements $(a, b), (c, d) \in X_\pi$ where $f(a) = f(c)$. If $(a, b)$ has $k+1$ ones in $b$ and $(c, d)$ has $k'+1$ ones in $d$ (without loss of generality assuming $k'  > k$), then the string $ab$ has a majority of ones ($\ell m + m - k + k' + 1 > (\ell + 1) m + 1$ ones total) but $cb$ does not ($\ell m + k' + m - k - 1 \ge (\ell + 1) m$ zeros total). Thus, consider the inputs $(a, b)$ and $(c, b)$. The prefix oracle $f$ outputs the same thing in both cases, and as we have seen that $G_{(a, b)}$ can equal $G_{(c, b)}$. Moreover, the suffix oracle gets the same suffix in both cases, so there exist attended token sets where the suffix oracle outputs the same answer on both input pairs (having received all the same inputs: same prefix oracle output, same suffix, same attended tokens, same split index). But $(a, b)$ and $(c, b)$ have opposite answers to \textsc{Majority}: $(a, b)$ has a majority of ones and $(c, b)$ does not. Thus the BAPO answers one of these instances incorrectly for some attended token set.
\end{proof}

While near-linear attention does not help the required asymptotic prefix bandwidth over zero attention, our proof does result in a weaker lower bound on prefix bandwidth with even more attention.

\begin{corollary}
    No $(o(\log \log n), o(n / \log n))$-BAPO can solve \textsc{Majority} on length $n$ inputs.
\end{corollary}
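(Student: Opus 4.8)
The plan is to run the proof of \Cref{thm:majority} essentially verbatim, changing only how the padding length $\ell$ is chosen as a function of the number $m$ of distinct instances in the fooling family. Recall that in that proof one fixes an integer $m$, forms the family $X_\pi$ of $m$ prefix--suffix pairs built from the strings $\pi(0^{\ell m}1^{\ell m}0^k1^{m-k})$ with suffixes $0^{m-k-1}1^{k+1}$, gives the BAPO $\ell$ attention tokens, and chooses the permutation $\pi$ via the index-masking procedure so that for every prefix--suffix combination drawn from $X_\pi$ there is an adversarial attended set $G$ lying entirely in the leading-zero/leading-one positions, which are identical across all $m$ prefixes. The only properties of $\ell$ that the proof actually uses are: (a) $\ell m$ is at least as large as $|\Pi_0|$ and $|\Pi_1|$, so the masked indices fit into the leading blocks --- and this holds for \emph{any} $\ell$, since the masking procedure adds at most $\ell$ indices per outer iteration and hence at most $\ell m$ overall; and (b) the attention bandwidth $b$ satisfies $b \le \ell$ for all large $m$, so that the size-$b$ attended set can be taken to be a subset of the masked positions.

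For the corollary I would instead let $\ell$ grow geometrically in $m$: take $\ell$ to be the least integer with $2(\ell+1)m \ge 4^m$, so the input length is $n = 2(\ell+1)m = \Theta(4^m)$. Then $m = \tfrac12\log_2 n + O(\log\log n) = \Theta(\log n)$, hence $\log m = \Theta(\log\log n)$, and $\ell = \Theta(4^m/m) = \Theta(n/\log n)$. Now I verify the two scaling requirements. First, since the attention bandwidth is $b = o(n/\log n)$ while $\ell = \Theta(n/\log n)$, we have $b = o(\ell)$, so $b \le \ell$ for all large $m$; this is exactly property (b), so the masking argument goes through unchanged and $G$ can be made identical across all $m$ prefixes. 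Second, $X_\pi$ contains $m$ distinct strings, while the prefix oracle has only $a = o(\log\log n) = o(\log m)$ bits, so once $m$ is large it has fewer than $2^a < 2^{\log_2 m} = m$ possible outputs, and the pigeonhole principle yields two pairs $(a,b),(c,d)\in X_\pi$ with $f(a) = f(c)$.

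From here the argument is word-for-word that of \Cref{thm:majority}: the cross-combined inputs $ad$ and $cb$ give $h$ the same $f$-output, the same adversarial attended set $G$, the same prefix length, and a suffix from the same family, yet $ad$ has a strict majority of ones while $cb$ does not (the permutation $\pi$ preserves bit counts, and the inequality $\ell m + m - k + k' + 1 > (\ell+1)m \iff k' > k$ is exactly as before and holds for every $\ell$), so the BAPO errs on one of them. The only work beyond copying the earlier proof is the asymptotic bookkeeping confirming that this one choice of $\ell$ places $a$, $b$, and $n$ simultaneously in the desired regime; as in \Cref{thm:majority}, the rounding of $4^m/m$ to an integer is immaterial to the asymptotics, so I expect this step to be routine rather than the main obstacle.
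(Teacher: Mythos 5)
Your proposal is correct and follows the same route as the paper: the paper's own proof is a one-line remark that re-runs the \textsc{Majority} argument with $\ell m = m2^m$ masking bits (so $\ell = 2^m = \Theta(n/\log n)$ and $m = \Theta(\log n)$), which is exactly your rescaling of $\ell$ up to an immaterial change of base in the exponent. Your explicit identification of the two properties of $\ell$ the original proof uses, and the verification that both survive the new choice, is sound.
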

\begin{proof}
   Now we use even more masking bits, $m2^m$ zeros and ones, so $n = m2^{m+1} + 2m$. With $2^m = \Theta(n / \log n)$ attention tokens, using the same argument shows that $o(\log \log n) = o (\log m)$ prefix bandwidth can cause a collision. 
\end{proof}

We suspect that this lower bound is loose, and that in fact $\log n$ prefix bandwidth is still required even with $\Theta(n / \log n)$ attention. However, we can increase the attention bandwidth even more (all the way to $\Theta(n)$, but still less than $n$) so that we do lower the asymptotic prefix bandwidth requirement. For instance, with $n - \log n$ attention tokens, we only need $\Omega(\log \log n)$ prefix bandwidth. 
 \begin{proposition}
     For any $1 \le a(n) = o(n)$, there is a $(\lceil \log_2 a(n) \rceil, n - a(n)) $-BAPO solving \textsc{Majority} on length $n$ inputs.
 \end{proposition}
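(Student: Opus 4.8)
The plan is to have the attention function attend to a \emph{fixed prefix of indices} of size $n - a(n)$, hand off the handful of remaining uncovered positions to the prefix oracle, and let the suffix oracle add everything up. Concretely, since $g$ has access to the prefix length $k$ and to the index $i$ of the token it is inspecting (and the constants $n$ and $a(n)$ are fixed by the construction), I would set $g(x_{k+1}\dots x_n, k, x_i, i) = 1$ if and only if $i \le n - a(n)$. This makes $g$ attend to exactly $\min\{k,\, n - a(n)\}$ tokens, which is always at most $b = n - a(n)$; hence $|\mathbb G| = \min\{b, |\mathbb G|\}$ and the adversary has no freedom — the received set $G$ is always precisely the tokens $x_1,\dots,x_{\min\{k,\,n-a(n)\}}$ with their indices.

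Next I would split on $k$. In the easy case $k \le n - a(n)$, the set $G$ already contains all of $x_1,\dots,x_k$, and the suffix oracle additionally sees $x_{k+1}\dots x_n$, so it knows the whole input and outputs \textsc{Majority}$(x_1\dots x_n)$ directly, ignoring $f$. In the case $k > n - a(n)$, the only positions that are neither attended to nor in the suffix are $n - a(n)+1,\dots,k$, so I would define $f(x_1\dots x_k)$ to output the number of ones among $x_{n-a(n)+1},\dots,x_k$. Because $k \le n-1$, that count lies in $\{0,1,\dots,a(n)-1\}$, a set of $a(n)$ values, so it fits in $\lceil \log_2 a(n)\rceil$ bits, matching the claimed prefix bandwidth. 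The suffix oracle then reconstructs the total number of ones as (ones among the attended tokens $x_1,\dots,x_{n-a(n)}$) $+$ ($f$'s output) $+$ (ones in the suffix $x_{k+1}\dots x_n$) — three disjoint blocks covering all of $[n]$ — compares it to $n/2$, and outputs the answer. The oracle can tell which case it is in from $k$ and the constants $n, a(n)$.

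There is essentially no hard step here; the only things to be careful about are the bit-accounting for $f$ (verifying that the number of positions $f$ must summarize is at most $a(n)-1$, using $k \le n-1$, so that $\lceil \log_2 a(n)\rceil$ bits genuinely suffice) and checking that $g$ never selects more than $b$ tokens so that no adversarial truncation of $G$ occurs. I would also note the degenerate regime in passing: for $n$ large enough $a(n) < n$, so $b = n - a(n) \ge 1$ is a valid attention bandwidth, and when $a(n) = 1$ we have $b = n - 1 \ge k$, so attention covers the entire prefix and the $0$-bit prefix oracle is vacuously consistent.
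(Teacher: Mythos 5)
Your construction is exactly the paper's: $g$ attends to the first $n-a(n)$ prefix positions, $f$ forwards the ones-count of the remaining at most $a(n)-1$ prefix positions (fitting in $\lceil \log_2 a(n)\rceil$ bits since $k \le n-1$), and $h$ sums the three disjoint blocks. The proposal is correct and simply spells out the bit-accounting and case analysis that the paper leaves implicit.
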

\begin{proof}
    The BAPO is simple: $g$ attends to the first $n - a(n)$ tokens of the prefix and $f$ passes on the ones count in the remaining part of the prefix, which takes at most $\lceil \log_2 a(n) \rceil$ bits. This information suffices to allow the suffix oracle to solve the problem. 
\end{proof}

The hardness of \textsc{Majority} immediately implies at least the same degree of hardness for \textsc{Median}, the problem of finding the median of a input sequence of integers, and \textsc{Mode}, the problem of finding the most frequent item in a stream. We believe these bounds are very loose.
\begin{corollary}
    No $(o(\log n), O(n^{1-\epsilon}))$-BAPOs solve \textsc{Median} or \textsc{Mode} on length $n$ inputs for any $0 < \epsilon < 1$.
\end{corollary}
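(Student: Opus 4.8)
The plan is to obtain these lower bounds by reduction from \textsc{Majority}: given a \textsc{Majority} instance $x \in \{0,1\}^n$, I would transform it into a \textsc{Median} (resp.\ \textsc{Mode}) instance whose answer encodes $\textsc{Majority}(x)$, and check that the transformation preserves BAPO-solvability with only constant additive overhead in both bandwidths. The subtlety is that the reduction must be \emph{local} in a sense compatible with the BAPO model: a BAPO solving the target problem on the transformed input must yield a BAPO solving \textsc{Majority} on the original input, for every prefix--suffix split. So I would first argue that if the transformation appends a fixed suffix of $O(1)$ tokens to $x$ (or interleaves each input token with a fixed number of new tokens in a position-computable way), then any $(a,b)$-BAPO for the target problem gives an $(a + O(1), b + O(1))$-BAPO for \textsc{Majority}, since the added tokens are determined by position alone and can be simulated by $f$, $g$, and $h$ using only the positional information they already have. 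Combined with \Cref{thm:majority}, this immediately rules out $(o(\log n), O(n^{1-\epsilon}))$-BAPOs.

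For \textsc{Median}: over a length-$n$ bitstring, $\textsc{Majority}(x) = 1$ iff the number of ones exceeds $n/2$. I would pad $x$ with a fixed block of zeros and ones (and perhaps one extra distinguished symbol) so that the median of the padded sequence, interpreted as an integer, is $1$ exactly when $x$ had strictly more ones than zeros and $0$ otherwise; concretely, appending $n$ copies of a value strictly between $0$ and $1$ is not available over $\{0,1\}$, so instead I would map $0 \mapsto 0$, $1 \mapsto 2$, append a single $1$, and observe that the median of the resulting length-$(n+1)$ multiset is $2$ iff a majority of the $x_i$ are ones, $0$ iff a minority are, and $1$ in the tie case. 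Since $n$ is known and fixed, the suffix oracle can read off \textsc{Majority} from the median's value; the recoding $x_i \mapsto 2x_i$ is purely local. For \textsc{Mode}: map $0 \mapsto 0$, $1 \mapsto 1$, so the most frequent item in $x$ itself is $1$ iff there are strictly more ones than zeros (breaking the tie requires a tiny tweak, e.g.\ prepend one $0$ so ties in the original become a strict $0$-majority, then reinterpret); again this is a local transformation, so a BAPO for \textsc{Mode} yields one for \textsc{Majority} with constant extra bandwidth.

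The key steps, in order: (i) state the reduction lemma that a position-determined local transformation of the input costs only $O(1)$ additive bandwidth in a BAPO; (ii) exhibit the explicit recodings above for \textsc{Median} and \textsc{Mode}, being careful about the tie case of \textsc{Majority}; (iii) verify that $f$, $g$, $h$ for the target problem can be wrapped into $f'$, $g'$, $h'$ for \textsc{Majority}---$g'$ translates an attended position of the original string to the corresponding position in the transformed string and forwards the recoded symbol, $f'$ runs $f$ on the recoded prefix (any fixed tokens inserted in the prefix are reconstructible from position), and $h'$ runs $h$ and then post-processes the median/mode value into a yes/no bit using the known $n$; (iv) invoke \Cref{thm:majority} to conclude. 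The main obstacle I anticipate is handling the tie case of \textsc{Majority} cleanly under the recoding while keeping everything local and keeping the transformed instance's length a fixed function of $n$ (so that $h$ can still do its post-processing)---and making sure the claimed $O(n^{1-\epsilon})$ attention slack in the corollary genuinely survives the reduction, which it does since the lemma only adds a constant.
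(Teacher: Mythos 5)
Your reduction is correct, but it is considerably more machinery than the paper uses. The paper's entire proof is one line: restrict attention to bitstring inputs, where the median and the mode of the string are $1$ if and only if the string has a majority of ones; hence any BAPO for \textsc{Median} or \textsc{Mode} already \emph{is} a BAPO for \textsc{Majority} on bitstrings (the output token is the answer bit), and \Cref{thm:majority} applies directly---no recoding, no appended or prepended tokens, and no reduction lemma about position-determined local transformations. Your route buys two things the paper's one-liner glosses over: an explicit treatment of the tie case (the paper implicitly relies on the hard instances, or the problems' conventions on ties, lining up with \textsc{Majority}'s strict-majority definition, whereas your $0\mapsto 0$, $1\mapsto 2$ padding for \textsc{Median} and the prepended $0$ for \textsc{Mode} make the correspondence unambiguous), and a reusable lemma that position-computable local transformations preserve BAPO bandwidth up to additive constants, which could transfer other hardness results. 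What it costs is the need to verify that wrapped oracles $f'$, $g'$, $h'$ handle every prefix--suffix split and the position shifts correctly---checks that the paper's identity-on-bitstrings argument avoids entirely. One minor point either way: the corollary's $O(n^{1-\epsilon})$ attention bound follows from the $o(n^{1-\epsilon'})$ bound of \Cref{thm:majority} by taking $\epsilon' < \epsilon$, which is worth stating explicitly rather than just asserting that the slack survives.
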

\begin{proof}
    On bit-strings, the median and mode are 1 if and only if the string has a majority of 1's. 
\end{proof}

For proving \textsc{Match3}$_n$ is BAPO-hard, the following lemma helps us find $x_i$--$x_j$ pairs that form a match with some particular suffix $s = x_n$, but such that $x_i$ and $x_j$ do not form matches with any suffix $s\in S$ and other prefix integer $z \in Z$, where $Z$ is the set of integers we have already decided to place in prefixes.

\begin{lemma}\label{lemma:match3-packing}
    Let $S, Z\subset \mathbb Z_m$ with $m > 100$, $\max_{s \in S} s \le \sqrt{m}$, $|S| \le \sqrt{m} / 2$, and $|Z| \le \sqrt{m}/2 $. For every $s \in S$, there exist $x, y \in \mathbb Z_m\setminus Z$ s.t.:
    \begin{enumerate}
        \item $x + y + s \equiv_m 0$,
        \item for all $z \in Z$ and all $s' \in S$, $x + z + s' \not \equiv_m 0$ and $y + z + s' \not \equiv_m 0$.
    \end{enumerate}
\end{lemma}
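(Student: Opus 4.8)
The plan is a direct pigeonhole argument inside $\mathbb{Z}_m$. Fix the particular suffix $s_0 \in S$ for which we must produce the pair. Since condition~1 forces $y \equiv_m -s_0 - x$, a pair satisfying condition~1 is determined by $x$ alone, so it suffices to find a single residue $x \in \mathbb{Z}_m$ avoiding a small \emph{forbidden set} $B$ that encodes all remaining requirements. Concretely, $B$ must contain: the set $Z$ itself (so that $x \notin Z$); the shift $-s_0 - Z$ (so that $y = -s_0 - x \notin Z$); the sumset shift $-(Z+S) = \{-(z+s) : z \in Z,\ s \in S\}$ (so that $x + z + s \not\equiv_m 0$ for all $z \in Z$, $s \in S$); and the sumset shift $(Z+S) - s_0$ (so that $y + z + s \not\equiv_m 0$, since $y + z + s \equiv_m 0$ is equivalent to $x \equiv_m z + s - s_0$). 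Any $x \in \mathbb{Z}_m \setminus B$, together with $y \equiv_m -s_0 - x$, then satisfies both conclusions of the lemma by construction.

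It then remains only to check $B \neq \mathbb{Z}_m$. Using $|Z+S| \le |Z|\,|S|$ and the hypotheses $|S| \le \sqrt m/2$ and $|Z| \le \sqrt m/2$,
\[
|B| \;\le\; 2|Z| + 2|Z|\,|S| \;\le\; \sqrt{m} + \tfrac{m}{2},
\]
which is strictly less than $m$ whenever $\sqrt m < m/2$, i.e.\ $m > 4$; this holds since $m > 100$. Hence $\mathbb{Z}_m \setminus B$ is nonempty, and we pick any $x$ in it. (If the application also wants $x \ne y$, additionally forbid the at most $\gcd(2,m) \le 2$ solutions of $2x \equiv_m -s_0$; then $|B| \le \sqrt m + \tfrac m2 + 2 < m$ still.)

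The only real work is the bookkeeping in the first step: one must confirm that \emph{every} way $x$ or $y$ could violate the conclusion --- by landing in $Z$, or by being the solution of a forbidden match $x + z + s \equiv_m 0$ or $y + z + s \equiv_m 0$ --- is captured as an affine image of $Z$ or of $Z+S$, and that the four pieces together occupy fewer than $m$ residues. Note that the counting above does not actually invoke $\max_{s \in S} s \le \sqrt m$; that hypothesis is carried along because the intended application builds $S$ from small integers, and the slack between $\sqrt m + m/2$ and $m$ is precisely what the factors $\tfrac12$ in the cardinality bounds buy.
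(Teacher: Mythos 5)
Your proof is correct, and it follows the same basic counting strategy as the paper (union-bounding the forbidden residues $Z$ and $-(S+Z)$, each of size at most $\sqrt m/2$ and $m/4$ respectively), but your execution is cleaner. The paper keeps $x$ and $y$ as a genuine pair: it enumerates candidate pairs $(i,\,-s-i)$ over a range of more than $m/4+\sqrt m/2$ values of $i$ chosen so that all the $x$- and $y$-values are pairwise distinct, and then argues that the forbidden values cannot kill every pair. That distinctness step is exactly where the hypothesis $\max_{s\in S}s\le\sqrt m$ is used (to guarantee every $y$-value exceeds every $x$-value), and it also yields $x\ne y$ for free. You instead observe that condition~1 makes $y$ an affine function of $x$, pull all four constraints back to a single forbidden set $B$ for $x$ with $|B|\le\sqrt m+m/2<m$, and are done; as you correctly note, this never needs $\max_{s\in S}s\le\sqrt m$, and $x\ne y$ can be recovered by forbidding at most two more residues. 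Your accounting of $B$ is complete (membership of $x$ and of $y=-s_0-x$ in $Z$, and the two sumset shifts $-(Z+S)$ and $(Z+S)-s_0$), so there is no gap; your version is arguably the tidier one, proving a marginally stronger statement with one fewer hypothesis.
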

\begin{proof}
    Condition 2 is satisfied as long as $x, y \in \mathbb Z_m \setminus Z$ are not in $-(S + Z) = \{-(s + z) \mod m: s \in S, z \in Z\}$, which has size at most $(\sqrt{m}/2)^2 = m /4$. This leaves at least $3m/4 - \sqrt{m} / 2$ admissible values in $\mathbb Z_m \setminus Z$. We will show that there are enough $x$--$y$ pairs with distinct values of $x$ and $y$ satisfying Condition 1 that they cannot all be disallowed by Condition 2 and the requirement that $x, y \notin Z$.
    
    Consider all pairs $x, y \in \mathbb Z_m$ such that $x + y + s \equiv_m 0$: for each $i \in \mathbb Z_m$, we can set $x = i \mod m$ and $y = -s-i \mod m$. If we consider $0 \le i \le m / 4 + \sqrt{m}/2 + 1$, the values $x$ and $y$ take on are all distinct (since $s \le \sqrt{m}$ and $m/4 > 2\sqrt{m}$, every $y$ value is larger than every $x$ value). Since there are only $m / 4 + \sqrt{m}/2$ disallowed values for $x$ and $y$, there must be some $x, y$ pair in this range of more than $m/4 + \sqrt{m}/2$ values of $i$ where both $x$ and $y$ are admissible.
\end{proof}

\begin{proof}[Proof of \Cref{thm:match3}]
First, consider \textsc{Match2}$_n$.
    The attention function $g$ can select any prefix elements $x_i$ such that $x_n + x_i \equiv 0$ (mod $m$), since $x_n$ is always in the suffix. The suffix oracle only needs a single such example to confirm that this is a yes instance. If the set of attended tokens is empty, then the suffix oracle can check if there are any matches in the suffix. If not, this is a no instance. We do not need $f$ at all.

Now we show that \textsc{Match3}$_n$ is BAPO-hard.
Suppose for a contradiction that $(f, g, h)$ is a BAPO solving \textsc{Match3}$_n$ with prefix bandwidth $o(n/b(n))$ and attention bandwidth $b(n)$. We will construct a collection of prefixes and suffixes that fools the BAPO with total input length $n$. Let $n > 10$ and pick $m = n^2$.

    Consider the set of suffixes $S = \{i : 0 \le i < n/(8b(n)) \}$. Note than since $b(n) = o(n)$, $|S| = \Theta(n /b(n))$ is growing with $n$. For a suffix $s$, let $G(s) = \{(x, i): i \in [n-1], x \in \mathbb Z_{n^2}, g(s, n-1, x_i, i) = 1\}$ be the set of integer--index pairs that $g$ attends to given suffix $s$.

    \begin{enumerate}
        \item Initialize $P^* = \{(\lfloor n^2 / 3 \rfloor), 1\}$ and $I = \{1\}$\\
        \emph{We will use $\lfloor n^2 / 3 \rfloor$ as filler; placing it in $P^*$ here ensures it does not match any prefix integer.}
        \item For $s \in S$:
        \begin{enumerate}
            \item While $|P^* \cap G(s)| < b(n)$:
            \begin{enumerate}
                \item If there exists some $(x, i) \in \mathbb Z_{n^2}\times  ([n-1]\setminus I)$ for which (1) $(x, i) \in G(s)$ and (2) $x + y + s' \not \equiv_{n^2} 0$ for all $s' \in S$ and all $(y, j) \in P^*$: add $(x, i)$ to $P^*$ and add $i$ to $I$\\
                \emph{Add a masking integer to the prefix, but only allow masking integers that do not form a match with existing masking integers and any suffix.}
                \item Else: break out of the while loop
                \\\emph{If there are no integers that $g$ wants to attend to (among those not forming matches with existing masking integers), then we are done with this suffix.}
            \end{enumerate}
        \end{enumerate}
    \end{enumerate}

After the procedure, $P^*$ contains at most $b(n)|S| +1\le n/8 + 1$ occupied indices. Let $Z$ be the set of values in $P^*$, with $|Z| \le n/8 + 1$. Moreover, we have ensured that no pair of integers in $P^*$ can form a match with any suffix integer $s$, since we only add $x$ to $P^*$ if for every $y $ in $P^*$ and $s \in S$, $x + y + s \not \equiv_{n^2} 0$. Lastly, if for some suffix $s$, we were unable to saturate attention and hit the else in 2(a)ii, then no other integers we add to a prefix can be attended to, since we will only be adding integers to prefixes that do not form matches with any integer in $P^*$ and any suffix (and therefore were already checked for attention in 2(a)i). 

Now, for each $s \in S$, we find some $x_s$ and $y_s$ using \Cref{lemma:match3-packing} such that $x_s + y_s + s \equiv_{n^2} 0$, but $x_s$ and $y_s$ do not form matches with any other $z \in Z$ (recall that we have initialized $Z$ to contain all values in $P^*$). Add each $x_s$ and $y_s$ to $Z$ before the next application of \Cref{lemma:match3-packing} to ensure that we do not create any matches across $x$--$y$ pairs. After doing this for each $s\in S$, the final size of $Z$ is $\le n/8 + 1 + n/(4b(n)) \le 3n/8 + 1$, so the size limit on $Z$ required by \Cref{lemma:match3-packing}, namely $|Z| \le \sqrt{m} /2 = n/2$, is always satisfied (since we picked $n > 10$, $n / 2 > 3n/8 + 1$). Let $P = \{(x_s, y_s)\}_{s \in S}$ be the $x$--$y$ pairs we find using this procedure.

For every subset $ R \subseteq P$, construct a prefix of length $n-1$ by first filling in all of the masking integers in $P^*$ (filling at most $n/8 + 1$ positions) and then adding in the $x$ and $y$ values in $R$ in arbitrary indices (filling at most $2|S| = n/(4b)$ additional positions). Fill the remaining indices with $\lfloor n^2 / 3 \rfloor$, which cannot form a match with itself and any suffix integer (since the suffix integers are at most $ n / (4b(n))$), and which we already ensured cannot form a match with any integer in $P$ or $P^*$ by placing it in $Z$. This gives us $2^{|S|}$ distinct prefixes, each of which has matches with a distinct set of suffixes. That is, for any two prefixes $p_1\ne p_2$, there exists some suffix integer $s$ where $p_1s$ and $p_2s$ have opposite answers to \textsc{Match3}$_n$, since there is some pair $(x_s, y_s)$ in one prefix, but not the other, which forms a match with $s$, while no other pair of integers forms a match with $s$. But with prefix bandwidth $o(n/b(n)) = o(|S|)$, there is some prefix oracle collision (as there are $2^{o(|S|)}$ distinct outputs of $f$, too few for the $2^{|S|}$ distinct prefixes). Moreover, these colliding prefixes are indistinguishable to attention for all suffixes, since we have ensured that for every suffix, attention can be saturated by integers in $P^*$, which are identical across all prefixes. Therefore the BAPO fails to solve the problem. 
\end{proof}

\begin{proof}[Proof of \Cref{thm:unique}]
 For the $(2k, 0)$-BAPO, the prefix oracle transmits two bit-strings of length $k$, one indicating which elements of $\Sigma$ appear exactly once in the prefix and one indicating which elements of $\Sigma$ appear in the prefix one or more times. The suffix oracle outputs (a) any element that appears exactly once in the prefix but not in the suffix, which it finds from the first bit-string; (b) any element that appears exactly once in the suffix but not in the prefix, which it can find from the second bit-string; or (c) $\emptyset$ otherwise. This allows the suffix oracle to correctly solve \textsc{Unique}.\footnote{Additionally, there is a clever $(\lceil \log_2|\Sigma|\rceil, 0)$-BAPO for the special case of \textsc{Unique} where every element appears an even number of times, except a single unique item: with this restriction, taking the bit-wise exclusive or over binary encodings of the tokens solves the problem---but this fails in general.}

For the lower bound, suppose for a contradiction that $(f, g, h)$ is an $(o(k/b(k)), b(k))$-BAPO solving \textsc{Unique}. We will first construct a set of partial prefixes $P^*$ for which the attention mechanism cannot distinguish all following suffixes. This leads to a pair of fooling prefixes with different answers that the BAPO cannot distinguish.
  
  % \item[Saturating the attention.]

  Fix an arbitrary order over the symbols in $\Sigma = \{\Sigma_1, \dots, \Sigma_{k}\}$. For any $A \subseteq \Sigma$, let $\text{cat}(A)$ denote the string consisting of each token $\Sigma_i \in A$ concatenated in order. Let $b' = 4b(k)$. We will construct a collection of $k/b'$ suffixes (for convenience, assume $k/b'$ is an integer; otherwise, the construction would involve ceilings and floors, but this this would not affect the argument): let $\sigma_i = \text{cat}(\Sigma\setminus \{\Sigma_{b'(i-1) + j}\}_{j = 1}^{b'})$ and $S = \{\sigma_i\sigma_i\}_{i =1}^{k/b'}$. 
  As Figure~\ref{fig:uniqueproof} shows, the partial suffix $\sigma_i$ is missing the $i$th block of $b'$ contiguous tokens in $\Sigma$, so each full suffix $s_i=\sigma_i\sigma_i$ has length $2(k - b')$ and $|S|=k/b'$ by construction. Note that we have duplicated each $\sigma_i$ to ensure that no suffix token is unique. 
  
  Now, we perform the usual overloading procedure to start building up prefixes of length $k$ that saturate the attention function for every suffix. Let $G(s) = \{(x, i) \in \Sigma \times [k]: g(s, k, x, i) = 1\}$. Perform the following procedure to construct a partial prefix $P^*$.

  \begin{algorithmic}
\State Initialize $P^* \gets \emptyset$, $I \gets \emptyset$
\For{$s \in S$}
  \While{$|P^* \cap G(s)| < b(k)$}
    \If{$\exists$ $x \in \Sigma$, $i \in [k] \setminus I$ such that $(x, i) \in G(s)$}
      \State Add $(x, i)$ to $P^*$; add $i$ to $I$
    \Else
      \State \textbf{break}
    \EndIf
  \EndWhile
\EndFor
\end{algorithmic}

This procedure results in at most $b(k)|S| = b(k)\frac{k}{b'} = b(k)\frac{k}{4b(k)} = k/4$ positions in $P^*$ being filled with masking symbols. Let $Z$ be the set of unique symbols in $P^*$, with $|Z| \le |P^*|\le k/4$. To ensure none of these are unique, fill in another $\le k/4$ arbitrary indices in $P^*$ with an additional copy of each symbol in $Z$. As in the previous proofs, attention has now been rendered useless for every suffix, regardless of what additional symbols we add in remaining prefix indices. Consider the blocks we used to construct the suffixes, namely $\{\Sigma_{b'(i-1) + j}\}_{j = 1}^{b'}$ for $i \in [k / b']$ (Figure~\ref{fig:uniqueproof}). It must be the case that more than half of the blocks have an element not in $Z$ (suppose this was not the case and $>|S|/2$ blocks have all of their elements in $Z$, then $|Z|>b'|S|/2 = k/2$, contradicting that $|Z| \le k/4$ ). Thus, we can find a collection of tokens $Y = \{y_1, \dots, y_{|S|/2}\}$ such that each $y_i$ is in a different block and $Y \cap Z = \emptyset$, as well as one extra filler token $y_0$ from yet another different block. 

\begin{figure}[t]
    \centering
    \begin{tikzpicture}[
    > = Stealth, semithick,
            latent/.style = {dotted},
            ec/.style = {rectangle, draw=none, fill=none, minimum width=0.1cm},
            vocab/.style = {
                matrix of nodes,
                nodes={draw, minimum width=0.4cm, minimum height=0.6cm, anchor=center, font=\small},
                nodes in empty cells,
                column sep=-\pgflinewidth, row sep=-\pgflinewidth,
                column 6/.style={nodes={densely dotted}},
                column 7/.style={nodes={densely dotted}},
                column 8/.style={nodes={densely dotted}},
                column 9/.style={nodes={densely dotted}},
            }
    ]
      % Define the matrix of tokens.
      % \matrix[vocab] (tokenTable) {
      %   $\Sigma_1$ & $\Sigma_2$ & $\ldots$ & $\Sigma_{b'}$ &|[ec]| & 
      %   $\Sigma_{b'+1}$  & $\Sigma_{b'+2}$ & $\ldots$ & $\Sigma_{2b'}$  & |[ec]|{$\ldots$} &
      %   $\Sigma_{ib'+1}$  & $\Sigma_{ib'+2}$ & $\ldots$ & $\Sigma_{(i+1)b'}$  & |[ec]|{$\ldots$} &
      %   $\Sigma_{k-b'+1}$  & $\Sigma_{k-b'+2}$ & $\ldots$ & $\Sigma_{k}$  \\
      % };
      % Define the matrix of tokens.
      \matrix[vocab] (tokenTable) {
        $\Sigma_1$ & $\Sigma_2$ & $\ldots$ & $\Sigma_{b'}$ & |[ec]|{$\ldots$} &
        $\Sigma_{(i-1)b'+1}$  & $\Sigma_{(i-1)b'+2}$ & $\ldots$ & $\Sigma_{ib'}$  & |[ec]|{$\ldots$} &
        $\Sigma_{k-b'+1}$  & $\Sigma_{k-b'+2}$ & $\ldots$ & $\Sigma_{k}$  \\
      };

      % Curly overbrackets
      \node[] (block1) at ([yshift=20pt]tokenTable-1-2.north east) {block $1$};
      \draw[decorate, decoration={brace, amplitude=5pt, raise=3pt}]
          (tokenTable-1-1.north west) -- (tokenTable-1-4.north east);

      % \node[] (block2) at ([yshift=20pt]tokenTable-1-7.north) {block $2$};
      % \draw[decorate, decoration={brace, amplitude=5pt, raise=3pt}]
      %     (tokenTable-1-6.north west) -- (tokenTable-1-9.north east);

      \node[] (blocki) at ([yshift=20pt]tokenTable-1-7.north) {block $i$ (omitted)};
      \draw[decorate, densely dotted, decoration={brace, amplitude=5pt, raise=3pt}]
          (tokenTable-1-6.north west) -- (tokenTable-1-9.north east);

      \node[] (blocki) at ([yshift=20pt]tokenTable-1-12.north) {block $k/b'$};
      \draw[decorate, decoration={brace, amplitude=5pt, raise=3pt}]
          (tokenTable-1-11.north west) -- (tokenTable-1-14.north east);
    \end{tikzpicture}
    \caption{Constructing $\sigma_i$ by leaving out the $i$th block from the vocabulary $\Sigma$, leaving $k - b'$ tokens.}
    \label{fig:uniqueproof}
\end{figure}
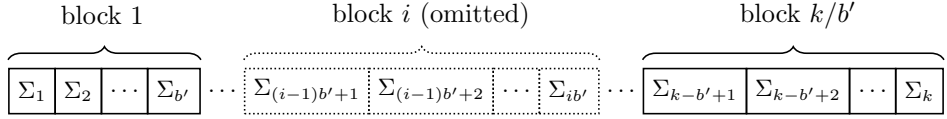

% \item[Fooling prefixes.]
For each subset $R \subseteq Y$, construct a prefix $p_R$ by starting with $P^*$, placing each $y_i \in R$ in an arbitrary unfilled index (so that the prefix now has $\le k/2 + |S|/2$ filled indices) and filling the remaining indices with copies of $y_0$. This gives us $2^{|S|/2}=2^{k/(2b')}$ distinct prefixes. With $o(k/b(k))$ prefix bandwidth, we only have $2^{o(k/b(k))}$ distinct outputs of $f$, so there are two prefixes $p_R$ and $p_{R'}$ that collide. These two prefixes have different subsets $R$ and $R'$ of $Y$ values, so there is some $y_i$ in one prefix but not the other (say $y_i \in R$, $y_i \notin R'$). Consider the suffix $s_{-i}$ that is missing the block of tokens to which $y_i$ belongs.

We observe that:
\begin{itemize}
\item $s_{-i}$ has no unique tokens due to its construction
\item $[p_{R'}, s_{-i}]$ has no unique tokens, since all of its $Y$ values in the prefix appear in the suffix due to the fact that every element of $Y$ comes from a different suffix block. Thus, every element of $Y$ except $y_i$ appears in $s_{-i}$.
\item $[p_R, s_{-i}]$ only has $y_i$ as a unique token, since (a) all other tokens in $Y$ have a copy in the suffix due to the above fact, (b) the masking and filler tokens in the prefix are all duplicated at least twice
\end{itemize}
 Since the BAPO can observe the same attention set $G$ given both of these strings and receives the same output of $f$ from the two prefixes, it gets one of these instances wrong for some $G$.\qedhere
% \end{pfparts}
  \end{proof}

\begin{proof}[Proof of \Cref{thm:set-diff}]
For the $(k, 0)$-BAPO, the prefix oracle sends a bit-string denoting all elements of $\Sigma$ appearing in $c$ but not $d$ (as far as it can tell). The suffix oracle can then remove all elements of $d$ in the suffix and compute an answer.

For the lower bound, as usual, suppose for a contradiction that $(f, g, h)$ is an $(o(k/b(k)), b(k))$-BAPO solving \textsc{SetDiff}. We will use the a similar collection of omitted-block suffixes as we did for \textsc{Unique}. As we did in that proof, order the elements of $\Sigma$ and define $\text{cat}(A)$. Let $b' = b(k) /4$. We will construct a collection of $k/b'$ suffixes (again assuming for convenience that $k/b'$ is an integer): let $\sigma_i = \text{cat}(\Sigma\setminus \{\Sigma_{b'(i-1) + j}\}_{j = 1}^{b'})$ and $S = \{\sigma_i\}_{i =1}^{k/b'}$ (recall \Cref{fig:uniqueproof}; note that in contrast with \textsc{Unique}, there is no need to duplicate the suffixes). 

Now, we perform the usual overloading procedure to start building up prefixes of length $3k+1$ that saturate the attention function for every suffix. The prefixes we build up will be of the form $c_1\dots c_k | d_1\cdots d_{2k}$. That is, the prefix--suffix split occurs somewhere in the middle of the $d$ input to \textsc{SetDiff}. Let $G(\sigma) = \{(x, i) \in \Sigma \times [3k+1]: g(\sigma, k, x, i) = 1\}$. Perform the following procedure to construct a partial prefix $P^*$, which we initialize to already contain the divider token $|$.
  
  \begin{algorithmic}
\State Initialize $P^* \gets \{(|, k+1)\}$, $I \gets \{k+1\}$
\For{$\sigma \in S$}
  \While{$|P^* \cap G(\sigma)| < b(k)$}
    \If{$\exists$ $x \in \Sigma$, $i \in [3k+1] \setminus I$ such that $(x, i) \in G(\sigma)$}
      \State Add $(x, i)$ to $P^*$; add $i$ to $I$
    \Else
      \State \textbf{break}
    \EndIf
  \EndWhile
\EndFor
\end{algorithmic}

This procedure results in at most $b(k)|S| = b(k)\frac{k}{b'} = b(k)\frac{k}{4b(k)} = k/4$ positions in $P^*$ being filled with symbols that saturate $g$. Let $Z$ be the set of unique symbols in $P^*$, with $|Z| \le |P^*|\le k/4$. For each symbol in $Z$, place another copy of it in $P^*$ on the opposite side of the divider as its first copy. This ensures that no symbols in $Z$ can be \textsc{SetDiff} answers, while leaving at least $3k/4$ open positions in $P^*$ before the divider and more than $k$ open positions after the divider. 

Now, consider the blocks we used to construct the suffixes, namely $\{\Sigma_{b'(i-1) + j}\}_{j = 1}^{b'}$ for $i \in [k / b']$ (Figure~\ref{fig:uniqueproof}). Just as before, it must be the case that more than half of the blocks have an element not in $Z$. Thus, we can find a collection of tokens $Y = \{y_1, \dots, y_{|S|/2}\}$ such that each $y_i$ is in a different block and $Y \cap Z = \emptyset$, as well as one extra filler token $y_0$ from yet another different block. Let $X = \Sigma \setminus(Z \cup Y) $. Place every $x \in X$ in an unfilled index of $P^*$ after the divider (i.e., in the $d$ portion) and fill all remaining empty spots of $P^*$ after the divider with the filler $y_0$. 

For each subset $R \subseteq Y$, construct a prefix $p_R$ by starting with $P^*$ (which is now full after the divider, but has at least $3k/4$ open positions before the divider), placing each $r \in R$ in an arbitrary unfilled index  before the divider, and then filling in all remaining indices with $y_0$. As before, this gives us $2^{|S|/2} = 2^{k/(2b')}$ distinct prefixes. With prefix bandwidth $o(k/b(k))$, we only have $2^{o(k/b(k))}$ distinct outputs of $f$, so there are two prefixes which have identical $f$ outputs, call them $p_R$ and $p_{R'}$. These contain different sets $R$ and $R'$ of $Y$ values; thus, there is some $y_i \in Y$ in one prefix but not the other. Suppose without loss of generality that $y_i \in R, y_i \notin R'$. Consider the suffix $\sigma_i$ that is missing the block of tokens to which $y_i$ belongs. Notice that:
\begin{itemize}
    \item $p_R \sigma_i$ has only a single \textsc{SetDiff} answer, which is $y_i$: we ensured that every $Y$ element only appears in a prefix before the divider, and $y_i$ does not appear in $\sigma_i$. Every other symbol in the missing block of $\sigma_i$ is in $p_R$ after the divider by construction, and thus is not a valid answer.
    \item $p_{R'} \sigma_i$ has \textsc{SetDiff} answer $\emptyset$: the only symbol not appearing after the divider in $p_{R'} \sigma_i$ is $y_i$, which is not before the divider.
\end{itemize}
Thus these two instances have different \textsc{SetDiff} answers, but they are indistinguishable to the BAPO (for some adversarially chosen $G$) due to the saturation of attention and the $f$ collision. 
\end{proof}

\begin{proof}[Proof of \Cref{thm:bapo-cot-tm}]
   Let $M = (Q, \{0, 1\}, \Lambda, \delta, q_0, q_\text{accept}, q_\text{reject})$ be a Turing machine deciding $L$ using space $s(n)$. %using the definition of Turing machines from \citet{sipser2013}. 
   
   A slight wrinkle arises due to the fact that BAPOs cannot attend to the last instance of a token, which would enable the efficient ``tape diff'' Turing machine simulation used by \citet{merrill2024expressive}. As such, our construction requires a fixed maximum tape size, which means different BAPO-CoTs are needed for larger problem instances---but crucially, their bandwidths are identical. This is analogous to the requirement of \citet{merrill2024expressive} that the precision of the transformer grows with the problem instance (although in our case, the scaling increases the number of chain-of-thought steps).
   
   Given $n$, we will construct a $(2, 3)$-BAPO-CoT that simulates $M$ on inputs of size at most $n$. Let $\Gamma = \Sigma \cup \Lambda \cup Q \cup \{ $\textvisiblespace, $\square \}$ be the token set for the BAPO-CoT.  The BAPO-CoT will simulate $M$ by writing out the contents of the tape at each step of $M$, along with the current state, which will be written to the left of the tape cell where the tape head is currently positioned. Since only $s(n)$ tape cells are required, the BAPO-CoT will simulate a tape with exactly $s(n)$ cells. So, on input $x_1\dots x_n$, the first state the BAPO-CoT will write out is $q_0 x_1\dots x_n $\textvisiblespace \textvisiblespace \textvisiblespace$\dots$, with total length $c = s(n) + 1$, which we will call the chunk size. Let $\text{chunk}(i) = \lfloor i / c \rfloor$. We use $m$ to denote the current length of the BAPO-CoT's input (with $m= n$ at the first step) and $y = y_1, \dots, y_m$ the current BAPO-CoT input itself (with $y_{1}\dots y_{n} = x$). 

   The prefix oracle $f$ is defined as follows:
   \begin{align*}
       f(y_1\dots y_k) = \begin{cases}
           00&\text{if the last symbol of $y_1\dots y_k$ is some $q \in Q$}\\
           01&\text{if the symbol to the right of the last $q \in Q$ in $y_1\dots y_k$ is 0}\\
           10&\text{if the symbol to the right of the last $q \in Q$ in $y_1\dots y_k$ is 1}\\
           11&\text{otherwise ($y_1\dots y_k$ contains no symbols in $Q$; every $q$ is followed by 0 or 1)}.
       \end{cases}
   \end{align*}
   Thanks to this $f$, the suffix oracle always knows the symbol to the right of the state from the previous chunk.
   The attention function $g$ is defined as follows:
   \begin{align*}
       g(y_{k+1}\dots y_m, k, y_i, i) &= \begin{cases}
           1 & \text{if $i = n - c$}\\
           1 & \text{if $i = n - c - 1$}\\
           1 & \text{if $\text{chunk}(i) = \text{chunk}(m) - 1$ and $y_i \in Q$}\\
           0 & \text{otherwise}
       \end{cases}
   \end{align*}
Thanks to this $g$, the suffix oracle always knows (a) the symbol at the current chunk offset index in the previous chunk, (b) the symbol before the one from (a), and (c) the state in the previous chunk and its chunk offset index (i.e., the tape head position). If any of these positions are in the suffix, they are directly observed by the suffix oracle and if they are in the prefix, then they are contained in the attended set $G$. 

   \begin{figure}[t]
    \centering
\newcommand{\boxscale}{1.1} % Scale factor for all boxes
\newcommand{\yoffset}{1.0} % Vertical distance between initial state and result states

\begin{tikzpicture}[
    cell/.style={minimum width=\boxscale cm, minimum height=0.75cm, draw},
    label/.style={font=\small},
    arrow/.style={-{Stealth}}
]

% Initial tape configuration
\node[cell] (c1) at (0,0) {$y_{j - 2}$};
\node[cell] (c2) at (\boxscale,0) {$y_{j - 1}$};
\node[cell] (c3) at (2*\boxscale,0) {$q$};
\node[cell] (c4) at (3*\boxscale,0) {$\lambda$};
\node[cell] (c5) at (4*\boxscale,0) {$y_{j+2}$};

% Labels for initial tape
\node[label] at ([yshift=-2mm]c1.south) {$j-2$};
\node[label] at ([yshift=-2mm]c2.south) {$j-1$};
\node[label] at ([yshift=-2mm]c3.south) {$j$};
\node[label] at ([yshift=-2mm]c4.south) {$j+1$};
\node[label] at ([yshift=-2mm]c5.south) {$j+2$};

% Left move result (top)
\node[cell] (t1) at (8,\yoffset) {$y_{j - 2}$};
\node[cell] (t2) at (8+\boxscale,\yoffset) {$q'$};
\node[cell] (t3) at (8+2*\boxscale,\yoffset) {$y_{j - 1}$};
\node[cell] (t4) at (8+3*\boxscale,\yoffset) {$\lambda'$};
\node[cell] (t5) at (8+4*\boxscale,\yoffset) {$y_{j + 2}$};

% Right move result (bottom)
\node[cell] (b1) at (8,-\yoffset) {$y_{j - 2}$};
\node[cell] (b2) at (8+\boxscale,-\yoffset) {$y_{j - 1}$};
\node[cell] (b3) at (8+2*\boxscale,-\yoffset) {$\lambda'$};
\node[cell] (b4) at (8+3*\boxscale,-\yoffset) {$q'$};
\node[cell] (b5) at (8+4*\boxscale,-\yoffset) {$y_{j +2}$};

% % Add descriptive text
% \node at (6,\yoffset) {Left move:};
% \node at (6,-\yoffset) {Right move:};

% Connect with arrows
\draw[arrow] (4.7*\boxscale,0) -- (6.6*\boxscale,\yoffset) node[midway, above, sloped] {$D = L$};
\draw[arrow] (4.7*\boxscale,0) -- (6.6*\boxscale,-\yoffset) node[midway, below, sloped] {$D = R$};

\end{tikzpicture}
    \caption{The tape head contents around the previous tape head's chunk offset after a step of the Turing machine $M$. All indices before $j - 1$ and after $j + 1$ are identical in the new chunk. For simplicity, the indices are shown relative to the start of the chunk.}
    \label{fig:tm-step}
\end{figure}
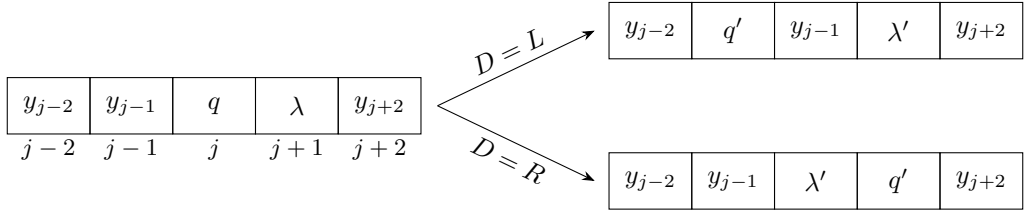

The suffix oracle $h$ performs the following procedure given $f(y_1 \dots y_k), G, k$, and $y_{k+1}\dots y_m$:
\begin{enumerate}
    \item If $\text{chunk}(m) = 0$, return \textvisiblespace
    \item If $\text{chunk}(m) = 1$:
    \begin{enumerate}
        \item If $m  = c + 1$: return $q_0$
        \item Else: return $y_{m - c - 1}$ ($y_{m - c - 1}$ is either in the suffix or in $G$)
    \end{enumerate}
    \item Else: let $i = m \mod c$. Let $j$ be the head position in the previous chunk and let $q$ be the state in the previous chunk. Note that $q$, $j$, $y_{m - c}$, and $y_{m - c - 1}$ are all known to $h$, since they are either included in the suffix or in $G$ ($j$ is computable from the positional encoding of $q$). Moreover, $\lambda = y_{\text{chunk}(m - c) \cdot c+ j+1}$ (the symbol under the tape head in the previous chunk) is either in the suffix or can be inferred given the suffix and $f(y_1\dots y_k)$. The suffix oracle can thus compute the step of the Turing machine $\delta(q, \lambda) = (q', \lambda', D)$, where $q' \in Q$ is the next state, $\lambda'$ is the symbol written to the tape, and $D\in \{L, R\}$ is the direction the tape head moves.

    First, we check to see if $M$ has halted. If $q' = q_\text{accept}$: if $y_m$ is not $1$, return 1, otherwise return $\square$. If $q' = q_\text{reject}$: if $y_m$ is not $0$, return 0, otherwise return $\square$. This ensures we output the answer and then terminate. Otherwise, we proceed to simulate $M$. To output the next symbol given a step of the Turing machine $M$ applied to the previous chunk (see \Cref{fig:tm-step}):
    \begin{enumerate}
        \item If $i = j - 1$: if $D = L$, return $q'$; if $D = R$, return $y_{m-c}$
        \item Else if $i = j$: if $D = L$, return $y_{m - c - 1}$; if $D = R$, return $\lambda'$
        \item Else if $i = j + 1$: if $D = L$, return $\lambda'$; if $D = R$, return $q'$
        \item Else: return $y_{m - c}$
    \end{enumerate}
\end{enumerate}

This procedure results in the updated tape of $M$ being written out symbol-by-symbol into the next chunk. Since $M$ decides $L$, it will eventually halt, at which point the above BAPO-CoT outputs the answer (0 or 1) and then outputs $\square$ to terminate. 
\end{proof}

\section{BAPO generalizations}\label{app:variants}
We consider three generalizations of our model, \emph{score-BAPO}, \emph{multi-layer BAPO}, and \emph{full-attention BAPO}. We show that these variants (even in combination) do not affect the BAPO-hardness of \textsc{Reachability}, \textsc{Majority}, or \textsc{Match}$3_n$. However, they do enable solving the (multi-hop) induction heads task, which appears hard under the original BAPO definition. This suggests multiple degrees of hardness, with our set of three BAPO-hard tasks appearing fundamentally hard even as we make the model align with more details of the transformer architecture.

\subsection{Score-BAPO}

A score-BAPO has an attention function that outputs a real number in $[0, 1]$ rather than only $0$ or $1$. The attended set $G$ then consists of the (up to) $b$ tokens with highest non-zero attention scores (if there are multiple such sets, the score-BAPO must be robust to an arbitrary attended set, as with the original BAPO). This is a strict generalization, as a score attention function can simulate a binary one by outputting scores $0$ or $1$. 

\begin{definition}
An $(a, b)$-score-BAPO is defined by a \emph{prefix oracle} $f:\Sigma^* \rightarrow \{0, 1\}^a$, an \emph{attention function} $g: \Sigma^* \times \mathbb N  \times \Sigma \times \mathbb N\rightarrow [0, 1]$, and a \emph{suffix oracle} $h: \{0, 1\}^a \times \cup_{i = 0}^b (\Sigma \times \mathbb N)^i \times \Sigma^* \times \mathbb N \rightarrow \Sigma$. An $(a, b)$-score-BAPO \emph{solves} a computational problem $p: \Sigma^* \rightarrow \Sigma$ if $h(f(x_1 \dots x_k), G, x_{k+1}\dots x_n, k) = p(x_1\dots x_n)$ for all $k < n$ and all $G \in \operatorname*{arg\,max}_{\substack{S \subseteq \{(x_i, i): i \le k, g_i > 0\} \\ |S| \le b}} \sum_{(x_i, i) \in S} g_i$, where $g_i = g(x_{k+1}\dots x_n,k, x_i, i)$.
\end{definition}

A constant-bandwidth score-BAPO can solve some problems that otherwise appear to be BAPO-$\Sigma$-hard or possibly even BAPO-hard. Two problems that a score-BAPO can solve that seem hard for BAPOs are \textsc{Max} and \textsc{Rightmost} (and their symmetric variants), defined as follows.

\begin{definition}
    \textsc{Max}$: \mathbb Z_m^* \rightarrow \mathbb Z_m$ is the problem of finding the maximum of a list of integers between 0 and $m-1$. \textsc{Min} is defined analogously.
\end{definition}

\begin{definition}
    \textsc{Rightmost}$: \Sigma^* \times \Sigma \rightarrow \mathbb N \cup \{-1\}$ is the problem of finding the index of the rightmost instance of a token in a list, or $-1$ if the item is not in the list. The token to search for is provided in the input after the list. That is, $\textsc{Rightmost}(x_1\dots x_{n-1}, x_{n}) = \max(\{-1\} \cup \{i \in [n-1]: x_i = x_{n}\})$. \textsc{Leftmost} is defined analogously.
\end{definition}

\begin{proposition}
    There are $(0, 1)$-score-BAPOs for \textsc{Max}, \textsc{Min}, \textsc{Leftmost}, and \textsc{Rightmost}.
\end{proposition}
\begin{proof}
    For \textsc{Max}, the attention function $g(x_{k+1}\dots x_n,k, x_i, i) = x_i / m$ ensures $G$ contains the largest element in the prefix (or $G = \emptyset$ if the prefix is all 0s), which $h$ can use in combination with the suffix to identify the largest element of the sequence. The attention function $g(x_{k+1}\dots x_n,k, x_i, i) = (m - x_i) / m$ performs the same function for \textsc{Min}. For \textsc{Rightmost}, the attention function is:
    \begin{align*}
        g(x_{k+1}\dots x_n,k, x_i, i) = \begin{cases}
                    i / k & \text{if $x_i = x_n$}\\
            0 & \text{otherwise}\\
        \end{cases}
    \end{align*}
    This ensures $G$ contains the rightmost occurrence of $x_n$ in the prefix (or $G = \emptyset$ if $x_n$ does not occur in the prefix), which, combined with the suffix, allows $h$ to solve the problem.
    Using $(k - i + 1) / k$ instead of $i / k$ provides a solution to \textsc{Leftmost}.
\end{proof}

\subsection{Multi-layer BAPO}

A $d$-layer BAPO has a constant number of attention functions $g_1, \dots, g_d$ that operate sequentially, with each having access to the attended tokens from the previous layer. The suffix oracle $h$ then solves the problem using the last attended set $G_d$. This allows a $d$-layer BAPO to perform multi-hop tasks as a multi-layer transformer would.  The original BAPO model is just a 1-layer BAPO, and a BAPO with any number of layers can simulate one with fewer layers by having some initial number of attention functions act as no-ops. Note that the total number of attended tokens $h$ has access to in an $(a, b)$ $d$-layer BAPO is $bd$, still a constant when $b = O(1)$ and $d = O(1)$.

\begin{definition}
An $(a, b)$ $d$-layer BAPO is defined by a \emph{prefix oracle} $f:\Sigma^* \rightarrow \{0, 1\}^a$, $d$ \emph{attention functions} $g_i: \Sigma^* \times \mathbb N  \times \Sigma \times \mathbb N \times \cup_{j = 0}^b (\Sigma \times \mathbb N)^j \rightarrow \{0, 1\}$ for $i = 1, \dots, d$, and a \emph{suffix oracle} $h: \{0, 1\}^a \times \cup_{i = 0}^b (\Sigma \times \mathbb N)^i \times \Sigma^* \times \mathbb N \rightarrow \Sigma$. Given an instance $x_1\dots x_n\in \Sigma^n$ of a computational problem $p: \Sigma^* \rightarrow \Sigma$ and a split index $k < n$, an \emph{attended set sequence} $G_1, \dots, G_d$ is some sequence of sets where $G_1 \subseteq \mathbb G_1 = \{(x_i, i): 1 \le i \le k,\,g_1(x_{k+1}\dots x_n,k, x_i, i, \emptyset) = 1\}$  and $G_{j+1} \subseteq \mathbb G_{j+1} = \{(x_i, i): 1 \le i \le k,\,g_{j+1}(x_{k+1}\dots x_n,k, x_i, i, G_j) = 1\}$, with each $|G_{j}| = \min\{b, |\mathbb G_{j}|\}$ for $j = 1,\dots, d-1$.
An $(a, b)$-$d$-layer BAPO \emph{solves} a computational problem $p: \Sigma^* \rightarrow \Sigma$ if 
$h(f(x_1 \dots x_k), G_d, x_{k+1}\dots x_n, k) = p(x_1\dots x_n)$ for all $k < n$ and all attended set sequences $G_1, \dots, G_d$.
\end{definition}

\subsection{Full-attention BAPO}

A \emph{full-attention BAPO} is a minor variant where the attention function $g$ may attend to all tokens in the input, not just those in the prefix. That is, we now use $\mathbb G = \{(x_i, i): 1 \le i \le n,\,g(x_{k+1}\dots x_n,k, x_i, i) = 1\}$ (rather than $1 \le i \le k$), with no other changes. A full-attention BAPO can act just like a standard BAPO by having $g(x_{k+1}\dots x_n,k, x_i, i) = 0$ whenever $i > k$. This modification provides no increase in expressive power, as $h$ has access to all suffix tokens anyway, and is a pure convenience measure for combining with multi-layer BAPO.

\subsection{Combining variants to solve the induction heads task}

These extensions can be combined. For instance, we can define a \emph{$d$-layer full-attention score-BAPO}, where each of $d$ attention functions outputs a score over all input tokens. We show that this BAPO variant can solve the multi-hop induction heads task~\cite{sanford2024transformers}, while still leaving \textsc{Reachability}, \textsc{Majority}, and \textsc{Match}$3_n$ BAPO-hard. (This also implies that each of the three variants alone leaves these problems hard, as the capabilities enabled by any of the variants can be ignored.)  The requirement of a second layer of processing for one-hop induction heads is analogous to known results that the problem is efficiently solvable by two-layer but not one-layer transformers~\cite{bietti2023memory,sanford2024onelayertransformersfailsolve}.

\begin{definition}
    $d$-\textsc{Hop Induction Heads}: $\Sigma^* \rightarrow \Sigma \cup \{\bot\}$ is the problem of iteratively finding the token that follows the rightmost previous occurrence of the last token. Formally, the solution on input $x_1\dots x_n$ is given by $x_{\textsf{hop}_k(x)}$ if $\textsf{hop}_k(x) \ne 0$ else $\bot$, where
    \begin{align*}
        \textsf{hop}_1(x) &= \max(\{0\} \cup \{j \in [n]: x_{j-1} = x_n\})\\
        \textsf{hop}_{i+1}(x) &= \max(\{0\} \cup \{j \in [\textsf{hop}_i(x)]: x_{j-1} = x_{\textsf{hop}_i(x)}\}) \tag{for $i = 1, \dots, k -1$}.
    \end{align*}
\end{definition}

\begin{theorem}
    For any $d \ge 1$, there is a $(0, 1)$-$2d$-layer full-attention score-BAPO for $d$-\textsc{Hop Induction Heads}.
\end{theorem}
\begin{proof}
    The idea behind the construction is that the $2d$ attention functions alternate between attending to the rightmost instance of the current token being searched for (i.e., attending to the rightmost previous instance of the token at index $\textsf{hop}_j(x)-1$) and attending to the token to the right of that rightmost instance (i.e., attending to the index $\textsf{hop}_j(x)$).
    
    More precisely, each $g_j$ is defined as follows:
    \begin{enumerate}
        \item For $j = 1$, the first attention layer finds the rightmost previous occurrence of the last token, using:
        \begin{align*}
            g_1(x_{k+1}\dots x_n,k, x_i, i, \emptyset) = \begin{cases}
                i / n &\text{if $i < n$ and $x_i = x_n$}\\
                0 &\text{otherwise}
            \end{cases}
        \end{align*}
        This ensures $G_1$ contains the rightmost instance of the token $x_n$ (or $\emptyset$ if $x_n$ does not appear earlier in the input). Note that since this is a full-attention BAPO, $G$ contains this rightmost instance even if it appears in the suffix. (How convenient!) Using induction heads notation, $G_1$ contains the token at index $\textsf{hop}_{1}(x_1\dots x_n) - 1$ (or $G_1 = \emptyset$ if $\textsf{hop}_{1}(x_1\dots x_n) = \bot$).
        \item For even $j$, we have the invariant that $G_{j-1}$ (provided to $g_j$) contains the rightmost instance of the current token being searched for (or $\emptyset$ if no instance of the token was found). (When defining $g_j$ for larger odd $j$, we will maintain this invariant.) We then define $g_j$ as follows:
        \begin{align*}
            g_j(x_{k+1}\dots x_n,k, x_i, i, G_{j-1}) = \begin{cases}
                1 &\text{if  $G_{j-1} = \{(x_\ell, \ell)\}$ and $i = \ell +1$}\\
                0 &\text{otherwise}
            \end{cases}
        \end{align*}
        This ensures $G_j$ contains the token appearing to the right of the one identified in the previous layer (or $\emptyset$ if the induction heads chain has been broken). Using the notation of induction heads, $G_j$ contains the token at index $\textsf{hop}_{j / 2}(x_1\dots x_n)$ (or $G_j = \emptyset$ if $\textsf{hop}_{j / 2}(x_1\dots x_n) = \bot$).
        \item For odd $j > 1$, the above definition ensures $G_{j-1}$ contains the token at index $\textsf{hop}_{(j-1) / 2}(x_1\dots x_n)$ (or is $\emptyset$). The next attention function $g_j$ thus needs to look for the rightmost earlier occurrence of that token, which is accomplished by:
        \begin{align*}
            g_j(x_{k+1}\dots x_n,k, x_i, i, \emptyset) = \begin{cases}
                i / n &\text{if  $G_{j-1} = \{(x_\ell, \ell)\}$, $i < \ell$, and $x_i = x_\ell$}\\
                0 &\text{otherwise}
            \end{cases}
        \end{align*}
        This ensures $G_j$ contains the token at index $\textsf{hop}_{j / 2}(x_1\dots x_n) - 1$, or is empty if no such token exists (or $G_{j-1}$ was empty).
        
    \end{enumerate}
    As each layer of attention maintains the needed invariants for the next layer, an inductive argument shows that $G_{2d}$ contains the token at index $\textsf{hop}_{d}(x_1\dots x_n)$ (or is empty if the chain was broken at any point). Thus $h$ can return this token (or $\bot$) and solve $d$-\textsc{Hop Induction Heads}.
\end{proof}

\subsection{Our BAPO-hardness proofs are robust to these variants}
Given these variants and their ability to solve problems that appear difficult for the standard BAPO model, one question is whether the model is too powerful and trivializes our hardness results. However, we show that \textsc{Reachability}, \textsc{Majority}, and \textsc{Match}$3_n$ remain hard for a $d$-layer full-attention score-BAPO with constant $d$. The proof technique we use for BAPO-hardness extends naturally to these variants, so any problem shown to be BAPO-hard using our approach is also hard under these variants.

\begin{theorem}
    For any $d = O(1)$, there is no constant-bandwidth $d$-layer full-attention score-BAPO for  \textsc{Reachability}, \textsc{Majority}, or \textsc{Match}$3_n$.
\end{theorem}
\begin{proof}
The proof structure and constructions remain the same as in \Cref{thm:reachability-lb,thm:majority,thm:match3}; we only need to change the way masking tokens are selected to account for the new structure of the attention functions (and we will need to use more masking tokens as the effective number of tokens attended to is $bd$ rather than $b$, but this is only a constant factor). The idea is that when placing masking tokens in the shared prefixes, we need to iterate through each attention function, masking the attention of each one in turn. In this iterative process, we use the constructed attended token set containing only masked tokens as the input to the next attention function. Additionally, at each step, we select the masking tokens that have the $b$ highest attention scores (rather than an arbitrary set of attended tokens, as we did in the binary attention case). The fact that suffix tokens may be attended to only helps, as this can never help $h$ distinguish between fooling instances (as they share the same suffix).

We provide the updated prefix construction for \textsc{Reachability} as an example (the other two are analogous transformations of the original proofs). Refer to the proof of \Cref{thm:reachability-lb} for full notation. We need a few additional definitions for the proof extension. Let $b^\text{th}(r, s_i t_j, P^*)$ be the $b^\text{th}$ largest attention score that $g_r$ outputs on any edge in $P^*$ (positioned at their canonical indices given by $I(\cdot)$) or on the suffix tokens $s_i$ and $t_j$. If there are fewer than $b$ candidate tokens between $P^*$ and the suffix, then $b^\text{th}(r, s_i t_j, P^*) = 0$. Given a non-target node $u$, let $\textsf{next}(u, P^*)$ be the set of nodes in the layer to the right of $u$ that have in-degree 0 in $P^*$.

\begin{enumerate}
        \item Initialize $P^*$ to have the same set of nodes as $P$. Initialize $S$ to be the set of all $s_i$ and $u_{ij}$ nodes, which will store the current set of nodes that still need an outgoing edge.
        \item For each pair $(i, j) \in [p]\times [p]$:
        \begin{enumerate}
            \item Initialize $G_0 = \emptyset$
            \item For $r = 1, \dots d$: 
            \begin{enumerate}
                \item While $\max_{u \in S, v \in \textsf{next}(u, P^*) } g_r(s_i t_j, m, (u, v), G_{r-1}, I(u)) > b^\text{th}(r, s_i t_j, P^*)$: add the maximizing edge $(u, v)$ to $P^*$ and remove $u$ from $S$.\\
                \emph{Ensure the top $b$ still-feasible edges that have highest attention scores under $g_r$ are in $P^*$; thus, no edge we add later will have a higher attention score for $g_r$ on suffix $s_i t_j$.}
                \item Let $G_r$ be a set of $b$ tokens in $P^*$ (and $s_it_j$, since we are using full attention) with highest score under $g_r$ on suffix $s_i t_j$. If there are fewer than $b$ such tokens with non-zero score, take all tokens with non-zero score.\\
                \emph{This $G_r$ is one possible attended set at this step, regardless of how the prefix is completed (or, without attention score ties, the unique attended set), since we have ensured the $b$ highest-scoring feasible edges are added to $P^*$. For adding masking tokens in the next attention layer, we will assume this is the attended set.}
            \end{enumerate}
            
        \end{enumerate}
        \item Let $E^*$ be the set of edges in $P^*$ at this point in the algorithm. These will be shared among all prefixes to saturate attention. To complete the paths in $P^*$ arbitrarily, connect each node with out-degree 0 (that is not a $t$ node) to the first node with in-degree 0 in the next layer.
    \end{enumerate}
The total number of edges we add to $P^*$ is at most $bdp^2$ (compared to $bp^2$ in the original construction). Thus, as long as $bd = o(m^{1-2/c})$ (which is certainly true with $b = O(1)$ and $d = O(1)$), we have enough edges in the graph to mask attention for every layer and for every suffix. Given these masking edges, we have found some sequence of attended sets $G_1, \dots, G_d$ (those used in the construction) for every suffix that only contain tokens shared among all prefixes, so the rest of the proof remains the same. On a pair of fooling instances as described in the original proof, the suffix oracle sees the same final attended set $G_d$ regardless of the prefix, the same output of $f$, and the same suffix---thus making a mistake.

The same approach (picking the masking tokens with the $b$ highest values of $g_r$ for each $r = 1, \dots, d$) allows us to generalize the BAPO-hardness proofs of \textsc{Majority} and \textsc{Match3}$_n$ with only an additional constant factor $d$ of masking tokens. Since this only requires a constant factor of additional masking tokens, any BAPO-hardness proof using the same structure can be extended to $d$-layer full-attention score-BAPO.
\end{proof}

\section{Experiments}

\subsection{Implementation Details}
\label{app:experimental_details}
All models were forced to output a pre-set JSON schema, shown in the tables below. The model versions and API settings were as follows:
\begin{center}
    \begin{tabularx}{\textwidth}{llll<{\ttfamily}>{\ttfamily}X}
    \toprule
    \textbf{Family}&\textbf{Model}     & \textbf{Version Specifier}          & \textbf{Temperature}  & \normalfont \textbf{Other Params}\\ \midrule
     GPT&4o           & \texttt{gpt-4o-2024-11-20}           & 0  &\\
     &4o mini      & \texttt{gpt-4o-mini-2024-07-18}      & 0  &\\
     &o3      & \texttt{o3-2025-04-16}      & n/a & \{effort: medium\}\\
      \midrule
     Claude\phantom{asdf}&3.5 Sonnet& \texttt{claude-3-5-sonnet-20241022}  & 0  &\\
     &3.5 Haiku & \texttt{claude-3-5-haiku-20241022}   & 0  &\\
      \midrule
     Gemini&1.5 Pro   & \texttt{gemini-1.5-pro-002}          & 0  &\\
     &1.5 Flash & \texttt{gemini-1.5-flash-002}        & 0  &\\
     &2.5 Flash & \texttt{gemini-2.5-flash-preview-04-17}        & 0  &\\
    \bottomrule
    \end{tabularx}
\end{center}

The experiments took $\le 1$ day and $\sim$\$400 of API credits to run (\$93 of which were for o3 alone), with preliminary experiments taking an additional $\sim$\$150 of API credits.

\subsubsection{Base Experiments}
Below are the instructions used for each task, illustrated using one example instance:
\begin{center}
\label{tab:example_prompts}
    \begin{tabularx}{\textwidth}{>{\scshape }l<{\scshape}>{\ttfamily}X<{\ttfamily}>{\ttfamily}p{4cm}}
\toprule
\normalfont \textbf{Experiment} & \normalfont \textbf{Example Prompt} & \normalfont \textbf{Example Output} \\
\midrule
Index & Output the element at the specified index (starting at 0) of the list: List: [\{"index": 0, "value": 117\}, \{"index": 1, "value": 30\}, \{"index": 2, "value": 169\}, \{"index": 3, "value": 113\}, \{"index": 4, "value": 52\}, \{"index": 5, "value": 168\}]
Index: 0 & \{"element\_value": 117\} \\
Equality & Output true if the left and right lists are identical: Left: [1, 1, 0, 1, 0, 1, 1, 0, 0, 1]
Right: [0, 1, 0, 1, 0, 1, 1, 0, 1, 1] & \{"equals": false\} \\
Match2 & You are given a list of numbers and a number x. Determine whether list[i] + x = 0 for some i. 
List: [-300, 62, 144, -490, 469]
x: -144 & \{"found\_i": true\} \\
Reachability & You are given an directed graph with 6 nodes as a list of edges (i, j).
An edge (i,j) means that node i points to j.
The edges in G are:
[[5, 4], [2, 0], [4, 1], [3, 2]]
Is there a path from node 5 to node 1? & \{"path\_exists": true\} \\
Majority & Output true if the majority of elements of this list are 1, else false: [0, 1, 0, 0, 1, 1, 1] & \{"majority\_is\_1s": true\} \\
Match3 & You are given a list of numbers and a number x. Determine whether list[i] + list[j] + x = 0 for some i, j. 
List: [508, 567, -178, 382, -240]
x: -890 & \{"found\_i\_and\_j": true\} \\
Disjointness & These left and right lists represent sets using binary indicators for each item. Output true if these sets are disjoint and false if they have a non-empty intersection. That is, output true if and only if there is no index where both lists contain 1. 
Left: [0, 1, 1, 1, 0, 1]
Right: [0, 0, 0, 0, 1, 0] & \{"is\_disjoint": true\} \\
IntDisjointness & Output true if left and right lists are disjoint (share no elements) and false otherwise: Left: [73, 290, 133, 342, 142, 279]
Right: [236, 16, 306, 144, 279, 242] & \{"is\_disjoint": false\} \\
Unique & Output the element in the list that occurs only once: [4, 4, 4, 4, 6] & \{"unique": 6\} \\
SetDiff & You are given two sets of numbers A and B. Output the element in set A that is not in set B. If there is no such element, output -1. Set A: [3, 5]
Set B: [5, 2, 3] & \{"element": -1\} \\
\bottomrule
\end{tabularx}
\end{center}

To generate the problem instances, we used the procedures below. We use a grid of $n \in \{6, 50, 100, 200\}$ but resulting list lengths might deviate slightly if a problem requires odd numbers. We generated an equal number and positive and negative instances where applicable.
\begin{itemize}
  \item \textsc{Index}: For each run, sample a permutation $\pi$ of $\{0, \dots, 199\}$; set $x = \pi_{1:n-1}$, choose $i \sim \mathrm{Unif}(0, n{-}1)$.
  
  \item \textsc{Equality}: Sample $x \in \{0,1\}^n$ uniformly. Let $y = x$ for positives. For negatives, choose $i \ne j$ with $x_i \ne x_j$, swap $y_i, y_j$.
  
  \item \textsc{Match2}: Generate permutation $\pi \sim \mathrm{Perm}(0,  999)$. Set $x = \pi_{1:n}$. Then, to generate
    \begin{itemize}
      \item Positives: inject $-x_n$ into $x_{1:n-1}$ at a random position 
      \item Negatives: ensure $-x_n \notin x_{1:n}$ by replacing it with a random element from $\pi$.
    \end{itemize}

  \item \textsc{Reachability}: Construct $k=2$ node-disjoint paths of length $\ell = n/k{-}1$. Let graph $G$ be union of paths, map node names to integers via random bijection $\sigma$. Choose $(s,t)$ such that:
    \begin{itemize}
      \item Positives: $s,t$ on same path $\Rightarrow$ path exists
      \item Negatives: $s,t$ on different paths $\Rightarrow$ no path
    \end{itemize}

  \item \textsc{Majority}: Generate $x \in \{0,1\}^{n+1}$ such that majority bit occurs $\lceil (n+1)/2 \rceil$ times. Shuffle $x$.
  
  \item \textsc{Match3}: Sample $\pi \sim \mathrm{Perm}(-250,  1000)$. Reject if $x=\pi_{1:n}$ has a triplet that satisfies $x_i + x_j + x_n = 0$ (negative). For positives, generate  negative instance and then select $i \ne j$, set $x_n = -(x_i + x_j)$. 
  
  \item \textsc{Disjointness}: Sample $(x_i, y_i)$ uniformly from  $\{(0, 0), (0,1), (1,0)\}$.
    \begin{itemize}
      \item Positives: already disjoint
      \item Negatives: pick $j$, set $a_j = b_j = 1$
    \end{itemize}
  
  \item \textsc{IntDisjointness}: This is slightly different from \textsc{Disjointness} to avoid shortcuts from varying set sizes. Generate permutation $\pi \sim \mathrm{Perm}(0,  399)$. Set $x=\pi_{1:n/2}$ and $y=\pi_{n/2:n}$. For negatives, set $x_j = y_j$ for random index $j$.
  
  \item \textsc{Unique}: Generate permutation $\pi \sim \mathrm{Perm}(0,  n-1)$. Then, place one unique element $u$ and fill remaining with elements of frequency $\ge 2$ by drawing from $\pi_{1:n/4}$.
  
  \item \textsc{SetDiff}: Generate permutation $\pi \sim \mathrm{Perm}(0,  n-1)$. Sample unique element $u$. Split $\pi$ into parts and recombine:
    \[
    A = S \cup \{u\},\quad B_{pos} = S \cup V \cup \{u\}, \quad B_{neg} = S \cup V \cup \{v\}
    \]
    where $S$ = shared, $V$ = unique-to-B, $v$ = randomly sampled replacement element from $S$. $|S| = |V| + 1$.
\end{itemize}

\subsubsection{CoT Experiments}\label{app:cot-experiments}
For the chain of thought variants, we pre-pended the following instructions:

\verb+Think step by step on the CoT, but stay under 250 words.+

\noindent The output JSON object then contained a \verb+cot+ field before the actual answer, e.g.,

\verb+{"cot": "To determine if there exist indices i and j ...", "found_i_and_j": true}+

Additionally, we ran experiments with two reasoning models that also supported structured outputs, OpenAI's o3 and Google's Gemini Flash 2.5. Model version and parameters can be found above. \Cref{fig:token-costs} shows the number of reasoning tokens the models used.

\begin{figure}[t]
    \centering
    \includegraphics[width=\linewidth]{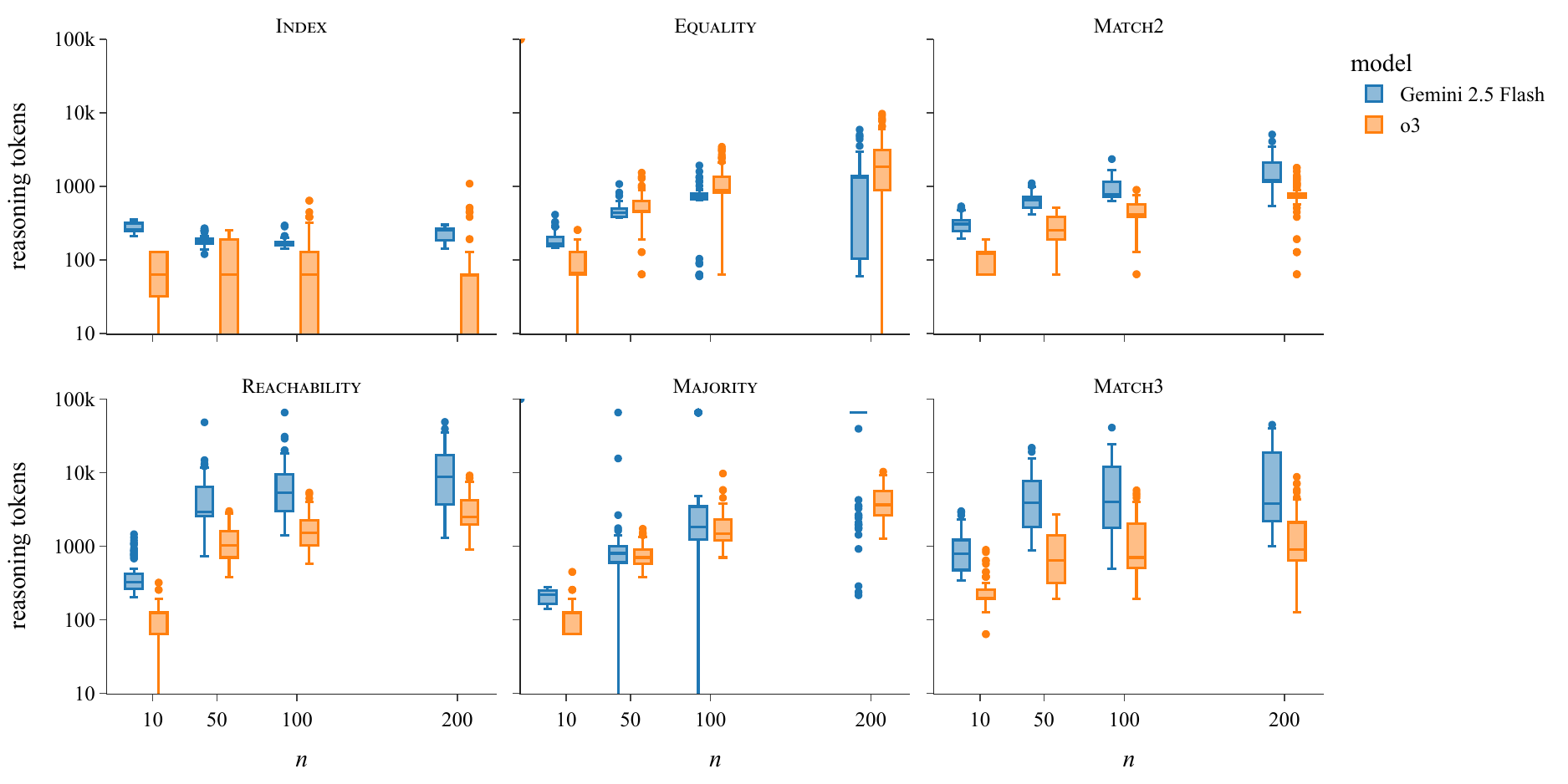}
    \caption{Number of reasoning tokens used by o3 and Gemini 2.5 Flash for each problem. The models perform well on BAPO-hard tasks in line with our BAPO-CoT result, but they use thousands or even tens of thousands of CoT tokens to do so.}
    \label{fig:token-costs}
\end{figure}

\subsubsection{Real-World Experiments}
\label{app:real_world_details}

\begin{center}
    \begin{tabularx}{\textwidth}{>{\scshape }l<{\scshape}>{\ttfamily}X<{\ttfamily}>{\ttfamily}p{4cm}}
\toprule
\normalfont \textbf{Experiment} & \normalfont \textbf{Example Prompt} & \normalfont \textbf{Example Output} \\
\midrule
VariableTracking & In the Python code below, is x7 == "a" at the end of execution?\newline
```python\newline
x6 = "a"\newline
x4 = "b"\newline
x0 = x6\newline
x2 = x4\newline
x3 = x0\newline
x8 = x2\newline
x9 = x3\newline
x7 = x3\newline
x1 = x8\newline
x5 = x8\newline
``` & \{"is\_equal": true\} \\
MajorityReview & Output true if the majority of the following reviews is positive, else false. \newline [
  \{
    "id": 0,
    "review": "I loved the grand entrance hall with its two impressive chandeliers and a player grand piano. I took advantage of the exercise room in the basement and loved getting a coffee from the bar to take up to my room. The cleaning staff were particularly pleasant, greeting me every time we happened to pass. The lift [...] & \{"majority\_is\_positive": true\} \\
FindNegativeReview & Return the id of the most negative review. \newline [
  \{
    "id": 0,
    "review": "Lovely hotel and great location. I can recommend this hotel, the location is great for all tourist attractions and the airport. Very friendly staff and worth every penny."
  \},
  \{
    "id": 1,
    "review": "We stayed at the Boston Park Plaza Hotel in April and couldn't have been happier. The hotel is centrally located near the Public Gardens, Theater District and Boston Commons which [...] & \{"most\_negative": 8\} \\
\bottomrule
\end{tabularx}
\label{tab:real_world_prompts}
\end{center}

\paragraph{Dataset Processing Details.} We used hotel reviews from the \textsc{Space} dataset\footnote{Available at \url{https://github.com/stangelid/qt} under an MIT License.}~\cite{angelidis2021extractive}. Reviews with a rating of 5 get a positive label, reviews with a rating of 1 get a negative label to ensure a clear separation between classes. We annotate the resulting reviews with GPT4.1-nano to check for consistency and only keep the ones where the original label agrees with the LLM annotation. Finally, we only keep hotels with at least 101 positive and 53 negative labels to make sure we have a large enough set to subsample from.

\begin{itemize}
  \item \textsc{VariableTracking}: Follow the same process as for \textsc{Reachability} to construct a graph with $k=2$ paths. Let these two paths be $p$ and $p'$.
  \begin{itemize}
      \item  Then choose $i, j$ with $j < i$ and insert a cross path edge from $p_j \rightarrow p'_i$, erasing $p'_{i-1} \rightarrow p'_i$. 
      \item Map nodes to variable names $x_0, x_1, \dots$ via random bijection and initialize all nodes with no incoming edges to a letter from the alphabet.
      \item Choose $s$ and $t$ as with \textsc{Reachability}
      \item Finally, generate assignment statements via sampling a random topological ordering of the overall graph.
  \end{itemize}
  \item \textsc{MajorityReview}:  For each label $y \in \{\mathsf{True}, \mathsf{False}\}$, sample $n/2 + s$ reviews of label $y$, and $n/2$ of the opposite label, where $s = 3$ is a slack variable to help reduce any remaining noise in the reviews. Shuffle the combined list of reviews.
  \item \textsc{FindNegativeReview}: We iterate through hotels in round-robin fashion. Sample $n - 1$ positive and $1$ negative review to generate data for one task instance. Shuffle those in random order.
  
\end{itemize}

\subsection{Additional Results}
\label{app:additional_results}
\begin{figure}[htb]
    \centering
    \includegraphics[width=\linewidth]{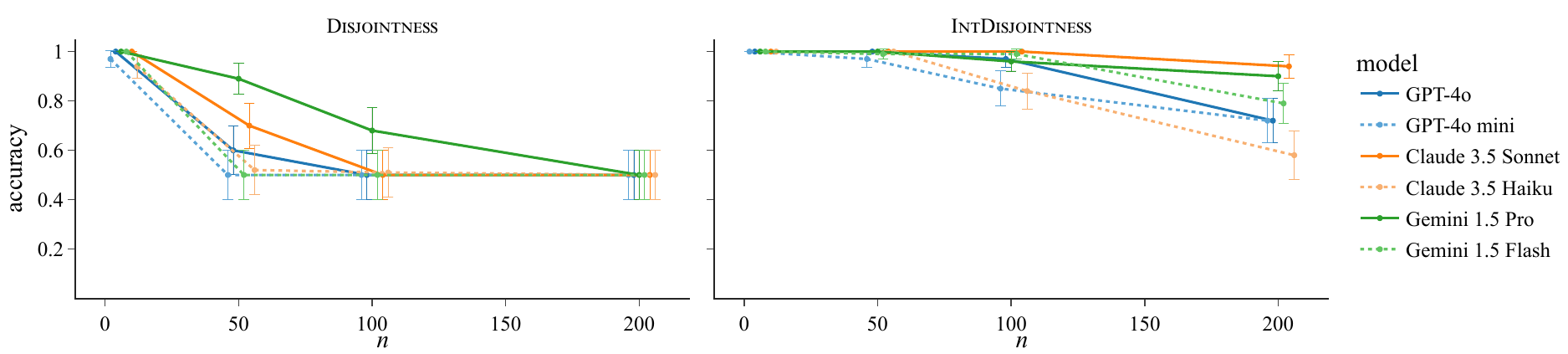}
    \caption{Additional results on BAPO-easy problems. \textsc{IntDisjointness} is a variant of \textsc{Disjointness} where sets are represented by the indices of elements they contain instead of binary vectors to show that positional encodings rather than BAPO-hardness are likely to be responsible for the poor performance on this task.} 
    \label{fig:additional_results}
\end{figure}

\begin{figure}[htb]
    \centering
    \includegraphics[width=\linewidth]{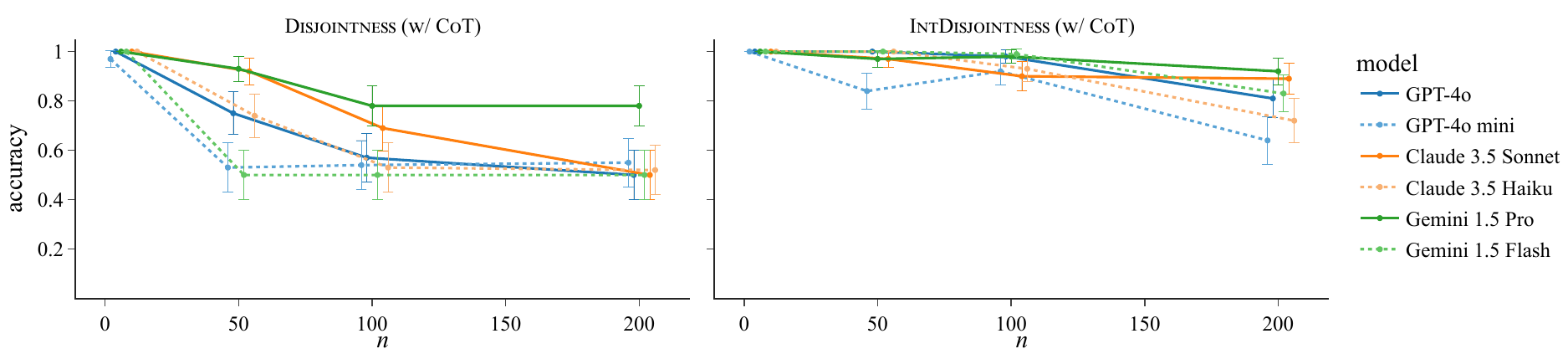}
    \caption{Adding CoT to BAPO-easy problems provides a boost to larger models on \textsc{Disjointness}, especially Gemini 1.5 Pro, while performance remains high on the other problems.} 
    \label{fig:additional_results_cot}
\end{figure}

\begin{figure}[htb]
    \centering
    \includegraphics[width=\linewidth]{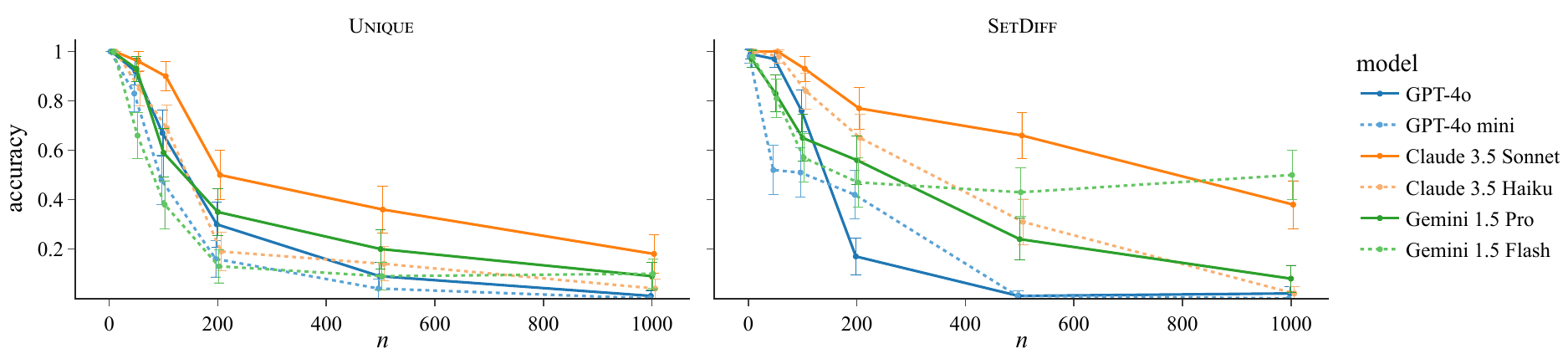}
    \caption{The hardness of \textsc{Unique} and \textsc{SetDiff} scales with the vocabulary size which we try to increase via input length $n$ here. Drops still occur, but appear to be less pronounced, perhaps because pre-trained LLMs have fixed token representations and scaling input length is only a proxy for increased vocabulary size.} 
    \label{fig:bapo_vocab_hard}
\end{figure}

\begin{figure}[htb]
    \centering
    \includegraphics[width=\linewidth]{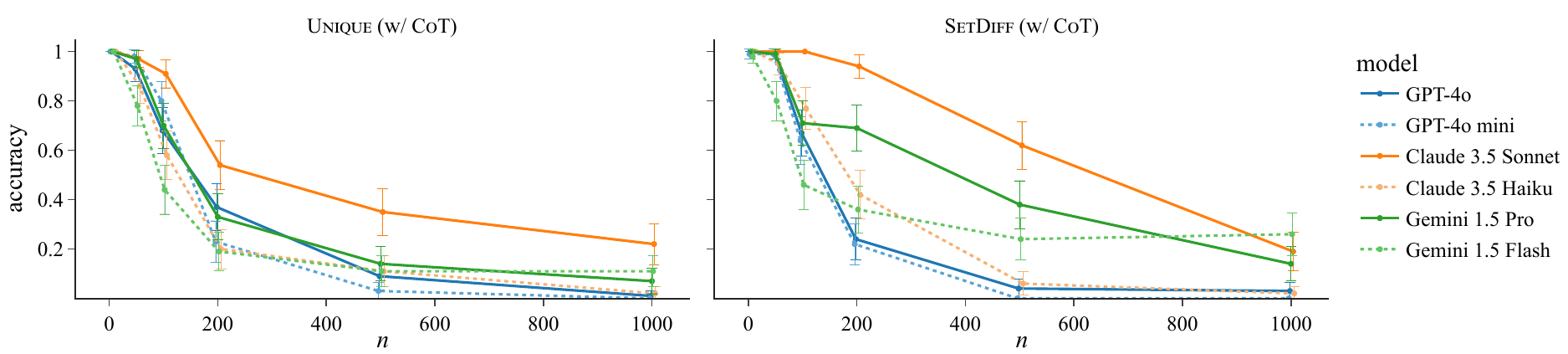}
    \caption{Adding CoT to BAPO-$\Sigma$-hard problems does not result in substantial changes. We conjecture that this is again due to the static nature of the underlying vocabulary representations.} 
    \label{fig:bapo_vocab_hard_cot} 
\end{figure}

%%%%%%%%%%%%%%%%%%%%%%%%%%%%%%%%%%%%%%%%%%%%%%%%%%%%%%%%%%%%

\clearpage
\section*{NeurIPS Paper Checklist}

\begin{enumerate}

\item {\bf Claims}
    \item[] Question: Do the main claims made in the abstract and introduction accurately reflect the paper's contributions and scope?
    \item[] Answer: \answerYes{} % Replace by \answerYes{}, \answerNo{}, or \answerNA{}.
    \item[] Justification: We clearly state in the abstract and introduction what our hypotheses are, what our theoretical results state, and what our experiments indicate. 
    \item[] Guidelines:
    \begin{itemize}
        \item The answer NA means that the abstract and introduction do not include the claims made in the paper.
        \item The abstract and/or introduction should clearly state the claims made, including the contributions made in the paper and important assumptions and limitations. A No or NA answer to this question will not be perceived well by the reviewers. 
        \item The claims made should match theoretical and experimental results, and reflect how much the results can be expected to generalize to other settings. 
        \item It is fine to include aspirational goals as motivation as long as it is clear that these goals are not attained by the paper. 
    \end{itemize}

\item {\bf Limitations}
    \item[] Question: Does the paper discuss the limitations of the work performed by the authors?
    \item[] Answer: \answerYes{} % Replace by \answerYes{}, \answerNo{}, or \answerNA{}.
    \item[] Justification: \Cref{sec:discussion} discusses the limitations of our work. 
    \item[] Guidelines:
    \begin{itemize}
        \item The answer NA means that the paper has no limitation while the answer No means that the paper has limitations, but those are not discussed in the paper. 
        \item The authors are encouraged to create a separate "Limitations" section in their paper.
        \item The paper should point out any strong assumptions and how robust the results are to violations of these assumptions (e.g., independence assumptions, noiseless settings, model well-specification, asymptotic approximations only holding locally). The authors should reflect on how these assumptions might be violated in practice and what the implications would be.
        \item The authors should reflect on the scope of the claims made, e.g., if the approach was only tested on a few datasets or with a few runs. In general, empirical results often depend on implicit assumptions, which should be articulated.
        \item The authors should reflect on the factors that influence the performance of the approach. For example, a facial recognition algorithm may perform poorly when image resolution is low or images are taken in low lighting. Or a speech-to-text system might not be used reliably to provide closed captions for online lectures because it fails to handle technical jargon.
        \item The authors should discuss the computational efficiency of the proposed algorithms and how they scale with dataset size.
        \item If applicable, the authors should discuss possible limitations of their approach to address problems of privacy and fairness.
        \item While the authors might fear that complete honesty about limitations might be used by reviewers as grounds for rejection, a worse outcome might be that reviewers discover limitations that aren't acknowledged in the paper. The authors should use their best judgment and recognize that individual actions in favor of transparency play an important role in developing norms that preserve the integrity of the community. Reviewers will be specifically instructed to not penalize honesty concerning limitations.
    \end{itemize}

\item {\bf Theory assumptions and proofs}
    \item[] Question: For each theoretical result, does the paper provide the full set of assumptions and a complete (and correct) proof?
    \item[] Answer: \answerYes{} % Replace by \answerYes{}, \answerNo{}, or \answerNA{}.
    \item[] Justification: All results are stated precisely and have proofs in \Cref{app:proofs}.
    \item[] Guidelines:
    \begin{itemize}
        \item The answer NA means that the paper does not include theoretical results. 
        \item All the theorems, formulas, and proofs in the paper should be numbered and cross-referenced.
        \item All assumptions should be clearly stated or referenced in the statement of any theorems.
        \item The proofs can either appear in the main paper or the supplemental material, but if they appear in the supplemental material, the authors are encouraged to provide a short proof sketch to provide intuition. 
        \item Inversely, any informal proof provided in the core of the paper should be complemented by formal proofs provided in appendix or supplemental material.
        \item Theorems and Lemmas that the proof relies upon should be properly referenced. 
    \end{itemize}

    \item {\bf Experimental result reproducibility}
    \item[] Question: Does the paper fully disclose all the information needed to reproduce the main experimental results of the paper to the extent that it affects the main claims and/or conclusions of the paper (regardless of whether the code and data are provided or not)?
    \item[] Answer: \answerYes{} % Replace by \answerYes{}, \answerNo{}, or \answerNA{}.
    \item[] Justification: \Cref{app:experimental_details} includes all experimental details, including exact models, prompts, and data generation procedures. 
    \item[] Guidelines:
    \begin{itemize}
        \item The answer NA means that the paper does not include experiments.
        \item If the paper includes experiments, a No answer to this question will not be perceived well by the reviewers: Making the paper reproducible is important, regardless of whether the code and data are provided or not.
        \item If the contribution is a dataset and/or model, the authors should describe the steps taken to make their results reproducible or verifiable. 
        \item Depending on the contribution, reproducibility can be accomplished in various ways. For example, if the contribution is a novel architecture, describing the architecture fully might suffice, or if the contribution is a specific model and empirical evaluation, it may be necessary to either make it possible for others to replicate the model with the same dataset, or provide access to the model. In general. releasing code and data is often one good way to accomplish this, but reproducibility can also be provided via detailed instructions for how to replicate the results, access to a hosted model (e.g., in the case of a large language model), releasing of a model checkpoint, or other means that are appropriate to the research performed.
        \item While NeurIPS does not require releasing code, the conference does require all submissions to provide some reasonable avenue for reproducibility, which may depend on the nature of the contribution. For example
        \begin{enumerate}
            \item If the contribution is primarily a new algorithm, the paper should make it clear how to reproduce that algorithm.
            \item If the contribution is primarily a new model architecture, the paper should describe the architecture clearly and fully.
            \item If the contribution is a new model (e.g., a large language model), then there should either be a way to access this model for reproducing the results or a way to reproduce the model (e.g., with an open-source dataset or instructions for how to construct the dataset).
            \item We recognize that reproducibility may be tricky in some cases, in which case authors are welcome to describe the particular way they provide for reproducibility. In the case of closed-source models, it may be that access to the model is limited in some way (e.g., to registered users), but it should be possible for other researchers to have some path to reproducing or verifying the results.
        \end{enumerate}
    \end{itemize}

\item {\bf Open access to data and code}
    \item[] Question: Does the paper provide open access to the data and code, with sufficient instructions to faithfully reproduce the main experimental results, as described in supplemental material?
    \item[] Answer: \answerYes{} % Replace by \answerYes{}, \answerNo{}, or \answerNA{}.
    \item[] Justification: All of our code is available at \url{https://github.com/microsoft/bapo}. The code includes instructions for downloading public data and for running the experiments.
    \item[] Guidelines:
    \begin{itemize}
        \item The answer NA means that paper does not include experiments requiring code.
        \item Please see the NeurIPS code and data submission guidelines (\url{https://nips.cc/public/guides/CodeSubmissionPolicy}) for more details.
        \item While we encourage the release of code and data, we understand that this might not be possible, so “No” is an acceptable answer. Papers cannot be rejected simply for not including code, unless this is central to the contribution (e.g., for a new open-source benchmark).
        \item The instructions should contain the exact command and environment needed to run to reproduce the results. See the NeurIPS code and data submission guidelines (\url{https://nips.cc/public/guides/CodeSubmissionPolicy}) for more details.
        \item The authors should provide instructions on data access and preparation, including how to access the raw data, preprocessed data, intermediate data, and generated data, etc.
        \item The authors should provide scripts to reproduce all experimental results for the new proposed method and baselines. If only a subset of experiments are reproducible, they should state which ones are omitted from the script and why.
        \item At submission time, to preserve anonymity, the authors should release anonymized versions (if applicable).
        \item Providing as much information as possible in supplemental material (appended to the paper) is recommended, but including URLs to data and code is permitted.
    \end{itemize}

\item {\bf Experimental setting/details}
    \item[] Question: Does the paper specify all the training and test details (e.g., data splits, hyperparameters, how they were chosen, type of optimizer, etc.) necessary to understand the results?
    \item[] Answer: \answerYes{} % Replace by \answerYes{}, \answerNo{}, or \answerNA{}.
    \item[] Justification: All details are available in \Cref{app:experimental_details}.
    \item[] Guidelines:
    \begin{itemize}
        \item The answer NA means that the paper does not include experiments.
        \item The experimental setting should be presented in the core of the paper to a level of detail that is necessary to appreciate the results and make sense of them.
        \item The full details can be provided either with the code, in appendix, or as supplemental material.
    \end{itemize}

\item {\bf Experiment statistical significance}
    \item[] Question: Does the paper report error bars suitably and correctly defined or other appropriate information about the statistical significance of the experiments?
    \item[] Answer: \answerYes{} % Replace by \answerYes{}, \answerNo{}, or \answerNA{}.
    \item[] Justification: As described in~\Cref{sec:experiments}, all of our plots show 95\% $t$-test confidence intervals across $n = 100$ runs. 
    \item[] Guidelines:
    \begin{itemize}
        \item The answer NA means that the paper does not include experiments.
        \item The authors should answer "Yes" if the results are accompanied by error bars, confidence intervals, or statistical significance tests, at least for the experiments that support the main claims of the paper.
        \item The factors of variability that the error bars are capturing should be clearly stated (for example, train/test split, initialization, random drawing of some parameter, or overall run with given experimental conditions).
        \item The method for calculating the error bars should be explained (closed form formula, call to a library function, bootstrap, etc.)
        \item The assumptions made should be given (e.g., Normally distributed errors).
        \item It should be clear whether the error bar is the standard deviation or the standard error of the mean.
        \item It is OK to report 1-sigma error bars, but one should state it. The authors should preferably report a 2-sigma error bar than state that they have a 96\% CI, if the hypothesis of Normality of errors is not verified.
        \item For asymmetric distributions, the authors should be careful not to show in tables or figures symmetric error bars that would yield results that are out of range (e.g. negative error rates).
        \item If error bars are reported in tables or plots, The authors should explain in the text how they were calculated and reference the corresponding figures or tables in the text.
    \end{itemize}

\item {\bf Experiments compute resources}
    \item[] Question: For each experiment, does the paper provide sufficient information on the computer resources (type of compute workers, memory, time of execution) needed to reproduce the experiments?
    \item[] Answer: \answerYes{} % Replace by \answerYes{}, \answerNo{}, or \answerNA{}.
    \item[] Justification: \Cref{app:experimental_details} states that the experiments are reproducible in under a day with $\sim$\$400 of API credits, with preliminary experiments taking an additional $\sim$\$150 of API credits.
    \item[] Guidelines:
    \begin{itemize}
        \item The answer NA means that the paper does not include experiments.
        \item The paper should indicate the type of compute workers CPU or GPU, internal cluster, or cloud provider, including relevant memory and storage.
        \item The paper should provide the amount of compute required for each of the individual experimental runs as well as estimate the total compute. 
        \item The paper should disclose whether the full research project required more compute than the experiments reported in the paper (e.g., preliminary or failed experiments that didn't make it into the paper). 
    \end{itemize}
    
\item {\bf Code of ethics}
    \item[] Question: Does the research conducted in the paper conform, in every respect, with the NeurIPS Code of Ethics \url{https://neurips.cc/public/EthicsGuidelines}?
    \item[] Answer: \answerYes{} % Replace by \answerYes{}, \answerNo{}, or \answerNA{}.
    \item[] Justification: We see no potential for societal harm or other ethical concerns in this research.
    \item[] Guidelines:
    \begin{itemize}
        \item The answer NA means that the authors have not reviewed the NeurIPS Code of Ethics.
        \item If the authors answer No, they should explain the special circumstances that require a deviation from the Code of Ethics.
        \item The authors should make sure to preserve anonymity (e.g., if there is a special consideration due to laws or regulations in their jurisdiction).
    \end{itemize}

\item {\bf Broader impacts}
    \item[] Question: Does the paper discuss both potential positive societal impacts and negative societal impacts of the work performed?
    \item[] Answer: \answerNA{} % Replace by \answerYes{}, \answerNo{}, or \answerNA{}.
    \item[] Justification: As our work proposes and analyzes a mathematical model of LLM capability, we see no broader societal impact. 
    \item[] Guidelines:
    \begin{itemize}
        \item The answer NA means that there is no societal impact of the work performed.
        \item If the authors answer NA or No, they should explain why their work has no societal impact or why the paper does not address societal impact.
        \item Examples of negative societal impacts include potential malicious or unintended uses (e.g., disinformation, generating fake profiles, surveillance), fairness considerations (e.g., deployment of technologies that could make decisions that unfairly impact specific groups), privacy considerations, and security considerations.
        \item The conference expects that many papers will be foundational research and not tied to particular applications, let alone deployments. However, if there is a direct path to any negative applications, the authors should point it out. For example, it is legitimate to point out that an improvement in the quality of generative models could be used to generate deepfakes for disinformation. On the other hand, it is not needed to point out that a generic algorithm for optimizing neural networks could enable people to train models that generate Deepfakes faster.
        \item The authors should consider possible harms that could arise when the technology is being used as intended and functioning correctly, harms that could arise when the technology is being used as intended but gives incorrect results, and harms following from (intentional or unintentional) misuse of the technology.
        \item If there are negative societal impacts, the authors could also discuss possible mitigation strategies (e.g., gated release of models, providing defenses in addition to attacks, mechanisms for monitoring misuse, mechanisms to monitor how a system learns from feedback over time, improving the efficiency and accessibility of ML).
    \end{itemize}
    
\item {\bf Safeguards}
    \item[] Question: Does the paper describe safeguards that have been put in place for responsible release of data or models that have a high risk for misuse (e.g., pretrained language models, image generators, or scraped datasets)?
    \item[] Answer: \answerNA{} % Replace by \answerYes{}, \answerNo{}, or \answerNA{}.
    \item[] Justification: Our paper does not provide data or models (in the sense of an LLM; we do introduce a mathematical model).
    \item[] Guidelines:
    \begin{itemize}
        \item The answer NA means that the paper poses no such risks.
        \item Released models that have a high risk for misuse or dual-use should be released with necessary safeguards to allow for controlled use of the model, for example by requiring that users adhere to usage guidelines or restrictions to access the model or implementing safety filters. 
        \item Datasets that have been scraped from the Internet could pose safety risks. The authors should describe how they avoided releasing unsafe images.
        \item We recognize that providing effective safeguards is challenging, and many papers do not require this, but we encourage authors to take this into account and make a best faith effort.
    \end{itemize}

\item {\bf Licenses for existing assets}
    \item[] Question: Are the creators or original owners of assets (e.g., code, data, models), used in the paper, properly credited and are the license and terms of use explicitly mentioned and properly respected?
    \item[] Answer: \answerYes{} % Replace by \answerYes{}, \answerNo{}, or \answerNA{}.
    \item[] Justification: We cite authors of the \textsc{Space} dataset, link to the data source, and explicitly mention the MIT License under which it is released in \Cref{app:real_world_details}. 
    \item[] Guidelines:
    \begin{itemize}
        \item The answer NA means that the paper does not use existing assets.
        \item The authors should cite the original paper that produced the code package or dataset.
        \item The authors should state which version of the asset is used and, if possible, include a URL.
        \item The name of the license (e.g., CC-BY 4.0) should be included for each asset.
        \item For scraped data from a particular source (e.g., website), the copyright and terms of service of that source should be provided.
        \item If assets are released, the license, copyright information, and terms of use in the package should be provided. For popular datasets, \url{paperswithcode.com/datasets} has curated licenses for some datasets. Their licensing guide can help determine the license of a dataset.
        \item For existing datasets that are re-packaged, both the original license and the license of the derived asset (if it has changed) should be provided.
        \item If this information is not available online, the authors are encouraged to reach out to the asset's creators.
    \end{itemize}

\item {\bf New assets}
    \item[] Question: Are new assets introduced in the paper well documented and is the documentation provided alongside the assets?
    \item[] Answer: \answerYes{} % Replace by \answerYes{}, \answerNo{}, or \answerNA{}.
    \item[] Justification: Our code and accompanying documentation are available at~\url{https://github.com/microsoft/bapo}.
    \item[] Guidelines:
    \begin{itemize}
        \item The answer NA means that the paper does not release new assets.
        \item Researchers should communicate the details of the dataset/code/model as part of their submissions via structured templates. This includes details about training, license, limitations, etc. 
        \item The paper should discuss whether and how consent was obtained from people whose asset is used.
        \item At submission time, remember to anonymize your assets (if applicable). You can either create an anonymized URL or include an anonymized zip file.
    \end{itemize}

\item {\bf Crowdsourcing and research with human subjects}
    \item[] Question: For crowdsourcing experiments and research with human subjects, does the paper include the full text of instructions given to participants and screenshots, if applicable, as well as details about compensation (if any)? 
    \item[] Answer: \answerNA{} % Replace by \answerYes{}, \answerNo{}, or \answerNA{}.
    \item[] Justification: The paper does not concern research with human subjects.
    \item[] Guidelines:
    \begin{itemize}
        \item The answer NA means that the paper does not involve crowdsourcing nor research with human subjects.
        \item Including this information in the supplemental material is fine, but if the main contribution of the paper involves human subjects, then as much detail as possible should be included in the main paper. 
        \item According to the NeurIPS Code of Ethics, workers involved in data collection, curation, or other labor should be paid at least the minimum wage in the country of the data collector. 
    \end{itemize}

\item {\bf Institutional review board (IRB) approvals or equivalent for research with human subjects}
    \item[] Question: Does the paper describe potential risks incurred by study participants, whether such risks were disclosed to the subjects, and whether Institutional Review Board (IRB) approvals (or an equivalent approval/review based on the requirements of your country or institution) were obtained?
    \item[] Answer: \answerNA{} % Replace by \answerYes{}, \answerNo{}, or \answerNA{}.
    \item[] Justification: The paper does not concern research with human subjects.
    \item[] Guidelines:
    \begin{itemize}
        \item The answer NA means that the paper does not involve crowdsourcing nor research with human subjects.
        \item Depending on the country in which research is conducted, IRB approval (or equivalent) may be required for any human subjects research. If you obtained IRB approval, you should clearly state this in the paper. 
        \item We recognize that the procedures for this may vary significantly between institutions and locations, and we expect authors to adhere to the NeurIPS Code of Ethics and the guidelines for their institution. 
        \item For initial submissions, do not include any information that would break anonymity (if applicable), such as the institution conducting the review.
    \end{itemize}

\item {\bf Declaration of LLM usage}
    \item[] Question: Does the paper describe the usage of LLMs if it is an important, original, or non-standard component of the core methods in this research? Note that if the LLM is used only for writing, editing, or formatting purposes and does not impact the core methodology, scientific rigorousness, or originality of the research, declaration is not required.
    %this research? 
    \item[] Answer: \answerNA{} % Replace by \answerYes{}, \answerNo{}, or \answerNA{}.
    \item[] Justification: LLMs are our object of study, but they are not a part of our research methodology.
    \item[] Guidelines:
    \begin{itemize}
        \item The answer NA means that the core method development in this research does not involve LLMs as any important, original, or non-standard components.
        \item Please refer to our LLM policy (\url{https://neurips.cc/Conferences/2025/LLM}) for what should or should not be described.
    \end{itemize}

\end{enumerate}

\end{document}